\documentclass{article}

\usepackage[preprint]{neurips_2026}

\usepackage[utf8]{inputenc} 
\usepackage[T1]{fontenc}    
\usepackage{hyperref}       
\usepackage{url}            
\usepackage{booktabs}       
\usepackage{amsfonts}       
\usepackage{nicefrac}       
\usepackage{microtype}      
\usepackage{xcolor}         

\usepackage{graphicx}
\usepackage{subcaption}
\usepackage{multirow}
\usepackage{enumitem}

\usepackage{siunitx}
\usepackage{sidecap}        
\sidecaptionvpos{figure}{t}
\usepackage{titletoc}       

\usepackage{algorithm}      
\usepackage{algorithmic}    
\usepackage{amsthm,amsfonts,amsmath,amssymb}     
\usepackage{mathtools} 
\usepackage[capitalize,noabbrev]{cleveref}

\theoremstyle{plain}
\newtheorem{theorem}{Theorem}[section]
\newtheorem{proposition}[theorem]{Proposition}
\newtheorem{lemma}[theorem]{Lemma}
\newtheorem{corollary}[theorem]{Corollary}
\theoremstyle{definition}
\newtheorem{definition}[theorem]{Definition}

\newtheorem{remark}[theorem]{Remark}

\newcommand{\R}{\mathbb{R}}
\newcommand{\Hhyperbn}{\mathcal{H}_{\mathrm{hyper}}^{\mathrm{bn}}}
\newcommand{\Hhyperlin}{\mathcal{H}_{\mathrm{hyper}}^{\mathrm{lin}}}

\usepackage[textsize=tiny]{todonotes}

\title{Dynamics-Aligned Shared Hypernetworks for Contextual RL under Discontinuous Shifts}

\author{%
 Jan~Benad$^{1}$\thanks{Equal contribution} \quad
  Pradeep~Kr.~Banerjee$^{1}$\footnotemark[1] \quad
  Frank~R\"oder$^1$ \\
  \textbf{Nihat~Ay}$^{1\,2}$ \quad
  \textbf{Martin~V.~Butz}$^3$ \quad
  \textbf{Manfred~Eppe}$^1$ \\
  $^1$Institute for Data Science Foundations, TUHH, Germany\\
  $^2$Santa Fe Institute, USA\\
  $^3$Neuro-Cognitive Modeling Group, University of T{\"u}bingen, Germany
}

\begin{document}

\maketitle

\begin{abstract}
Zero-shot generalization in contextual reinforcement learning remains a core challenge, particularly when the context is latent and must be inferred from data. A canonical failure mode arises when latent context discontinuously changes how actions affect the environment, requiring incompatible control responses across contexts. We propose DMA*-SH, a framework where a single hypernetwork, trained solely via dynamics prediction, generates a small set of adapter weights shared across the dynamics model, policy, and action-value function. This shared modulation imparts an inductive bias matched to discontinuous context-to-dynamics shifts, while input/output normalization and random input masking stabilize context inference, promoting directionally concentrated representations. We provide theoretical support via expressivity separation results for hypernetwork modulation, and a variance decomposition with policy-gradient variance bounds that formalize how within-mode compression improves learning under non-overlapping contexts. For evaluation, we introduce the Actuator Inversion Benchmark (AIB), a suite of environments designed to isolate challenging context-to-dynamics interactions, including actuator inversion, actuator permutations, and weakly non-overlapping continuous dynamics. On AIB’s held-out tasks, DMA*-SH achieves zero-shot generalization, outperforming domain randomization by 58.1\% and surpassing a standard context-aware baseline by 11.5\% on average.

\end{abstract}


\section{Introduction}
\label{sec:intro}

Imagine switching to a colleague's laptop and discovering the trackpad scrolls in the opposite direction. The screen moves \textit{against} your fingers instead of with them. Despite years of motor expertise, you are suddenly disoriented. Every instinctual swipe produces the wrong outcome. Trying harder does not help; you must override muscle memory. This scenario captures a core challenge in contextual reinforcement learning (RL): zero-shot adaptation can fail when context induces discontinuous changes in action effects.

The trackpad scenario is an instance of what we call \emph{actuator inversion}, and serves as a minimal example of a broader problem: latent context can change the action-effect map discontinuously. The same difficulty arises in swapped control channels, coordinate remappings, gain changes, or sim-to-real actuator mismatches. In such settings, there may be no valid interpolation between incompatible control laws; an ``averaged'' behavior can be incoherent. Smooth inductive biases, such as context concatenation or Gaussian structure in latent context, can then blur distinct modes into an averaged representation. Standard domain randomization (DR) can similarly yield policies that compromise across regimes and degrade performance in both. These failures reflect structural mismatches between architectural inductive biases and the problem's inherent discontinuity. Zero-shot adaptation to such context shifts requires representing and executing multiplicative switching reliably. In practice, this is difficult when context is inferred from short, noisy windows, since small inference errors can trigger the wrong mode and yield unstable behavior.

Such discontinuous shifts require a specific inductive primitive: \emph{multiplicative} context-dependent modulation of network parameters, which can represent functional opposites, for example by inducing approximately sign-reversed or permuted adaptations across modes. Hypernetworks \citep{ha2017hypernetworks,jayakumar2020multiplicative} provide a natural mechanism for this modulation. Building on this principle, we propose DMA*-SH (Dynamics Model-Aligned, Shared Hypernetwork), a framework for context-inferred RL where a single hypernetwork, trained solely via forward dynamics prediction, generates a small set of adapter weights shared across the dynamics model, policy, and Q-function. This design couples control to dynamics-consistent modulation and supports discontinuous, multiplicative regime switches. Carefully chosen normalizations and input masking further encourage directional concentration of context information and are associated with stable representations that compress irrelevant continuous variations while preserving task-critical discontinuous structure.

This paper makes the following contributions:
\begin{itemize}[leftmargin=*]
\item \textbf{Actuator Inversion Benchmark (AIB).}
We introduce AIB, a suite of contextual RL environments designed to isolate challenging context-to-dynamics interactions, including actuator inversion, actuator permutations, and weakly non-overlapping continuous dynamics, while still encompassing standard physical parameter variations.

\item \textbf{DMA*-SH Architecture}.
We propose a shared-hypernetwork framework for context-inferred RL that provides multiplicative modulation for discontinuous context-to-dynamics shifts and couples adaptation to dynamics-consistent conditioning through shared adapters across modules.

\item \textbf{Theoretical Analysis.} We provide (i) an expressivity separation result for hypernetwork modulation (Theorems~\ref{thm:hyper_expressiveness}, \ref{thm:operator-family} and \ref{thm:bottleneck-separation}) formalizing the multiplicative advantage of DMA*-SH for discontinuous contexts; (ii) a variance decomposition (Theorem~\ref{thm:var_decomp_SU}) that isolates distinct sources of embedding variance under actuator inversion; and (iii) a policy-gradient variance bound (Theorem~\ref{thm:pg_var_bound}) linking within-mode compression to learning stability, motivating a structural information bottleneck view.

\item \textbf{Empirical validation.}
On AIB’s held-out, DMA*-SH achieves zero-shot generalization, outperforming domain randomization by 58.1\%, and context-aware baselines Concat (concatenation-based and known context) by 11.5\%, and DA (with separate policy/value hypernetworks and known context) by 4.6\% on average. Ablations confirm that normalization, masking, and hypernetwork sharing each contribute to performance. Code is available at: \url{https://github.com/dma-sh/dmash}.
\end{itemize}

\section{Background}
\label{sec:background}

\textbf{Contextual reinforcement learning.} We formalize the problem using the \textit{Contextual Markov Decision Process} (CMDP) framework \citep{hallak2015contextual,benjamins2023contextualize}. A CMDP is defined by the tuple $(\mathcal{C}, \mathcal{S}, \mathcal{A}, \{P^c\}, \{r^c\}, \gamma)$, where $\mathcal{C}$ is the context space, $\mathcal{S}$ and $\mathcal{A}$ are the state and action spaces, $P^c(s'|s,a)$ specifies the transition probability from state $s$ to $s'$ under action $a$ in context $c\in \mathcal{C}$, $r^c\colon \mathcal{S} \times \mathcal{A} \to \mathbb{R}$ is the reward function, and $\gamma \in (0,1)$ is the discount factor. Each context $c$ defines a distinct MDP with shared $\mathcal{S}$ and $\mathcal{A}$ but possibly differing dynamics $P^c$ and reward $r^c$. The context is fixed within an episode. Following prior work \citep{beukman2023dynamics,benjamins2023contextualize,prasanna2024dreaming,roder2025dynamics}, we focus on variations in transition dynamics only, keeping the reward fixed: $r^c = r$ for all $c \in \mathcal{C}$.

\textbf{Zero-shot generalization.} To evaluate generalization, we define three context sets \citep{kirk2023survey}: $\mathcal{C}_{\text{train}}$ for training, $\mathcal{C}_{\text{eval-in}}$ for in-distribution (interpolation) evaluation, and $\mathcal{C}_{\text{eval-out}}$ for out-of-distribution (extrapolation) evaluation. Each set consists of context instances sampled from environment-specific ranges; sampled instances are pairwise disjoint across sets. The agent learns a policy $\pi_\theta$ maximizing expected return over training contexts: $\frac{1}{|\mathcal{C}_{\text{train}}|} \sum_{c \in \mathcal{C}_{\text{train}}} \mathbb{E}_{\pi_\theta}\!\left[\sum_{t=0}^\infty \gamma^t r(s_t, a_t)\right]$, where $s_{t+1} \sim P^c(\cdot | s_t, a_t)$. During zero-shot evaluation, the agent receives no gradient updates.

\section{Related Work}
\label{sec:related}
Contextual RL provides a natural framework for studying zero-shot generalization across varying environments \citep{hallak2015contextual,modi2018markov,beck2023survey,kirk2023survey}. Prior work broadly assumes either observable context variables \citep{chen2018hardware,seyed2019smile,ball2021augmented,eghbal2021context,sodhani2021multi,mu2022domino,benjamins2023contextualize,prasanna2024dreaming} or infers context implicitly from interaction history \citep{chen2018hardware,xu2019densephysnet,lee2020context,seo2020trajectory,xian2021hyperdynamics,sodhani2022block,melo2022transformers,evans2022context,ndir2024inferring,roder2025dynamics}. 
We follow the latter and learn context representations via self-supervised alignment with a dynamics model, without assuming explicit context access. Although recurrent agents can acquire latent contextual representations \citep{grigsby2024amago,grigsby2024bamago,luo2024efficient,hafner2019learning,hafner2025mastering}, these representations are typically not grounded in environment dynamics. Closely related, \citet{beukman2023dynamics} employ hypernetworks \citep{ha2017hypernetworks} conditioned on explicitly provided context variables rather than inferring context from experience; in contrast, we train a single dynamics-aligned hypernetwork shared across policy and value functions, enabling unified context inference for zero-shot generalization.
An extended discussion of contextual RL and related approaches is provided in Appendix~\ref{sec:related_extended}.

\section{Context Encoding and Utilization} \label{sec:DMA-schemes}

We present a framework for learning a \textbf{d}ynamic \textbf{m}odel-\textbf{a}ligned (DMA) context representation. We first introduce \textbf{DMA*}, which incorporates key enhancements to vanilla DMA that refine and stabilize this latent representation. We then present our shared-hypernetwork approach for incorporating latent context information. We refer to this method as \textbf{DMA*-SH}, as it extends DMA* with a \textbf{s}hared \textbf{h}ypernetwork that jointly informs the dynamics model, policy, and action-value function.

\subsection{Context Inference by Dynamic Model-Aligned Representation Learning}
\label{sec:dma}
We denote by $\tau^c_t$ a sliding window of the past $K$ transitions from the same context $c$, each given as a tuple $(s_t, a_t, \delta s_{t+1})$, where $\delta s_{t+1} = s_{t+1} - s_t$ is the state difference. The sequence $\tau^c_t$ is passed through a \emph{context encoder} $g_\phi(\tau^c_t)$ to produce a context representation $z_t\in \mathbb{R}^{d_z}$. The context encoder is trained jointly with a forward dynamics (FD) model $f_\theta$ that predicts the next state difference $\delta \hat{s}_{t+1}$ given the current state $s_t$, action $a_t$, and inferred context $z_t$. The objective is a reconstruction loss between predicted and true next state differences:
\begin{equation}
\label{eq:dma}
    L_{\phi,\theta} = \left\lVert \delta \hat{s}_{t+1} - \delta s_{t+1} \right\rVert_2^2 .
\end{equation}
This defines the vanilla \textbf{DMA} context-encoding objective. Next, we describe two key modifications that improve the quality and robustness of the learned context representations: random input masking and input/output normalization. We refer to this enhanced context encoder as \textbf{DMA*}; see Figure~\ref{fig:encoder} (Appendix~\ref{sec:hyperparameters}) for the full encoder pipeline. Ablations for our design choices appear in Appendix~\ref{sec:ablations}.

\textbf{Input masking.}
Random masking of input features has been shown to improve representation learning across vision, language, and decision-making domains \citep{devlin2019bert,liu2022masked,he2022masked}. 
We use masking purely as input corruption, rather than optimizing a masked-prediction objective. During training, we apply random masking independently at each timestep within the $K$-step window: for each tuple $(s_t, a_t, \delta s_{t+1})$, we independently zero out the state, action, and next-state-difference vectors with a fixed masking probability. Masking reduces reliance on brittle feature co-adaptations and discourages spurious correlations that do not generalize across contexts.

\textbf{Input normalization.} 
After masking, the concatenations of $(s_{t},a_{t},\delta s_{t+1})$ from $\tau _{t}^{c}$ are processed by a linear layer and normalized via AvgL1Norm \citep{fujimoto2023sale}, defined as \(\text{AvgL1Norm}(x)=\frac{N x}{\sum _{i}|x_{i}|}\). Unlike BatchNorm \citep{ioffe2015batch}, which relies on running statistics and can degrade under small-batch online RL, AvgL1Norm is a per-sample, statistic-free operator suited to sliding windows. It prevents monotonic growth in representation space while preserving relative feature scales \citep{gelada2019deepmdp}, yielding consistent embeddings across $\mathcal{C}_{\text{train}}$ and $\mathcal{C}_{\text{eval-out}}$.

\textbf{Output normalization.}
The normalized and masked sequence is processed by an LSTM, and the final hidden state is projected to a compact context embedding \(z_{t}\in \mathbb{R}^{d_{z}}\) with \(d_{z}=8\). We normalize \(z_{t}\) using SimNorm \citep{lavoie2023simplicial, hansen2024tdmpc2}, which projects \(z_{t}\) into \(L=2\) distinct \(V=4\) dimensional simplices via a group-wise softmax. By constraining the representation to a product of simplices, SimNorm stabilizes online RL by bounding the representation scale and promoting sparsity without relying on batch statistics, preventing representation collapse and improving sample efficiency \citep{obando2025simplicial}.

\begin{figure*}[ht]
\centering
\begin{subfigure}[b]{.4\textwidth}
    \centering
    \includegraphics[height=2.25cm]{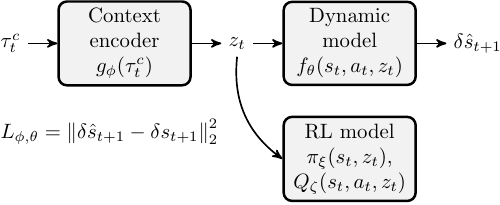}
    \caption{Vanilla DMA.}
    \label{fig:architectureA}
\end{subfigure}\hfill
\begin{subfigure}[b]{.58\textwidth}
    \centering
    \includegraphics[height=2.25cm]{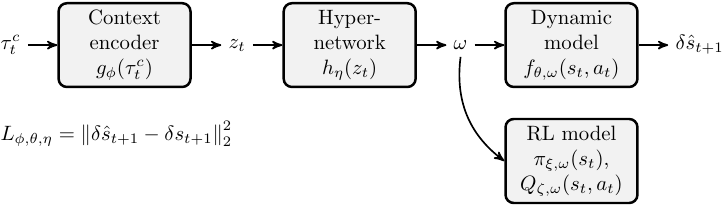}
    \caption{DMA*-SH.}
    \label{fig:architectureB}
\end{subfigure}
\caption{
(a) In vanilla \textbf{DMA}, the inferred context $z_t$ is concatenated to the RL inputs. (b) In \textbf{DMA*-SH}, a hypernetwork $h_\eta$ conditioned on $z_t$ generates adapter weights $\omega$ that are used by the forward dynamics model and the RL networks. The context encoder and hypernetwork are trained via the reconstruction objective $L_{\phi,\theta,\eta}$ in \eqref{eq:hnopt}, while during RL updates gradients through $\omega=h_\eta(z_t)$ are stopped so that the policy and critic losses do not backpropagate to $\eta$ (or to $z_t$) through the shared adapter pathway $z_t \to \omega \to (\pi, Q)$. 
}
\label{fig:architecture}
\end{figure*}

\subsection{Context Utilization by a Shared Dynamic Model-Aligned Hypernetwork (DMA*-SH)}\label{sec:DMA-SH}

In the vanilla DMA setup, the policy and Q-function receive the concatenation of the state $s_t$ and the inferred context $z_t$ as input (Figure~\ref{fig:architectureA}). In contrast, we incorporate $z_t$ using a hypernetwork \citep{ha2017hypernetworks}, which is a meta- or second-order neural network \citep{pollack1990recursive,sugita2011simultaneously,beukman2023dynamics} that generates weights for a target network in an end-to-end differentiable manner. In our approach, the hypernetwork generates weights for only a subset of the main network. We refer to these second order parametrized parts as \emph{adapters}.

As described in Section~\ref{sec:dma}, the context representation $z_t$ is first inferred from {past transitions in} $\tau_t^c$ via dynamic model-aligned representation learning. A hypernetwork $h_\eta$ is then conditioned on $z_t$ to produce weights $\omega$ for the adapters in the dynamic model $f_{\theta,\omega}$, whose parameters are therefore split into generated weights $\omega$ and remaining base weights $\theta$. 
The parameters $\phi$, $\theta$, and $\eta$ for the context encoder, dynamics model, and hypernetwork are updated jointly using the reconstruction loss:
\begin{equation}\label{eq:hnopt}
    L_{\phi,\theta,\eta} = \left\lVert \delta \hat{s}_{t+1} - \delta s_{t+1} \right\rVert_2^2.
\end{equation}
Finally, without further modification, the generated adapter weights $\omega=h_\eta(z_t)$ are reused by the policy and Q-function (with base parameters $\xi$ and $\zeta$ and adapter weights $\omega$) via the adapter pathway $z_t \to \omega \to \{\pi_{\xi,\omega},\, Q_{\zeta,\omega}\}$ (Figure~\ref{fig:architectureB}). 
During RL updates, we detach $\omega$ in the actor and critic losses to prevent reward gradients from modifying the shared hypernetwork and the adapter mapping $z_t \mapsto \omega$; consequently, these gradients also do not reshape the context embedding $z_t$ through the adapter pathway. Thus, this mapping (and the induced context representation it relies on) is trained exclusively through the reconstruction loss~\eqref{eq:hnopt}, embedding dynamics-aligned features into $\omega$. This constraint acts as a structural prior, requiring the actor and critic to process context through adapters shaped by the dynamics. This dynamics-grounded factorization separates mode identification from mode-conditioned control, unlike recurrent or Transformer agents that entangle both in an RL-trained hidden state (Appendices~\ref{app:gradcon} and~\ref{sec:TxrecurDMASH}). For detailed pseudocode, see Algorithms~\ref{alg:dma} and~\ref{alg:dmash} in Appendix~\ref{sec:algorithms}.

\subsection{Expressive Advantage of DMA*-SH via Multiplicative Adapters}
\label{sec:RefHyper_expressivity}

A key architectural advantage of DMA*-SH over vanilla DMA 
arises from using hypernetwork-conditioned \emph{multiplicative adapters} rather than simple concatenation. In vanilla DMA, context enters the actor and critic through the input channel, via concatenated vectors such as $[s_t; z_t]$ and $[s_t; a_t; z_t]$. In this case, any context-dependent transformation of internal features must be synthesized implicitly by the downstream network from additive access to $z_t$.
In contrast, DMA*-SH uses a hypernetwork $h_\eta$ to generate adapter parameters $\omega = h_\eta(z_t)$ that modulate small bottleneck modules injected into the policy, action-value, and dynamics models (Figure~\ref{fig:adapter}). A single adapter with a residual (skip) connection is inserted after the feature trunk and before the output head in each network.
Let $x_t$ denote the trunk features. The adapted features are:
\begin{equation}\label{eq:hypadapter}
    \tilde{x}_t = x_t + g_{\text{adapter}}(x_t;\, \omega),
\end{equation}
where $g_{\text{adapter}}$ is a bottleneck network whose weights $\omega$ are generated by the hypernetwork. This design provides context-dependent modulation of features before they are mapped to action means, action-value estimates, or predicted state differences.

The modulation is multiplicative in the following sense. Inside the adapter, features are transformed by weight matrices that depend on context through $\omega=h_\eta(z_t)$. Concretely, the computation contains terms of the form $W(z_t)\,x_t$, which induces explicit multiplicative interactions between feature coordinates and context coordinates. This form of multiplicative modulation~\citep{jayakumar2020multiplicative} can exactly represent certain functions outside the concatenation hypothesis class.

Theorems~\ref{thm:hyper_expressiveness}, \ref{thm:operator-family}, and \ref{thm:bottleneck-separation} in Appendix~\ref{app:expressiveness} formalize this separation by showing that hypernetwork-conditioned adapters can represent context-dependent multiplicative transformations outside the hypothesis class of concatenation-based ReLU policies. In non-overlapping context shifts, this capacity supports efficient representation of discontinuous changes in action effects, whereas concatenation-based methods must realize such structure only indirectly (Remark~\ref{rem:expressiveness}).

\section{The Actuator Inversion Benchmark (AIB)}
\label{sec:environments}

We introduce the Actuator Inversion Benchmark (AIB) as a diagnostic suite for contextual RL with discontinuous context-to-dynamics interactions. The name reflects the minimal motivating case: actuator inversion, where the same motor command produces opposite effects under a latent mode. Beyond sign flips, AIB includes actuator permutations, where control channels are swapped, and continuous weakly non-overlapping systems such as ODE/ODE-k, where the required action direction changes across regions of the continuous context space. AIB therefore isolates when zero-shot contextual RL requires reliable mode identification and mode-conditioned computation, while still encompassing standard physical parameter variations that primarily demand smooth interpolation.
Each AIB environment has two context dimensions and is classified as either:
\begin{itemize}[leftmargin=*, topsep=2pt, itemsep=1pt]
    \item \textbf{Overlapping:} A single context-unaware policy can perform well across contexts, as with continuous physical parameters such as mass or gravity.
    \item \textbf{Non-overlapping:} Every context-unaware policy without memory incurs nontrivial regret in at least one context, requiring context conditioning.
\end{itemize}
Formal definitions appear in Appendix~\ref{app:overnonover} (Definition~\ref{def:OverlapNonoverlap}).
Table~\ref{tab:envs_summary} summarizes all AIB environments, their context variables, and overlap type. Appendix~\ref{app:env_details} provides task descriptions and classification rationales, and Table~\ref{tab:envs} specifies context supports and return bounds for score normalization. Unless noted otherwise, each environment samples a two-dimensional context once per episode and holds it fixed throughout; context instances are pairwise disjoint across $\mathcal{C}_{\text{train}}$, $\mathcal{C}_{\text{eval-in}}$, and $\mathcal{C}_{\text{eval-out}}$.

Unlike CARL~\citep{benjamins2023contextualize}, which focuses on continuous and categorical context variations, or Meta-World~\citep{yu2020meta}, which varies goals and tasks but not action effects, AIB includes discontinuous changes in action effects, such as sign inversions and channel permutations, yielding mutually incompatible control laws. See Appendix~\ref{sec:Compbench}.

\addtolength{\tabcolsep}{-0.25em}
\begin{table}[t]
\centering
\caption{\textbf{Actuator Inversion Benchmark (AIB).} Each environment has two context dimensions (except ODE-$k$, which uses $k$ dimensions). \textit{Non-overlapping} environments include either a binary actuator inversion factor $c\in\{\pm 1\}$ that flips action effects, or an actuator permutation factor $q \in \{0,1\}$ that swaps action dimensions, both typically yielding mutually incompatible control laws. \textit{Overlapping} environments vary only continuous physical parameters. \emph{Weakly non-overlapping} denotes continuous contexts with empirically low policy overlap. 
$^\dagger$ODE/ODE-$k$ adapted from prior work; details and attribution in Appendix~\ref{app:env_details}.
}
\label{tab:envs_summary}
\small
\begin{tabular}{l l p{6.2cm} l}
\toprule
\textbf{Environment} & \textbf{Source} & \textbf{Context variables} & \textbf{Type} \\
\midrule
DI & Custom & mass; actuator inversion ($\pm1$) & Non-overlap \\
DI-Friction & Custom & mass; friction & Overlap \\
DI-Perm & Custom & mass; actuator permutation $q \in \{0,1\}$ & Non-overlap \\
ODE & Custom$^\dagger$ & ODE coeffs $(c_1,c_2)$ & Weakly non-overlap \\
ODE-$k$ & Custom$^\dagger$ & polynomial ODE coeffs $(c_1,\ldots,c_k)$ & Weakly non-overlap \\
Cartpole & DMC & pole length; actuator inversion ($\pm1$) & Non-overlap \\
Cheetah & DMC & leg length; actuator inversion ($\pm1$) & Non-overlap \\
Reacher (E/H) & DMC & arm length; actuator inversion ($\pm1$) & Non-overlap \\
Reacher (E/H)-Perm & DMC & arm length; actuator permutation $q \in \{0,1\}$ & Non-overlap \\
BallInCup & DMC & tendon length; gravity & Overlap \\
Walker & DMC & actuator strength; gravity & Overlap \\
WalkerGym & Gymnasium & actuator strength; gravity & Overlap \\
HopperGym & Gymnasium & actuator strength; gravity & Overlap \\
\bottomrule
\end{tabular}
\end{table}

\section{Results}
\subsection{Metrics}
\label{sec:metrics}

We adopt a standard evaluation protocol for zero-shot generalization in contextual RL \citep{kirk2023survey,beukman2023dynamics,benjamins2023contextualize}. Specifically, we sample $n_c = 20$ contexts from the environment-specific context ranges listed in Table~\ref{tab:envs} to create the sets $\mathcal{C}_{\text{train}}$, $\mathcal{C}_{\text{eval-in}}$, and $\mathcal{C}_{\text{eval-out}}$, respectively. The agent is trained on $\mathcal{C}_{\text{train}}$. For evaluation, we measure the cumulative episodic return of the trained agent across $n_e = 10$ rollouts per context, and then average within each context set. This yields three averaged episodic returns (AER) \citep{beukman2023dynamics}, one per set. Following \citet{agarwal2021deep}, we report the interquartile mean (IQM) with empirical confidence intervals, after min--max scaling by environment-specific return bounds (Table~\ref{tab:envs}; Appendix~\ref{app:env_details}). Unless stated otherwise, results are averaged over $n_s = 10$ independent random seeds.

\subsection{Baselines}
\label{sec:baselines}

For our methods DMA* and DMA*-SH, as well as for all baselines except Amago, we use Soft Actor-Critic (SAC) \citep{haarnoja2018soft} as the underlying RL algorithm. To ensure comparability, we use SAC with standard hyperparameters and avoid additional tuning. Hyperparameters and implementation details are in Appendix~\ref{sec:hyperparameters}. All approaches are trained under the same procedure: the agent is trained in parallel across the $n_c = 20$ contexts in $\mathcal{C}_{\text{train}}$.

\textbf{Concat} ({\textit{Context-Aware}}).
This baseline concatenates explicit context with the state as input to the policy and Q-function, a standard approach when context variables are available \citep{ball2021augmented,eghbal2021context}.

\textbf{Decision Adapter (DA)} ({\textit{Context-Aware}}).
\citet{beukman2023dynamics} propose a stronger context-aware baseline that uses hypernetworks to adapt policy and Q-function parameters based on the context, improving over other context-aware approaches such as FLAP \citep{peng2021linear} and cGate \citep{benjamins2023contextualize}.

\textbf{Domain Randomization (DR)} ({\textit{Context-Unaware}}).
This baseline ignores explicit context and relies on domain randomization \citep{tobin2017domain} across multiple contexts.

\textbf{Amago} ({\textit{Context-Unaware}}).
Recurrent agents can accumulate latent information over time, enabling in-context adaptation. Amago \citep{grigsby2024amago} is a general-purpose in-context meta-RL algorithm. We use the improved Amago-2 variant \citep{grigsby2024bamago} with a GRU trajectory encoder (Figure~\ref{fig:iqm-amago}). In Appendix~\ref{sec:TxrecurDMASH}, we evaluate a Transformer variant to test whether history-based sequence modeling can replace DMA*-SH's dynamics-grounded shared operator map.

\textbf{Dynamic Model Alignment (DMA)} ({\textit{Context-Inferred}}).
Prior methods such as DALI \citep{roder2025dynamics}, IIDA \citep{evans2022context}, and CaDM \citep{lee2020context} infer context from recent experience via dynamic model alignment. The inferred latent representation is then passed to the policy and Q-function. As DMA* extends this paradigm, we include vanilla DMA as a baseline.

\textbf{DMA-Pearl} ({\textit{Context-Inferred}}).
Pearl \citep{rakelly2019efficient} is a meta-RL algorithm that infers context with a probabilistic encoder trained via Q-function gradients. While \citet{rakelly2019efficient} evaluated Pearl primarily under reward variations, we adapt it to transition dynamics variations by training the context encoder jointly with a dynamics model. This yields a probabilistic extension of DMA, where the context representation is regularized by a KL penalty against a unit Gaussian prior $\mathcal{N}(0,I)$, weighted by $\beta=0.4$ (Figure~\ref{fig:iqm-pearl}). 

\subsection{Zero-Shot Generalization}
\label{sec:zsg}
IQM scores aggregated across all contextualized environments (Figure~\ref{fig:iqm}) show that DMA* and DMA*-SH achieve strong zero-shot generalization, particularly in the out-of-distribution setting.

The strongest competitors are the context-aware Concat and DA baselines, which DMA*-SH consistently outperforms across all three regimes. DMA*-SH also attains consistently strong AER scores across environments and contextualization types (Table~\ref{tab:aer}). We complement these aggregated metrics with a per-context analysis, which reveals how performance changes as evaluation contexts diverge from the training distribution and exposes failure modes that aggregated statistics can obscure. Full per-context heatmaps, bar plots, and learning curves are provided in Appendix~\ref{sec:detailed-results}.

In Walker, simple domain randomization suffices, suggesting that explicit or inferred context can sometimes hinder performance. Despite not being tailored to transition-dynamics variation, the context-unaware Amago algorithm performs competitively, especially in the training and eval-in regime, in most environments including non-overlapping settings such as DI, which cannot be solved by simple domain randomization (unlike DI-Friction with overlapping contexts). We note that Amago uses a substantially higher parameter count than the other SAC-based approaches in our experiments.
DMA-Pearl achieves strong results in overlapping contextualizations; however, its smooth KL prior makes it uncompetitive in non-overlapping settings (Remark~\ref{rem:VariBAD} and Appendix~\ref{sec:ablations}).

\begin{SCfigure}[][ht]
    \includegraphics[width=.5\textwidth]{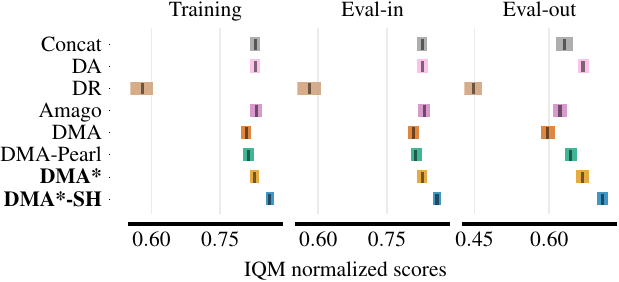}
  \caption{Interquartile mean (IQM) with $95\%$ confidence intervals computed from AER scores, aggregated across environments. Results are reported for the three context sets $\mathcal{C}_{\text{train}}$, $\mathcal{C}_{\text{eval-in}}$, and $\mathcal{C}_{\text{eval-out}}$, comparing DMA*, DMA*-SH, and all baselines.}
  \label{fig:iqm}
\end{SCfigure}

\begin{table}[ht]
  \caption{
  AER scores with $95\%$ confidence intervals for all environments. Results are aggregated across context sets $\mathcal{C}_{\text{train}}$, $\mathcal{C}_{\text{eval-in}}$, and $\mathcal{C}_{\text{eval-out}}$, comparing DMA*, DMA*-SH, and all baselines. Best AER scores are in  bold; if multiple methods are highlighted for an environment, their differences are not distinguishable under the probability of improvement with $95\%$ confidence intervals \citep{agarwal2021deep}. \textit{All} aggregates all environments, \textit{Overlap} aggregates those with overlapping contexts, and \textit{Non-overlap} aggregates those with non-overlapping contexts. For the rows \textit{All}, \textit{Overlap}, and \textit{Non-overlap}, scores are aggregated after environment-wise min--max scaling using environment-specific return bounds (Table~\ref{tab:envs}). Per-set AER tables 
  are provided in Appendix~\ref{sec:detailed-results}.
  }
  \fontsize{5pt}{5pt}\selectfont
  \label{tab:aer}
  \centering
  \begin{tabular}{lrrrrrrrr}
    \toprule
    &\multicolumn{2}{c}{Context-Aware}&\multicolumn{2}{c}{Context-Unaware}&\multicolumn{4}{c}{Context-Inferred} \\
    \cmidrule(l){2-3}\cmidrule(l){4-5}\cmidrule(l){6-9}
    Environment & Concat & DA & DR & Amago& DMA & DMA-Pearl & \textbf{DMA*} & \textbf{DMA*-SH} \\
    \midrule
    DI & 72 [69, 74] & \textbf{75 [74, 76]} & 16 [9, 24] & 61 [52, 70] & 63 [62, 65] & 68 [66, 70] & 75 [74, 75] & \textbf{76 [75, 77]} \\
    DI-Friction & 65 [50, 74] & 76 [74, 77] & 69 [53, 77] & \textbf{79 [78, 79]} & 56 [41, 69] & 74 [73, 76] & 68 [52, 77] & 77 [76, 77] \\
    DI-Perm & 73 [71, 74] & \textbf{76 [75, 77]} & 50 [40, 58] & \textbf{72 [66, 77]} & 68 [66, 70] & 70 [69, 71] & 72 [70, 74] & \textbf{75 [73, 76]} \\
    ODE & 162 [156, 167] & \textbf{179 [176, 183]} & 63 [54, 72] & 168 [167, 169] & 166 [164, 168] & 171 [165, 176] & \textbf{175 [170, 180]} & \textbf{179 [176, 182]} \\
    Cartpole & 863 [851, 877] & 892 [857, 926] & 644 [595, 691] & 639 [568, 715] & 900 [879, 923] & 884 [839, 917] & 927 [904, 950] & \textbf{967 [954, 978]} \\
    Cheetah & 385 [372, 402] & 384 [373, 396] & 281 [261, 303] & \textbf{414 [402, 429]} & \textbf{388 [366, 411]} & 355 [335, 379] & 383 [368, 399] & \textbf{408 [390, 425]} \\
    Reacher (E) & \textbf{890 [859, 922]} & \textbf{878 [843, 910]} & 564 [515, 614] & \textbf{914 [901, 925]} & \textbf{871 [834, 904]} & \textbf{875 [845, 905]} & \textbf{894 [870, 918]} & \textbf{898 [881, 916]} \\
    Reacher (E)-Perm & \textbf{883 [853, 910]} & \textbf{866 [825, 899]} & 659 [625, 701] & 862 [844, 883] & \textbf{881 [849, 912]} & \textbf{892 [870, 911]} & \textbf{869 [841, 897]} & \textbf{903 [881, 922]} \\
    Reacher (H) & 683 [569, 774] & 713 [668, 763] & 266 [185, 350] & \textbf{853 [831, 872]} & 654 [573, 734] & 686 [606, 746] & 697 [611, 771] & \textbf{805 [743, 855]} \\
    Reacher (H)-Perm & 712 [624, 789] & 749 [711, 790] & 328 [244, 417] & 739 [693, 779] & 668 [558, 751] & 668 [609, 730] & 679 [576, 764] & \textbf{840 [800, 873]} \\
    BallInCup & \textbf{924 [915, 932]} & 879 [868, 892] & 862 [824, 887] & 651 [540, 763] & 912 [904, 920] & 903 [895, 912] & 900 [884, 914] & 890 [878, 902] \\
    Walker & 783 [772, 794] & \textbf{784 [765, 801]} & \textbf{798 [785, 810]} & 753 [740, 764] & 762 [718, 790] & \textbf{804 [796, 811]} & 780 [763, 793] & \textbf{806 [798, 814]} \\
    WalkerGym & 2701 [2418, 2960] & 3321 [3180, 3486] & 2811 [2626, 2997] & \textbf{3924 [3675, 4207]} & 3102 [2919, 3307] & 3162 [2902, 3417] & 2960 [2652, 3264] & 3258 [3170, 3338] \\
    HopperGym & 2521 [2434, 2606] & 2542 [2485, 2601] & 2377 [2313, 2437] & 2501 [2427, 2569] & \textbf{2629 [2549, 2696]} & \textbf{2646 [2595, 2696]} & \textbf{2635 [2586, 2692]} & 2563 [2535, 2588] \\
    \midrule
    All & 0.73 [0.71, 0.75] & 0.76 [0.75, 0.77] & 0.52 [0.5, 0.54] & 0.73 [0.72, 0.74] & 0.72 [0.7, 0.74] & 0.74 [0.73, 0.75] & 0.75 [0.73, 0.76] & \textbf{0.79 [0.78, 0.79]} \\
    Overlap & 0.71 [0.68, 0.74] & \textbf{0.75 [0.74, 0.76]} & 0.71 [0.67, 0.73] & \textbf{0.73 [0.7, 0.75]} & 0.71 [0.68, 0.74] & \textbf{0.76 [0.74, 0.77]} & 0.73 [0.69, 0.75] & \textbf{0.76 [0.75, 0.76]} \\
    Non-overlap & 0.74 [0.72, 0.76] & 0.77 [0.75, 0.78] & 0.41 [0.39, 0.44] & 0.73 [0.72, 0.75] & 0.72 [0.7, 0.74] & 0.73 [0.72, 0.74] & 0.75 [0.74, 0.77] & \textbf{0.8 [0.79, 0.81]} \\
    \bottomrule
  \end{tabular}
\end{table}

\subsection{Context-Embedding Diagnostics and Geometry}
\label{sec:variability}
To analyze how RL task performance relates to $z_t$, we introduce three evaluation criteria: \emph{Informativeness}, \emph{Variability}, and \emph{Representation-Overlap} ($\mathrm{RO}$).

For each split $M\in\{\text{train},\text{eval-in},\text{eval-out}\}$, we collect a dataset of trajectory windows $\tau$ from contexts $c\in\mathcal{C}_M$ and embed each window as $z=g_\phi(\tau)$.

\textbf{Informativeness.}
We quantify how strongly the embedding $z_t$ depends on the true context $c$ via the mutual information $I(z_t;c)$. We estimate $I(z_t;c)$ using the $k$-nearest-neighbors entropy estimator \citep{kraskov2004estimating} with $k=3$ under $L_2$ distance, and use this estimate as a diagnostic of context dependence in the learned representation \citep{garcin2025studying}. A higher value of $I(z_{t};c)$ signifies that \(z_{t}\) varies more systematically with $c$ across the collected trajectories.

\textbf{Variability.}
We measure dataset-level spread of context representations $z\in\mathbb{R}^{d_z}$ as $\mathrm{Variability}(M)=\frac{1}{d_z}\operatorname{tr}(\mathrm{Cov}(Z))$ (Definition~\ref{def:Variability} in Appendix~\ref{app:varcompress}). Lower Variability corresponds to more consistent context signals across trajectory windows, which can facilitate robust policy learning.

\textbf{Representation-Overlap ($\mathrm{RO}$).}
$\mathrm{RO}$ is the average pairwise cosine similarity between per-context mean embeddings over all context pairs (Definition~\ref{def:RO}, \eqref{eq:RO_cos}). Higher $\mathrm{RO}$ indicates stronger global directional concentration of context means in representation space.

Figure~\ref{fig:variability} shows that Variability decreases along the ablation path DMA $\rightarrow$ DMA* $\rightarrow$ DMA*-SH across tasks. Theorem~\ref{thm:pg_var_bound} (Appendix~\ref{app:VariabilityTh}) shows lower Variability tightens gradient-variance bounds, explaining this correlation with RL performance. Surprisingly, DMA*-SH has lower Informativeness $I(z_t;c)$ yet better returns. Proposition~\ref{prop:paradox} resolves this apparent paradox: within-mode compression reduces policy-gradient variance even as total information decreases. Interpreted through an approximate \textit{structural information bottleneck} lens (Appendix~\ref{app:sib}), this reflects mode sufficiency with selective within-mode compression. The key structural link is Theorem~\ref{thm:var_decomp_SU}, which decomposes Variability into within-context noise, within-mode spread along continuous context dimensions, and between-mode separation induced by actuator inversion.

These findings are complemented by a RO analysis (Figure~\ref{fig:variability}). Across our ablations, DMA*-SH attains higher $\mathrm{RO}$, suggesting more \textit{directionally concentrated} representations: within-mode variation induces smaller directional changes, while actuator-inversion modes remain separated. In DI, the t-SNE and cosine analyses in Figure~\ref{fig:tsnero} are consistent with reduced within-mode spread across mass while preserving separation across actuator inversion, aligning with the compression and separation terms in Theorem~\ref{thm:var_decomp_SU}. See Appendix~\ref{app:varcompress} for details on directional geometry induced by normalization and shared hypernetwork conditioning.

\begin{SCfigure}[][ht]
    \includegraphics[width=.63\textwidth]{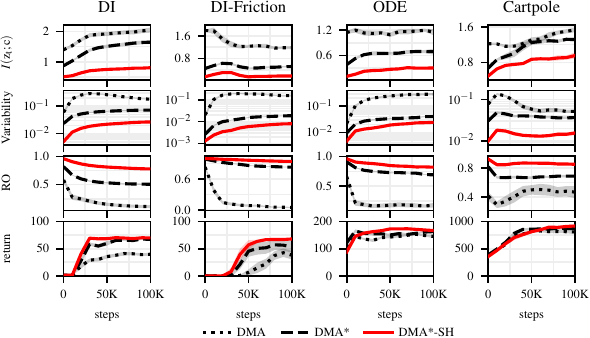}
    \caption{
    Informativeness $I(z_t;c)$, Variability, Representation-Overlap ($\mathrm{RO}$), and episodic returns on $\mathcal{C}_{\text{eval-out}}$. 
    DMA*-SH reduces Variability and increases $\mathrm{RO}$, indicating directionally concentrated embeddings that are less sensitive to within-mode nuisance variation while preserving task-relevant mode structure. This geometry correlates with stronger zero-shot returns and supports the structural information-bottleneck interpretation in Appendix~\ref{app:sib}. 
    }
    \label{fig:variability}
\end{SCfigure}

\begin{figure}[ht]
\centering
\begin{subfigure}[b]{.31\textwidth}
    \centering
    \includegraphics[width=0.9\textwidth]{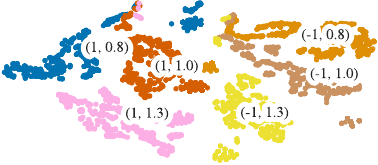}\\  
    \includegraphics[width=\textwidth]{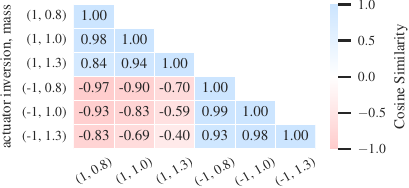}
    \caption{DMA.}
    \label{fig:tsneroA}
\end{subfigure}\hspace{.02\textwidth}
\begin{subfigure}[b]{.31\textwidth}
    \centering
    \includegraphics[width=0.9\textwidth]{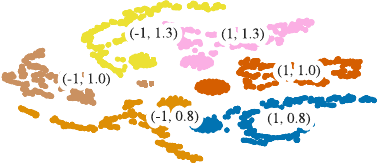}\\
    \includegraphics[width=\textwidth]{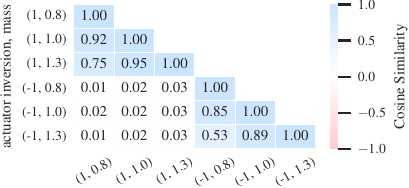}
    \caption{DMA*.}
    \label{fig:tsneroB}
\end{subfigure}\hspace{.02\textwidth}
\begin{subfigure}[b]{.31\textwidth}
    \centering
    \includegraphics[width=0.9\textwidth]{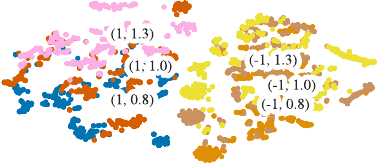}\\
    \includegraphics[width=\textwidth]{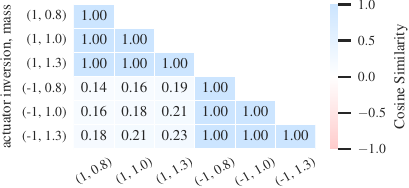}
    \caption{DMA*-SH.}
    \label{fig:tsneroC}
\end{subfigure}
\caption{
DI (non-overlapping): t-SNE of inferred embeddings $z_t$ (top) and cosine similarity heatmaps of per-context mean embeddings (bottom; \eqref{eq:pairwise_cosim}) for DMA, DMA*, and DMA*-SH. DMA*-SH shows stronger within-mode alignment across the continuous mass dimension while maintaining separation between actuator-inversion modes, consistent with the compression/separation terms in Theorem~\ref{thm:var_decomp_SU}. 
Mass clusters overlap more for DMA*-SH, yet returns are higher, consistent with mass having largely overlapping policy effects.
}
\label{fig:tsnero}
\end{figure}

\subsection{Implicit Gradient Regularization via Shared Hypernetworks}\label{sec:implreg}

In DMA*-SH, shared adapters $\omega=h_\eta(z_t)$ trained only by~\eqref{eq:hnopt} and detached in RL losses can be viewed as an implicit gradient regularizer for context-conditioned policy learning. We compare DMA*-SH to a separate-hypernetwork variant, DMA*-H, in which the dynamics, actor, and critic each have their own hypernetwork (with parameters $\eta^f,\eta^\pi,\eta^Q$, producing $\omega^f,\omega^\pi,\omega^Q$), and the actor/critic hypernetworks can adapt directly to their respective RL objectives. To quantify how strongly the policy objective depends on the inferred context through the adapter pathway, we report the mean norm of a \emph{shadow} $z$-space gradient, $\mathbb{E}\|\nabla_z L_\pi\|$, in Figure~\ref{fig:gradcon-selection}. The shadow gradient is computed by temporarily removing the stop-gradient on $\omega=h_\eta(z)$; under the actual training rule in DMA*-SH, $\nabla_z L_\pi=0$ since $\omega$ is detached in the RL losses and $z_t$ is used by the actor only through $\omega$. Shadow gradients serve purely as diagnostic signatures of effective context utilization; they are never applied during training. Across environments, DMA*-SH shows persistently non-negligible shadow norms, indicating sustained hypothetical sensitivity of the policy objective to the context signal under shared, dynamics-trained adapters. In contrast, DMA*-H exhibits substantially smaller values, consistent with weaker effective context dependence along the corresponding adapter pathway. This separation co-occurs with faster learning and higher returns for DMA*-SH (Figure~\ref{fig:gradcon-selection}). See Appendices~\ref{app:gradcon} and~\ref{sec:TxrecurDMASH} for an extended discussion and additional diagnostics.

\begin{SCfigure}[][ht]
    \includegraphics[width=.63\textwidth]{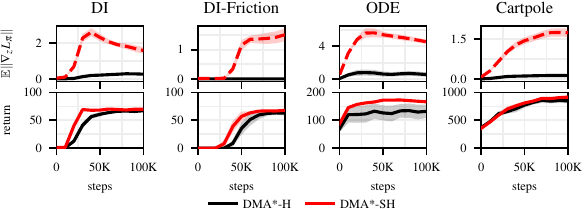}
    \caption{
    Implicit regularization of RL via a dynamics-trained shared hypernetwork. Top: Mean policy context sensitivity in shared embedding space $\mathbb{E}\|\nabla_z L_\pi\|$. Bottom: Episodic returns.}
    \label{fig:gradcon-selection}
\end{SCfigure}

\section{Conclusions}
\label{sec:Conclusions}

We introduced the Actuator Inversion Benchmark (AIB) as a diagnostic suite for zero-shot generalization under discontinuous context-to-dynamics shifts. We proposed DMA*-SH, a context-inferred RL framework using a shared hypernetwork to couple latent context inference to dynamics-aligned modulation. Theoretically, we proved expressivity separation results for hypernetwork-conditioned adapters (Theorems~\ref{thm:hyper_expressiveness}, \ref{thm:operator-family}, \ref{thm:bottleneck-separation}), identifying multiplicative modulation as an appropriate inductive primitive for non-overlapping context shifts. Empirically, DMA*-SH embeddings show stronger within-mode directional alignment across continuous context variation while preserving mode separation. Theorem~\ref{thm:var_decomp_SU} connects these patterns to Variability through within-mode compression and between-mode separation; Theorem~\ref{thm:pg_var_bound} links reduced Variability to tighter policy-gradient variance bounds. Together, these results support an approximate structural information bottleneck view: preserve mode-relevant information while compressing nuisance variation. Finally, shared hypernetwork design regularizes RL updates toward dynamics-consistent solutions. Notably, these effects emerge \textit{without} auxiliary objectives. Directional concentration, nuisance compression, structural-bottleneck-like behavior, and implicit gradient regularization arise from the interlocking design of dynamics-aligned training, shared modulation, and input/output normalization. Overall, our results support DMA*-SH as a principled approach to zero-shot contextual RL under discontinuous shifts.

\paragraph{Limitations and Future Work.}
DMA*-SH couples context inference to learned dynamics through a shared hypernetwork, so model errors can propagate into the representation and impair adaptation under misspecification or rapidly shifting dynamics. Multiplicative modulation is natural for discontinuous action-effect shifts, but may be less effective when context modifies rewards or induces non-factorizable policy changes. Hypernetwork capacity is also a practical bottleneck: too little limits expressiveness, while too much can overfit and reduce stability.

Promising directions include robustness to model uncertainty via ensembles or Bayesian hypernetworks~\citep{krueger2017bayesian}, extensions to reward-modifying contexts via dual hypernetworks or multi-view encoders, and explicit objectives for strengthening compression/separation~\citep{li2024Unicorn,li2024efficient}. Real-robot deployments involving actuator degradation, swapped channels, or intermittent faults provide natural testbeds for shared modulation under genuinely discontinuous dynamics shifts.

\begin{ack}
JB and ME gratefully acknowledge funding by the German Research Foundation DFG through the MoReSpace (402776968) project.
\end{ack}

{
\small
\bibliography{main}
\bibliographystyle{plainnat}
}

\newpage
\appendix

\section*{Appendix}
\startcontents[appendix]
\printcontents[appendix]{l}{0}

\clearpage

\section{Theoretical Results and Supplementary Analyses}\label{app:DMA-theory}

\subsection{Multiplicative Interactions in Contextual Policies}\label{app:expressiveness}

One standard approach to conditioning a policy on a context embedding $z_t \in \mathbb{R}^{d_z}$ is through concatenation with the state $s_t$, yielding an input $[s_t; z_t]$ to a ReLU MLP. Such networks are continuous piecewise-linear (CPWL) in the joint input, with the Hessian matrix vanishing almost everywhere.
CPWL functions can approximate complex interactions by using many linear regions, but they cannot represent truly bilinear couplings exactly on any domain with non-empty interior. As a result, mode-switching and sign-dependent transformations, such as those required for actuator inversion (Section~\ref{sec:environments}), are typically realized only indirectly through a fine partition of the input space.

In contrast, \emph{multiplicative interactions} enable richer structure via bilinear forms: $f(s, z) = z^\top W s + \text{lower-order terms}$, where $W$ captures cross-terms between $s$ and $z$~\citep{jayakumar2020multiplicative,galanti2020modularity}. Hypernetworks instantiate such interactions: when $h_\eta(z)$ generates weights for an adapter layer computing $W^{(\omega)} x$, where $x$ are features derived from $s$, the map is piecewise bilinear in $(x, z)$, with regions determined by the activation patterns of both the hypernetwork and the adapter.

We show that hypernetwork-conditioned adapters can realize functions that concatenation ReLU MLPs cannot represent exactly. We first state the separation in the scalar-output and linear adapter case:
\begin{theorem}[Separation of hypernetwork-adapter and concatenation hypothesis classes]
\label{thm:hyper_expressiveness}
Let $\mathcal{H}_{\mathrm{concat}}$ be the class of functions $f: \mathbb{R}^{d_s} \times \mathbb{R}^{d_z} \to \mathbb{R}$ realized by finite ReLU MLPs on input $[s; z]$. Let $\mathcal{H}_{\mathrm{hyper}}$ be the class of functions of the form
\begin{equation}\label{eq:hyper_class}
f(s,z) = w^\top \Big( x(s) + g_{\text{adapter}}\bigl(x(s);\, \omega = h_\eta(z)\bigr) \Big) + b,
\end{equation}
where:
\begin{itemize}[leftmargin=*, topsep=2pt, itemsep=1pt]
    \item $x: \mathbb{R}^{d_s} \to \mathbb{R}^{n}$ is a finite ReLU network (trunk),
    \item $h_\eta: \mathbb{R}^{d_z} \to \mathbb{R}^{d_\omega}$ is a finite ReLU network with linear output layer (hypernetwork),
    \item for each $\omega \in \mathbb{R}^{d_\omega}$, $g_{\text{adapter}}(\cdot\,; \omega): \mathbb{R}^{n} \to \mathbb{R}^{n}$ is the linear map $x \mapsto W^{(\omega)} x$, where $W^{(\omega)} \in \mathbb{R}^{n \times n}$ is the adapter weight matrix parameterized by $\omega$,
    \item $w \in \mathbb{R}^{n}$, $b \in \mathbb{R}$ (linear output head).
\end{itemize}

Assume $s \in \mathcal{S} \subset \mathbb{R}^{d_s}$ and $z \in \mathcal{Z} \subset \mathbb{R}^{d_z}$ range over compact sets with non-empty interior. Then:
$$
\mathcal{H}_{\mathrm{hyper}} \not\subseteq \mathcal{H}_{\mathrm{concat}}.
$$
That is, there exists $f^\star \in \mathcal{H}_{\mathrm{hyper}}$ such that $f^\star \notin \mathcal{H}_{\mathrm{concat}}$.
\end{theorem}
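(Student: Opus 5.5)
The plan is to exhibit one explicit bilinear function $f^\star(s,z)=s_1 z_1$, show that it lies in $\mathcal{H}_{\mathrm{hyper}}$, and show that it is not continuous piecewise-linear (CPWL) on any open set. Since every finite ReLU MLP on $[s;z]$ is CPWL, this yields $f^\star\notin\mathcal{H}_{\mathrm{concat}}$.

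\textbf{Step 1: membership in $\mathcal{H}_{\mathrm{hyper}}$.} Work in the simplest configuration, with $n=1$ if the coordinates permit, otherwise a small $n$ with unused coordinates set to zero.
\begin{itemize}[leftmargin=*, topsep=2pt, itemsep=1pt]
    \item Take the trunk $x(s)=s_1$. This is realizable by a ReLU network via $s_1=\mathrm{ReLU}(s_1)-\mathrm{ReLU}(-s_1)$.
    \item Take the hypernetwork $h_\eta(z)=z_1-1$. This needs no hidden units, only the linear output layer; a trivial ReLU layer can be inserted if one insists on a hidden layer.
    \item Take the adapter $W^{(\omega)}=\omega$, which is a $1\times 1$ matrix.
    \item Take the head $w=1$, $b=0$.
\end{itemize}
Then $f(s,z)=x+W^{(\omega)}x = s_1+(z_1-1)s_1 = s_1z_1$, so $f^\star$ has the form \eqref{eq:hyper_class} for all $(s,z)$, and in particular on $\mathcal{S}\times\mathcal{Z}$.

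\textbf{Step 2: structure of $\mathcal{H}_{\mathrm{concat}}$.} Any finite ReLU MLP is a finite composition of affine maps and coordinatewise $\max(\cdot,0)$. Hence it is CPWL: there is a finite family of polyhedral pieces covering $\mathbb{R}^{d_s+d_z}$ on each of which the function is affine. This follows by induction on depth, because compositions, sums, and maxima of CPWL functions are CPWL. It is standard, and the paper's discussion before the theorem already takes it for granted.

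\textbf{Step 3: the separation argument.} This is the step I expect to need the most care. Suppose $g\in\mathcal{H}_{\mathrm{concat}}$ agrees with $f^\star$ on $\mathcal{S}\times\mathcal{Z}$.
\begin{itemize}[leftmargin=*, topsep=2pt, itemsep=1pt]
    \item $\mathcal{S}\times\mathcal{Z}$ has non-empty interior, and the pieces of $g$ are finitely many polyhedra covering it.
    \item By a Baire or measure argument (finitely many closed sets covering an open set, one of which must have interior), some piece meets the interior in a set with non-empty interior. Call it $U$.
    \item On $U$, $g$ is affine, so its Hessian vanishes. But on $U$ we also have $g=s_1z_1$, whose Hessian has the off-diagonal entry $\partial^2/\partial s_1\partial z_1=1\neq 0$. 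This is a contradiction.
\end{itemize}
Equivalently, without derivatives: pick a small axis-aligned rectangle in the $(s_1,z_1)$ coordinates inside $U$. An affine function has zero mixed second difference $g(a',b')-g(a',b)-g(a,b')+g(a,b)$, whereas $s_1z_1$ gives $(a'-a)(b'-b)\neq0$.

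Hence $f^\star\notin\mathcal{H}_{\mathrm{concat}}$ while $f^\star\in\mathcal{H}_{\mathrm{hyper}}$, which proves $\mathcal{H}_{\mathrm{hyper}}\not\subseteq\mathcal{H}_{\mathrm{concat}}$.
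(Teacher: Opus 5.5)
Your proof is correct and follows the paper's argument: a bilinear witness $sz$, and a hypernetwork that outputs an affine function of $z$ shifted by $-1$ to cancel the skip connection. Non-membership then follows because the nonzero mixed second derivative (or mixed second difference) contradicts affineness of a CPWL function on an open piece. The differences are minor. You use $n=1$ with the signed trunk $x(s)=s_1$, which presumes the trunk ends in a linear layer. The paper instead uses $n=2$ with nonnegative features $[\mathrm{ReLU}(s),\mathrm{ReLU}(-s)]^\top$ and $h_\eta(z)=[z-1,-(z+1)]^\top$, which is what lets the same witness pass through the ReLU bottleneck in Theorem~\ref{thm:bottleneck-separation}. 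Your witness $s_1z_1$ handles general $d_s,d_z$ directly, whereas the paper only treats the scalar case. Your choice of an open affine piece is also more careful than the paper's ``Hessian vanishes a.e.'' remark.
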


\begin{proof}
We exhibit $f^\star(s, z) = s \cdot z$ (scalar case, $d_s = d_z = 1$) and show $f^\star \in \mathcal{H}_{\mathrm{hyper}} \setminus \mathcal{H}_{\mathrm{concat}}$.

\textbf{Membership in $\mathcal{H}_{\mathrm{hyper}}$.}
Choose $n = 2$. Define:
\begin{align*}
x(s) &= \bigl[\mathrm{ReLU}(s), \mathrm{ReLU}(-s)\bigr]^\top \in \mathbb{R}^2, \\
h_\eta(z) &= [z-1, -(z+1)]^\top \in \mathbb{R}^2, \\
W^{(\omega)} &= \mathrm{diag}(\omega) = \begin{pmatrix} \omega_1 & 0 \\ 0 & \omega_2 \end{pmatrix}, \\
w &= [1, 1]^\top, \quad b = 0.
\end{align*}
The feature trunk $x$ is a valid finite ReLU network. The hypernetwork $h_{\eta }$ is affine in $z$ and thus realizable by a ReLU network with linear output. 
The adapter computes $g_{\text{adapter}}(x; \omega) = W^{(\omega)} x = [\omega_1 x_1,\; \omega_2 x_2]^\top$, 
resulting in adapted features with a skip connection
$\tilde{x} = x + g_{\text{adapter}}(x; \omega) = \bigl[(1 + \omega_1) x_1, (1 + \omega_2) x_2\bigr]^\top$. By substituting $\omega_1 = z - 1$ and $\omega_2 = -(z+1)$, we obtain $\tilde{x} = \bigl[z \cdot \mathrm{ReLU}(s), -z \cdot \mathrm{ReLU}(-s)\bigr]^\top$. 
The output is $$f(s,z) = w^\top \tilde{x} = z \cdot \mathrm{ReLU}(s) - z \cdot \mathrm{ReLU}(-s) = z \cdot s,$$ using the identity $\mathrm{ReLU}(s) - \mathrm{ReLU}(-s) = s$. Hence $f^\star \in \mathcal{H}_{\mathrm{hyper}}$.

\textbf{Non-membership in $\mathcal{H}_{\mathrm{concat}}$.} 
Any finite ReLU MLP realizes a CPWL function~\citep{montufar2014number}. A CPWL function is affine on each piece of a polyhedral partition, so its Hessian vanishes almost everywhere. However, $f^\star(s, z) = sz$ has mixed partial derivative $\frac{\partial^2 f^\star}{\partial s \, \partial z} = 1 \neq 0$ everywhere. No CPWL function can equal $f^\star$ on a domain with non-empty interior. Hence $f^\star \notin \mathcal{H}_{\mathrm{concat}}$.
\end{proof}

Theorem~\ref{thm:hyper_expressiveness} proves the expressive gap in the minimal sign-flip setting. The following proposition and corollary show that the same hypernetwork-adapter mechanism extends to broader context-dependent operator switching. Theorem~\ref{thm:operator-family} gives exact realization for broader operator families, including discrete permutations and gain modulation; Corollary~\ref{cor:continuous-separation} gives an exact CPWL separation result for continuous affine operator families.
\begin{theorem}[Operator-family realization by hypernetwork-conditioned adapters]
\label{thm:operator-family}
Fix a feature map $x:\R^{d_s}\to\R^n$, an output vector $w\in\R^n$, a bias $b\in\R$, and a context domain $Z\subseteq\R^{d_z}$.

Let $T:Z\to\R^{n\times n}$ be a matrix-valued map.
Assume the hypernetwork and adapter parameterization can realize
\begin{align}\label{eq:adapter-assumption}
  W(h_\eta(z))=T(z)-I_n
  \qquad\text{for all }z\in Z,
\end{align}
where the linear adapter is $g_{\mathrm{adapter}}(x;\omega)=W(\omega)x$.
Define
\begin{align}\label{eq:f-T}
  f_T(s,z)=w^\top T(z)x(s)+b.
\end{align}
Then $f_T\in\mathcal{H}_{\mathrm{hyper}}$.
In particular:
\begin{enumerate}[leftmargin=*]
  \item Actuator inversion: when the mode variable is a discrete input $c\in\{\pm1\}$ (or an equivalent one-hot encoding), $T(c)\in\{I_n,-I_n\}$ gives $W\in\{0_{n\times n},-2I_n\}$. The map $c \mapsto T(c)-I_n$ takes only two values on the discrete mode domain $c\in\{\pm1\}$ and is therefore exactly realizable.
  \item Permutation-induced wire swaps: when the mode is a discrete variable $q\in\{1,\ldots,|G|\}$ indexing a finite group $G$, $T(q)=P_q$ is permutation-valued, and $W(h_\eta(q))=P_q-I_n$ takes finitely many values. When $q$ is presented as a discrete input (e.g., one-hot), the   hypernetwork need only map finitely many input symbols to finitely many fixed output vectors, which is exactly realizable by a ReLU network with linear output. 
  \item Gain modulation corresponds to $T(z)=D(z)$ diagonal, and is exactly realizable when the diagonal entries are continuous piecewise-linear functions of $z$.
\end{enumerate}
\end{theorem}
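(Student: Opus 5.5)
The argument is a direct substitution into the definition of $\mathcal{H}_{\mathrm{hyper}}$ in \eqref{eq:hyper_class}, followed by a realizability check of assumption \eqref{eq:adapter-assumption} for each of the three special cases. For the main claim, we take the trunk $x$, head $w$, and bias $b$ as fixed, with $x$ a finite ReLU network as Theorem~\ref{thm:hyper_expressiveness} requires. The hypernetwork is any $h_\eta$ satisfying \eqref{eq:adapter-assumption}. The skip connection gives
\[
\tilde x = x(s) + W(h_\eta(z))x(s) = x(s) + (T(z)-I_n)x(s) = T(z)x(s),
\]
so $w^\top\tilde x + b = f_T(s,z)$ for every $s$ and every $z\in Z$. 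Hence $f_T$ has exactly the form \eqref{eq:hyper_class} and lies in $\mathcal{H}_{\mathrm{hyper}}$. This is the same mechanism as in the proof of Theorem~\ref{thm:hyper_expressiveness}, where $T(z)=z I_2$ after combining with the sign-split trunk.

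For the special cases, we take $W(\omega)$ to be the matrix obtained by reshaping $\omega\in\R^{n^2}$, which is linear in $\omega$. It then suffices to exhibit a finite ReLU network $h_\eta$ with linear output whose value at each $z\in Z$ is $\mathrm{vec}(T(z)-I_n)$.

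\emph{Actuator inversion.} Take $Z=\{\pm1\}$. The map $c\mapsto \mathrm{vec}(T(c)-I_n)=\tfrac{c-1}{2}\,\mathrm{vec}(2I_n)$ equals $0$ at $c=1$ and $-2I_n$ at $c=-1$, and it is affine in $c$. It is therefore realized by a network with no hidden units. Under a one-hot encoding $e_c$, a single linear layer sends $e_{+1}\mapsto 0$ and $e_{-1}\mapsto\mathrm{vec}(-2I_n)$.

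\emph{Permutations.} Use a one-hot input $e_q\in\R^{|G|}$. The linear output layer with matrix $M=[\mathrm{vec}(P_1-I_n),\dots,\mathrm{vec}(P_{|G|}-I_n)]$ satisfies $Me_q=\mathrm{vec}(P_q-I_n)$. If instead $q$ is given as a scalar in $\{1,\dots,|G|\}$, the map is defined only on finitely many points, so we interpolate linearly between consecutive integers. This yields a CPWL function of one variable, which is exactly a one-hidden-layer ReLU network: write it as $a+\sum_k \beta_k\,\mathrm{ReLU}(q-k)$ coordinatewise.

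\emph{Gain modulation.} Set $T(z)=D(z)=\mathrm{diag}(d_1(z),\dots,d_n(z))$ with each $d_i$ CPWL on $Z$. Then each entry of $W(h_\eta(z))$ is either $d_i(z)-1$ on the diagonal or $0$ off the diagonal, so every entry is CPWL. We invoke the standard fact (Arora et al.) that every CPWL function $\R^{d_z}\to\R$ is exactly representable by a finite ReLU network. Stacking the $n$ scalar networks in parallel, and fixing the off-diagonal outputs to zero in the output layer, gives $h_\eta$.

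The only step needing care is this last use of exact CPWL representability in dimension $d_z>1$. The first two cases are elementary because the domain is finite or one-dimensional. Here I would cite the Arora et al.\ result, which gives depth $\lceil\log_2(d_z+1)\rceil+1$, rather than reprove it. I would also note that the theorem is stated conditionally on \eqref{eq:adapter-assumption}, so the main claim holds without it and the citation is needed only to verify the gain-modulation case.
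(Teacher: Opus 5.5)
Your proof is correct and follows the paper's route: the same skip-connection substitution $x(s)+(T(z)-I_n)x(s)=T(z)x(s)$, followed by case-by-case realizability checks. Where the paper only asserts realizability, you give explicit constructions: an affine map in $c$, a one-hot linear layer or ReLU interpolation for $q$, and Arora et al.\ for the multivariate CPWL gains. One small slip, in an aside only: the operator induced in the construction of Theorem~\ref{thm:hyper_expressiveness} (head $w=(1,1)^\top$) is $T(z)=\mathrm{diag}(z,-z)$, not $zI_2$; the latter would need the head $w=(1,-1)^\top$.
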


\begin{proof}
By the definition of $\mathcal{H}_{\mathrm{hyper}}$ in Theorem~A.1, any function of the form
$$f(s,z) = w^\top\bigl(x(s)+g_{\mathrm{adapter}}(x(s);h_\eta(z))\bigr)+b =  w^\top\bigl(x(s)+W(h_\eta(z))\,x(s)\bigr)+b
$$
belongs to $\mathcal{H}_{\mathrm{hyper}}$.
Using assumption~\eqref{eq:adapter-assumption},
\begin{align}
  x(s)+g_{\mathrm{adapter}}(x(s);h_\eta(z))
  &= x(s)+W(h_\eta(z))\,x(s) \notag \\
  &= x(s)+(T(z)-I_n)\,x(s) \notag \\
  &= T(z)\,x(s). \label{eq:Tx}
\end{align}
Therefore $f(s,z)=w^\top T(z)\,x(s)+b=f_T(s,z)$, proving $f_T\in\mathcal{H}_{\mathrm{hyper}}$.

For the specific cases:
\begin{enumerate}[leftmargin=*]
  \item $T(c)\in\{I_n,-I_n\}$ gives $W(h_\eta(c))\in\{0_{n\times n},-2I_n\}$. Since the mode variable $c$ takes only two discrete values, the hypernetwork need only map two input symbols to two fixed output vectors, which is exactly realizable.
  \item For a finite group $G$ with $|G|<\infty$ and a discrete mode variable $q\in\{1,\ldots,|G|\}$, $T(q)=P_q$ requires $h_\eta$ to map $q$ to the vectorized entries of $P_q-I_n$, a function taking finitely many values on a discrete domain. When $q$ is presented as a discrete input (e.g., integer or one-hot), a ReLU network with linear output can realize this map exactly. 
  \item For $T(z)=\mathrm{diag}(d_1(z),\ldots,d_n(z))$ with entries that are continuous piecewise-linear in $z$, the map $z\mapsto T(z)-I_n$ is also CPWL and hence realizable by a finite ReLU network. \qedhere
\end{enumerate}
\end{proof}

\begin{corollary}[Continuous operator-family separation]
\label{cor:continuous-separation}
Let $S\subset\R^{d_s}$ and $Z\subset\R^{d_z}$ be compact domains with non-empty interior. Suppose there exists a non-empty open set $U\subset S$ on which the feature map is affine:
\begin{align}\label{eq:affine-trunk}
  x(s)=As+a,\qquad s\in U,
\end{align}
for some matrix $A\in\R^{n\times d_s}$ and vector $a\in\R^n$.
Let
\begin{align}\label{eq:affine-T}
  T(z)=T_0+\sum_{j=1}^{d_z}z_j\,T_j,\qquad z\in Z,
\end{align}
with fixed matrices $T_0,T_1,\ldots,T_{d_z}\in\R^{n\times n}$, and define
\begin{align}\label{eq:f-affine}
  f(s,z)=w^\top T(z)\,x(s)+b.
\end{align}
Assume there exist indices $i\in\{1,\ldots,d_s\}$ and
$j\in\{1,\ldots,d_z\}$ such that
\begin{align}\label{eq:nondegen}
  w^\top T_j\,A\,e_i\neq 0,
\end{align}
where $e_i$ is the $i$-th standard basis vector in $\R^{d_s}$. Then
$$f\in\mathcal{H}_{\mathrm{hyper}} \qquad\text{and}\qquad f\notin\mathcal{H}_{\mathrm{concat}}.
$$
\end{corollary}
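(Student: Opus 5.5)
The argument splits into a membership part, handled by Theorem~\ref{thm:operator-family}, and a non-membership part, which reuses the Hessian argument from Theorem~\ref{thm:hyper_expressiveness} on a small open box.

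\textbf{Membership.} We apply Theorem~\ref{thm:operator-family} with $T(z)=T_0+\sum_j z_jT_j$. We need a hypernetwork with $W(h_\eta(z))=T(z)-I_n$ for all $z\in Z$. The map $z\mapsto \mathrm{vec}(T(z)-I_n)$ is affine, so it is realizable exactly by a ReLU network with linear output layer (even with zero hidden units, or with identity-carrying hidden units via $\mathrm{ReLU}(u)-\mathrm{ReLU}(-u)=u$). Take the adapter parameterization $W(\omega)=\mathrm{unvec}(\omega)$. Condition \eqref{eq:adapter-assumption} then holds, and Theorem~\ref{thm:operator-family} gives $f\in\mathcal{H}_{\mathrm{hyper}}$. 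One caveat: the trunk $x$ must be a finite ReLU network for $f$ to lie in $\mathcal{H}_{\mathrm{hyper}}$. I read the hypothesis as saying that the trunk is such a network and is affine on $U$, which is typical since ReLU networks are affine on open cells.

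\textbf{Non-membership.} Suppose for contradiction that $f=F$ on $S\times Z$ for some finite ReLU MLP $F$ on $[s;z]$. Choose a nonempty open box $V\subset \mathrm{int}(Z)$, and shrink $U$ if necessary so that $U\subset \mathrm{int}(S)$. On the open set $U\times V$,
\[
f(s,z)=w^\top\Bigl(T_0+\sum_j z_jT_j\Bigr)(As+a)+b,
\]
which is a polynomial in $(s,z)$ with
\[
\frac{\partial^2 f}{\partial s_i\,\partial z_j}=w^\top T_jAe_i\neq 0
\]
constant on $U\times V$ by \eqref{eq:nondegen}. On the other hand, $F$ is CPWL (as cited in Theorem~\ref{thm:hyper_expressiveness}), so it is affine on each cell of a finite polyhedral partition. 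Some cell meets $U\times V$ in a set with nonempty interior, because finitely many cells cover the open set and the lower-dimensional boundaries have measure zero. On that open piece, $F$ is affine and all its second derivatives vanish, while $f$ is smooth with a nonzero mixed partial. Hence $F\neq f$ there, a contradiction.

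\textbf{Main obstacle.} The only delicate point is making the ``Hessian vanishes a.e.'' argument rigorous. It suffices to find a single open set on which $F$ is affine and $f$ is not. The finite-partition observation above gives this directly, with no need for distributional derivatives. The slight mismatch between the corollary's hypotheses and the class definition (the trunk being a ReLU network) should be addressed explicitly in the write-up as described above.
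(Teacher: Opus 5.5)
Your proposal is correct and follows the paper's proof. Membership goes through Theorem~\ref{thm:operator-family} with an affine hypernetwork realizing $T(z)-I_n$. Non-membership comes from the constant nonzero mixed partial $w^\top T_jAe_i$ on $U\times V$, which contradicts the almost-everywhere vanishing Hessian of a CPWL function. Your finite-partition step (some cell's interior meets $U\times V$ in an open set) and your explicit caveat that the trunk $x$ must itself be a finite ReLU network tighten points the paper leaves implicit; they do not change the route.
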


\begin{proof}
\textbf{Membership in $\mathcal{H}_{\mathrm{hyper}}$.}
Since $T(z)$ is affine in $z$, the map
$z\mapsto W(h_\eta(z))=T(z)-I_n=(T_0-I_n)+\sum_j z_j T_j$ is affine in $z$ and therefore realizable by a ReLU network with linear output layer.
The result follows from Theorem~\ref{thm:operator-family}.

\textbf{Non-membership in $\mathcal{H}_{\mathrm{concat}}$.}
Fix any non-empty open set $V\subset Z$.
On $U\times V$, using~\eqref{eq:affine-trunk}
and~\eqref{eq:affine-T}:
$$f(s,z) = w^\top\Bigl(T_0+\sum_{j=1}^{d_z}z_j\,T_j\Bigr)(As+a)+b.$$
This is a polynomial (in fact, at most degree~2) in $(s,z)$, hence smooth on $U\times V$. Its mixed second derivative is $$\frac{\partial^2 f}{\partial s_i\,\partial z_j}(s,z) = w^\top T_j\,A\,e_i,$$
which is a \emph{constant} that is nonzero by assumption~\eqref{eq:nondegen}.

By contrast, any function in $\mathcal{H}_{\mathrm{concat}}$ is realized by a finite ReLU MLP on input $[s;z]$, which is continuous piecewise-linear (CPWL). A CPWL function is affine on each cell of a polyhedral partition, so its Hessian vanishes almost everywhere with respect to Lebesgue measure. Since $U\times V$ is open and non-empty, it has positive Lebesgue measure. But $\frac{\partial^2 f}{\partial s_i\partial z_j}\neq 0$ \emph{everywhere} on $U\times V$, contradicting the almost-everywhere vanishing of the CPWL Hessian. Therefore $f\notin\mathcal{H}_{\mathrm{concat}}$.
\end{proof}

\begin{remark}[SAC with a shared hypernetwork-conditioned bottleneck adapter]
\label{rem:HypPolicyArch}
Our implementation uses Soft Actor-Critic (SAC)~\citep{haarnoja2018soft}. 
The actor produces a Gaussian policy by outputting $(\mu,\log\sigma)$; 
the critic outputs scalar action-values. 
A single bottleneck adapter is inserted after the trunk and before the output head in each network (Figure~\ref{fig:adapter}). In our implementation (Table~\ref{tab:hyperparameters}), the trunk feature dimension is $d=256$ and the bottleneck dimension is $k=32$. 
The adapter includes a ReLU nonlinearity and has the form:
\begin{align}\label{eq:adapter_arch}
g_{\text{adapter}}(x; \omega)
=
W_{\text{up}}^{(\omega)} \cdot \mathrm{ReLU}\!\bigl(W_{\text{down}}^{(\omega)} \cdot x\bigr),
\end{align}
where $W_{\text{down}}^{(\omega)}: \mathbb{R}^{d} \to \mathbb{R}^{k}$ and
$W_{\text{up}}^{(\omega)}: \mathbb{R}^{k} \to \mathbb{R}^{d}$ are generated by a hypernetwork $h_\eta(z_t)$. 
The adapted features are $\tilde{x} = x + g_{\text{adapter}}(x; \omega)$ (skip connection; \eqref{eq:hypadapter}). 
Following \citet{beukman2023dynamics}, we omit the activation function at the end of the trunk in both the actor and critic, so the adapter receives linear features.
We share the same hypernetwork parameters $\eta$ across the actor, critic, and dynamics adapters. Consequently, gradients from the dynamics objective update the shared mapping $z_t \mapsto \omega$, and the actor/critic can exploit this same context-to-parameter mapping through the shared adapters.
\end{remark}

\begin{figure}[ht]
\centering
\begin{subfigure}[b]{0.4\textwidth}
    \centering
    \includegraphics[scale=0.75]{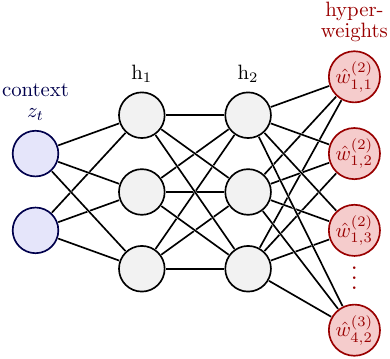}
    \caption{Hypernetwork.}
    \label{fig:adapterA}
\end{subfigure}\hfill
\begin{subfigure}[b]{0.6\textwidth}
    \centering
    \includegraphics[scale=0.75,trim=0 -12 0 0]{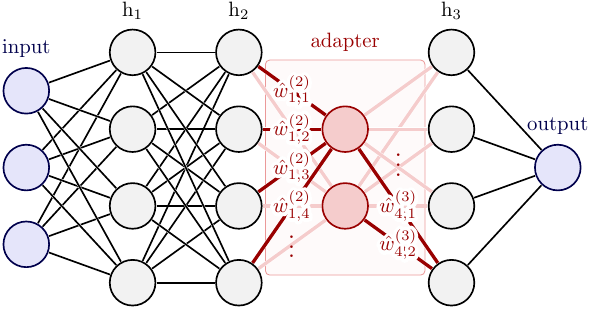}
    \caption{Model network with adapter.}
    \label{fig:adapterB}
\end{subfigure}
\caption{\textbf{Bottleneck adapter}. A hypernetwork (a) predicts parameters that are used within the dynamic model and RL networks (b).}
\label{fig:adapter}
\end{figure}

\paragraph{Separation result for the bottleneck adapter architecture.}
The proof of Theorem~\ref{thm:hyper_expressiveness} uses a simplified linear adapter $g_{\text{adapter}}(x; \omega)=W^{(\omega)}x$ to isolate the core mechanism: context-dependent weights multiplying features to produce bilinear interactions in $(x,z)$. DMA*-SH uses the bottleneck adapter in~\eqref{eq:adapter_arch}, which instantiates the same weight-modulation principle and adds capacity through the intermediate nonlinearity. 
Define the bottleneck-adapter hypothesis class:
$$\Hhyperbn:\quad  f(s,z)=w^\top\bigl(x(s)+g_{\mathrm{adapter}}(x(s);h_\eta(z))\bigr)+b
$$
with $g_{\mathrm{adapter}}$ as in~\eqref{eq:adapter_arch}.
The linear adapter with full $W(\omega)\in\R^{d\times d}$ can represent adapter maps of rank up to $d$. The bottleneck adapter~\eqref{eq:adapter_arch}, even ignoring the ReLU nonlinearity, computes $W_{\mathrm{up}} W_{\mathrm{down}} x$ which has rank at most $k<d$. Therefore, an arbitrary full-rank linear adapter $W(\omega)x$ with $\mathrm{rank}(W)>k$ cannot be represented by the bottleneck adapter. This means $\Hhyperlin\not\subseteq\Hhyperbn$ in general. Despite this, the specific witness function $f^*(s,z)=sz$ from Theorem~\ref{thm:hyper_expressiveness} is realizable by the bottleneck adapter, because the construction requires only rank~2.

\begin{theorem}[Separation for the bottleneck adapter]
\label{thm:bottleneck-separation}
Let $\mathcal{H}_{\mathrm{hyper}}^{\mathrm{bn}}$ be the bottleneck-adapter hypothesis class with bottleneck dimension $k\geq 2$, trunk dimension $d$, and trunk/hypernetwork realized by finite ReLU networks (with linear output for the hypernetwork). Let $\mathcal{H}_{\mathrm{concat}}$ be as in Theorem~A.1.
Then
$$
\mathcal{H}_{\mathrm{hyper}}^{\mathrm{bn}}\not\subseteq\mathcal{H}_{\mathrm{concat}}.
$$
\end{theorem}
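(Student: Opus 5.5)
We reuse the witness $f^\star(s,z)=s\cdot z$ (with $d_s=d_z=1$) from Theorem~\ref{thm:hyper_expressiveness}. The proof has two parts: realize $f^\star$ exactly in $\Hhyperbn$, then reuse the Hessian argument to show $f^\star\notin\mathcal{H}_{\mathrm{concat}}$. The second part is identical to the non-membership step of Theorem~\ref{thm:hyper_expressiveness}. Any finite ReLU MLP on $[s;z]$ is CPWL, so its Hessian vanishes almost everywhere. By contrast, $\partial^2 f^\star/\partial s\,\partial z=1$ on the interior of $\mathcal{S}\times\mathcal{Z}$, which has positive measure. So all the work is in the membership construction.

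\textbf{Construction.} Take trunk dimension $d\ge 2$ and trunk $x(s)=[\mathrm{ReLU}(s),\mathrm{ReLU}(-s),0,\ldots,0]^\top$, a valid ReLU network. The difficulty is that the intermediate ReLU in~\eqref{eq:adapter_arch} acts on $W_{\mathrm{down}}x$, and that quantity now depends on $z$. We therefore put the $z$-dependence only in $W_{\mathrm{up}}$ and keep $W_{\mathrm{down}}$ constant. Let $W_{\mathrm{down}}$ be the fixed $k\times d$ matrix whose first two rows are $e_1^\top$ and $e_2^\top$, with all other rows zero; this uses $k\ge 2$. Then $\mathrm{ReLU}(W_{\mathrm{down}}x)=[x_1,x_2,0,\ldots]^\top$, because $x_1,x_2\ge 0$ and ReLU acts as the identity on them. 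Let $W_{\mathrm{up}}^{(\omega)}$ be the $d\times k$ matrix with entries $(1,1)=\omega_1$ and $(2,2)=\omega_2$ and zeros elsewhere. The hypernetwork outputs the constant entries of $W_{\mathrm{down}}$ together with $\omega_1=z-1$ and $\omega_2=-(z+1)$. This output is affine in $z$, hence realizable by a ReLU network with a linear output layer (a constant output is just a bias).

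\textbf{Verification.} With these choices $g_{\mathrm{adapter}}(x;\omega)=[(z-1)x_1,\,-(z+1)x_2,0,\ldots]^\top$. Adding the skip connection gives $\tilde{x}=[z\,\mathrm{ReLU}(s),\,-z\,\mathrm{ReLU}(-s),0,\ldots]^\top$. With $w=e_1+e_2$ and $b=0$, the output is $z(\mathrm{ReLU}(s)-\mathrm{ReLU}(-s))=sz$, exactly as in Theorem~\ref{thm:hyper_expressiveness}.

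\textbf{Main obstacle.} The delicate step is keeping the intermediate ReLU harmless while $W_{\mathrm{down}}$ is a generated weight. Making $W_{\mathrm{down}}$ context-independent and feeding it only nonnegative trunk coordinates resolves this. If the paper's parameterization forced $W_{\mathrm{down}}$ to be context-dependent, a fallback is to put the modulation into $W_{\mathrm{down}}$ instead, using $\mathrm{ReLU}(a)-\mathrm{ReLU}(-a)=a$ with four bottleneck units. The stated condition $k\ge 2$ suggests the simpler route above is the intended one.
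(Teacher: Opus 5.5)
Your proposal is correct and follows the paper's proof essentially line for line. You use the same witness $sz$ and the same trunk $(\mathrm{ReLU}(s),\mathrm{ReLU}(-s),0,\ldots,0)$. Your $W_{\mathrm{down}}$ is the same context-independent matrix selecting the two nonnegative coordinates, so the bottleneck ReLU acts as the identity. Your $W_{\mathrm{up}}$ has the same diagonal entries $z-1$ and $-(z+1)$, and you reuse the CPWL mixed-partial argument for non-membership. You also make explicit the requirement $d\ge 2$, which the paper's construction uses only implicitly.
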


\begin{proof}
We show that the witness $f^*(s,z)=s\cdot z$ (with $d_s=d_z=1$) belongs to $\Hhyperbn\setminus\mathcal{H}_{\mathrm{concat}}$.

\medskip\noindent
\textbf{Non-membership in $\mathcal{H}_{\mathrm{concat}}$.} 
This is proved in Theorem~\ref{thm:hyper_expressiveness} via the mixed-partial argument:
$\frac{\partial^2 f^*}{\partial s\,\partial z}=1\neq 0$ everywhere,
which is incompatible with CPWL functions on any domain with non-empty
interior.

\medskip\noindent
\textbf{Membership in $\Hhyperbn$.}
We exhibit an explicit construction with $k\geq 2$.
Define the trunk as a ReLU network $x:\R\to\R^d$:
$$x(s)=\bigl(\mathrm{ReLU}(s),\;\mathrm{ReLU}(-s),\;0,\ldots,0\bigr)^\top\in\R^d.$$
Note that $x_1(s)=\mathrm{ReLU}(s)\geq 0$ and $x_2(s)=\mathrm{ReLU}(-s)\geq 0$ for all $s\in\R$.

Define the hypernetwork output $\omega=h_\eta(z)$ to parameterize:
\begin{align}  W_{\mathrm{down}}^{(\omega)}&:\R^d\to\R^k,\qquad (W_{\mathrm{down}})_{1,1}=1,\;(W_{\mathrm{down}})_{2,2}=1,\; \text{all other entries }0. \label{eq:Wdown} \\
W_{\mathrm{up}}^{(\omega)}&:\R^k\to\R^d,\qquad (W_{\mathrm{up}})_{1,1}=\alpha_1(z),\;
  (W_{\mathrm{up}})_{2,2}=\alpha_2(z),\;
  \text{all other entries }0, \label{eq:Wup}
\end{align}
where $\alpha_1(z)$ and $\alpha_2(z)$ are to be determined.

Since $x_1(s),x_2(s)\geq 0$, and $W_{\mathrm{down}}$ selects these two coordinates, the input to the ReLU in~\eqref{eq:adapter_arch} is
$$W_{\mathrm{down}} x(s) = \bigl(x_1(s),\;x_2(s),\;0,\ldots,0\bigr)^\top\in\R^k.$$
Since all components are non-negative,
$\mathrm{ReLU}(W_{\mathrm{down}} x(s))=W_{\mathrm{down}} x(s)$. Thus the ReLU acts as the identity on these inputs, and the adapter
computes:
$$g_{\mathrm{adapter}}(x;\omega) = W_{\mathrm{up}}\,W_{\mathrm{down}}\,x(s) = \bigl(\alpha_1(z)\,x_1(s),\;\alpha_2(z)\,x_2(s),\;0,\ldots,0\bigr)^\top.$$
With the skip connection, the adapted features are:
$$\tilde{x}(s) = x(s)+g_{\mathrm{adapter}}(x(s);\omega) = \bigl((1+\alpha_1(z))\,\mathrm{ReLU}(s),\; (1+\alpha_2(z))\,\mathrm{ReLU}(-s),\;0,\ldots,0\bigr)^\top.$$
Setting $w=(1,1,0,\ldots,0)^\top$ and $b=0$:
$$f(s,z) = w^\top\tilde{x}(s) =  (1+\alpha_1(z))\,\mathrm{ReLU}(s)+ 1+\alpha_2(z))\,\mathrm{ReLU}(-s).$$
Using $\mathrm{ReLU}(s)-\mathrm{ReLU}(-s)=s$, we require:
$$1+\alpha_1(z)=z,\qquad 1+\alpha_2(z)=-z.$$
This gives $\alpha_1(z)=z-1$ and $\alpha_2(z)=-(z+1)$, both affine in $z$. The map $z\mapsto(\alpha_1(z),\alpha_2(z))=(z-1,-(z+1))$ is affine and therefore realizable by a ReLU network with linear output. All other entries of $W_{\mathrm{down}}$ are context-independent constants
(0 or 1), also trivially realizable.

Therefore $f(s,z)=z\cdot\mathrm{ReLU}(s)-z\cdot\mathrm{ReLU}(-s)=sz=f^*(s,z)$,
and $f^*\in\Hhyperbn$.

Combining Steps~1 and~2: $f^*\in\Hhyperbn\setminus\mathcal{H}_{\mathrm{concat}}$, hence $\Hhyperbn\not\subseteq\mathcal{H}_{\mathrm{concat}}$.
\end{proof}

Theorem~\ref{thm:bottleneck-separation} shows that the separation
$\Hhyperbn\not\subseteq\mathcal{H}_{\mathrm{concat}}$ holds. The proof works precisely because the witness requires only rank~2, and the bottleneck ReLU acts as the identity on the relevant non-negative inputs. In general, the bottleneck ReLU can implement additional nonlinear transformations that the linear adapter cannot, so $\Hhyperbn$ can represent functions beyond those in $\Hhyperlin$ (provided the rank-$k$ bottleneck suffices for the relevant subspace). While the blanket containment $\Hhyperlin\subseteq\Hhyperbn$ fails, the separation from $\mathcal{H}_{\mathrm{concat}}$ transfers because the specific witness construction fits within the bottleneck's capacity.

Table~\ref{tab:consolidatedAER} shows that DMA*-SH outperforms the standard concatenation baseline Concat by 11.5\% on eval-out, consistent with the architectural advantage identified by our results.

The separation results support a plausible mechanism for improved generalization. Hypernetworks can represent multiplicative interactions (including bilinear terms) exactly in simple cases, while concatenation MLPs typically approximate them via increasingly fine CPWL partitions. This approximation error may compound over trajectories, particularly in non-overlapping contexts where optimal policies differ drastically across modes.

\begin{remark}[Hypernetwork advantage for actuator inversion]\label{rem:expressiveness}
Actuator inversion provides a concrete illustration of the expressive gap (Definition~\ref{def:OverlapNonoverlap}). Suppose the environment contains a latent binary context $c \in \{-1, +1\}$, and the optimal policy satisfies $\pi^\star(s, c) = c \cdot \pi_{\mathrm{base}}(s)$, a sign flip in the action space. Since the agent does not observe $c$ directly, DMA*-SH infers a continuous embedding $z_t = g_\phi(\tau_t^c)$ whose values cluster into two well-separated regions corresponding to the two actuator modes.

A hypernetwork can map these two embedding regions to adapter parameters $\omega^{(+)}$ and $\omega^{(-)}$ such that the induced mean actions satisfy $\mu_{\omega^{(-)}}(s) \approx -\mu_{\omega^{(+)}}(s)$. The adapter can induce opposite-signed mean actions by changing the effective mapping from trunk features to action means between the two modes (through the adapter–head composition), i.e., producing adapted features $\tilde{x}$ such that $A\tilde{x}^{(-)} \approx -A\tilde{x}^{(+)}$ for the linear head matrix $A$. 
More generally, whenever the hypernetwork generates weights for a linear adapter layer, and these weights depend locally affinely on $z$, the resulting map is locally bilinear in $(x, z)$. By Theorem~\ref{thm:hyper_expressiveness}, such bilinear structure cannot be represented exactly by concatenation ReLU MLPs.

Concatenation-based policies \emph{can} approximate mode-switching behavior via CPWL partitioning: the network learns decision boundaries separating context regions and implements different linear maps on each piece. However, this indirect representation is more sensitive to encoder noise (small perturbations in $z$ near a decision boundary cause large policy changes) and less parameter-efficient than direct multiplicative modulation. Table~\ref{tab:aer} confirms that DMA*-SH substantially outperforms concatenation baselines on non-overlapping environments where this architectural advantage is most critical.
\end{remark}

\begin{remark}[Parameter, compute, and memory overhead]
\label{rem:param-compute-overhead}
DMA*-SH introduces additional parameters through the hypernetwork, but the generated modulation is deliberately low-rank and shared.
For trunk feature dimension $d$ and adapter bottleneck dimension $k$, applying the adapter adds $O(dk)$ compute per adapted module, rather than the $O(d^2)$ cost of a full context-conditioned layer.

The most relevant practical comparison is to DA, the closest context-aware hypernetwork baseline. Measured under the same hardware, batch size, precision, and update schedule, DMA*-SH uses only $4.1\%$ more parameters than DA, trains $43.5\%$ faster, and uses $24.4\%$ less peak memory. This indicates that sharing one context-to-adapter map across dynamics, actor, and critic is more efficient than maintaining separate hypernetwork pathways. Relative to simpler baselines without a hypernetwork, the cost is larger: compared with DMA, DMA*-SH uses $9.5\%$ more parameters, trains $79.2\%$ slower, and uses $21.7\%$ more peak memory.
We view this as the computational price of the dynamics-grounded operator modulation that drives the gains on non-overlapping contexts.

This overhead should be interpreted together with the expressivity result: concatenation-based ReLU networks can only approximate multiplicative context-dependent transformations through CPWL partitioning, whereas hypernetwork-conditioned adapters implement the relevant operator modulation directly. 
Empirically, DMA*-SH converges in comparable or fewer training steps than concatenation baselines (Figure~\ref{fig:returns}) while achieving superior zero-shot performance on non-overlapping contexts (Table~\ref{tab:aer-eval-out}), indicating that the expressive advantage outweighs the parameter overhead.
\end{remark}

\subsection{Policy Overlap: Overlapping and Non-Overlapping Contexts}\label{app:overnonover}

We categorize contextual families by the degree of \textit{policy overlap}: whether a single context-unaware policy can achieve near-optimal performance across all contexts. We formalize this using a normalized worst-case regret criterion:

\begin{definition}[Overlapping and Non-Overlapping Contexts]\label{def:OverlapNonoverlap}
Let $\mathcal{C}$ be a set of contexts for environment $E$. For each $c \in \mathcal{C}$, let $P^{c}$ be the transition dynamics and let $\pi^{*}_{c}$ be an optimal policy achieving return $J_c(\pi) = \mathbb{E}_{\pi, P^c}\!\left[ \sum_{t=0}^\infty \gamma^t r(s_t, a_t) \right]$. We normalize returns as $\bar J_c(\pi)\coloneqq ({J_c(\pi)-J^{\mathrm{lo}}_E})/({J^{\mathrm{hi}}_E-J^{\mathrm{lo}}_E})$, where $J^{\mathrm{lo}}_E$ and $J^{\mathrm{hi}}_E$ are fixed return bounds for $E$ with $J^{\mathrm{hi}}_E > J^{\mathrm{lo}}_E$ (Table~\ref{tab:envs}). We say $\mathcal{C}$ is \emph{non-overlapping} if there exists $\epsilon > 0$ such that for every context-unaware policy $\pi$,
$$
\max_{c\in\mathcal C}\bigl(\bar J_c(\pi^{*}_{c})-\bar J_c(\pi)\bigr)\ge \epsilon.
$$
Otherwise, $\mathcal{C}$ is \emph{overlapping}.
\end{definition}

A policy is \emph{context-unaware} if it conditions only on the observed state (e.g., $\pi(a\mid s)$) and does not take the context (or any inferred context representation) as input. Intuitively, in \textit{overlapping} families there exist context-unaware policies whose worst-case normalized regret can be made arbitrarily small, i.e., a single robust policy can approach per-context optimal performance across the family. In \textit{non-overlapping} families, optimal behaviors are mutually incompatible: every single context-unaware policy incurs a regret bounded away from zero in at least one context, and explicit context conditioning is necessary to be near-optimal across modes.
Definition~\ref{def:OverlapNonoverlap} is conceptual; our overlapping and non-overlapping labels in Table~\ref{tab:envs_summary} are operational, based on empirical behavior of strong context-unaware baselines and actuator-inversion incompatibility.

\begin{remark}[Non-overlapping functional modes in practice]\label{rem:Nonovpract}
Reversed wiring, swapped control channels, coordinate-frame mismatches, discrete gear-ratio shifts, and sim-to-real actuator remappings are natural examples of operator-switched dynamics that can exhibit non-overlap in the sense of Definition~\ref{def:OverlapNonoverlap}. See also Section~\ref{sec:Compbench}.
\end{remark} 

We use actuator inversion as the canonical generator of non-overlapping structure (in the sense of Definition~\ref{def:OverlapNonoverlap}) as it induces incompatible optimal policies across modes. Formally:
\begin{definition}[Actuator inversion]\label{def:actInv}
Assume the action space is sign-symmetric so that $c\cdot a \in \mathcal{A}$ for all $a\in\mathcal{A}$ and $c\in\{\pm 1\}$. A binary context $c\in\{\pm 1\}$ defines actuator-inverted dynamics by
$$
P^c(s_{t+1}\mid s_t,a_t)=P(s_{t+1}\mid s_t, c\cdot a_t),
$$
where $P$ denotes the nominal dynamics. We focus on tasks where this inversion induces qualitatively different optimal control laws across $c=\pm 1$, for example when the optimal mean action satisfies a sign-flip relation across contexts: $\pi^*(s, -1) \approx -\pi^*(s, +1)$.
\end{definition}

In the task class considered above, the optimal policies across modes often differ approximately by a sign flip. Hypernetwork-conditioned adapters can represent such mode-dependent transformations directly, whereas concatenation-based methods must approximate them via CPWL partitioning (Theorem~\ref{thm:hyper_expressiveness}), a representation that is less robust when the inferred context $z_t$ is noisy.

We examine how overlapping and non-overlapping context structures influence task difficulty and the stability of learned policies.

\subsubsection{Context-Aware (e.g., Concat/DA Baselines)}\label{app:overnonover2}

\begin{itemize}[leftmargin=*]
    \item \textit{Overlapping} contexts (\textbf{Easy})

    The policy class $\Pi_{\text{aware}} = \{\pi(a \mid s, c)\}$ is the set of all \textit{context-aware} policies that depend on the ground-truth context $c$. Since the functions $\pi^*(s, c)$ are similar for different $c$, the agent can smoothly vary its behavior based on $c$. The complexity is effectively that of $|\mathcal{C}_{\text{train}}|$ separate policies, but shared structure across contexts can facilitate learning and enable generalization to $\mathcal{C}_{\text{eval-out}}$ via continuity.

    \item \textit{Non-overlapping} contexts 
    (\textbf{Solvable with context})

    When the agent has access to the ground-truth binary context $c \in \{\pm 1\}$, it can directly gate between mode-specific behaviors. In actuator inversion, the optimal policies across modes often satisfy a sign relation $\pi^*(s,-1) \approx -\pi^*(s,+1)$, which can be represented either by explicit gating on $c$ or by mapping $c$ to distinct adapter parameters. Hypernetwork-conditioned adapters provide a parameter-efficient way to implement mode-dependent transformations, but the key difficulty in our setting arises when the context must be inferred noisily (Section~\ref{app:overnonover3}).
\end{itemize}

\subsubsection{Context-Unaware (e.g., Standard DR with Markovian Policy)}\label{app:overnonover1}
\begin{itemize}[leftmargin=*]
    \item \textit{Overlapping} contexts (\textbf{Solvable})
    
    The policy class $\Pi_{\text{unaware}}$ is the set of all \textit{context-unaware} Markovian policies $\pi(a\mid s)$ that do not receive the context $c$, an inferred context embedding $z_t$, or trajectory history $\tau_t$ as input. If the contexts are overlapping, the optimal policies $\pi^*_c$ for different $c$ are similar. Therefore, a single policy $\pi \in \Pi_{\text{unaware}}$ can exist that is near-optimal for all $c \in \mathcal{C}$. The agent is effectively solving a single, slightly broader MDP.
    
    \item \textit{Non-overlapping} contexts (\textbf{Inherently limited}) 

    By Definition~\ref{def:OverlapNonoverlap}, every context-unaware Markovian
    policy $\pi(a\mid s)$ must incur normalized regret at least $\epsilon$ in some
    context $c\in\mathcal C$. Thus a state-only policy is forced to compromise
    between incompatible context-specific control laws.
\end{itemize}

\begin{lemma}[Failure of Markovian DR under actuator inversion]
\label{lem:DRfail}
Assume a symmetric action space and contexts $c\in\{\pm1\}$ with dynamics
$P^c(s'\mid s,a)=P(s'\mid s,c\cdot a)$. For a stochastic policy $\pi(a\mid s)$, define the negated policy $-\pi$ by $(-\pi)(a\mid s) \doteq \pi(-a\mid s)$. Assume the return satisfies the actuator-inversion symmetry $J_{-1}(\pi)=J_{+1}(-\pi)$ for all state-only Markovian policies $\pi$. Suppose further that there exists $\Delta\geq 0$ such that, for every $\pi\in\Pi_{\text{unaware}}$,
\begin{align}
\label{eq:DRDelta}
J_{+1}(\pi)+J_{+1}(-\pi)\leq 2\Delta .
\end{align}
Then every every $\pi\in\Pi_{\text{unaware}}$ has domain-randomized objective
$$
\mathbb{E}_{c\sim\mathrm{Unif}\{\pm1\}}[J_c(\pi)]\leq \Delta .
$$
Consequently, if $\Delta$ is small compared with the per-context optima, then
Markovian DR cannot achieve high average return.
\end{lemma}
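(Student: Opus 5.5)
The claim follows by direct computation; the only real work is making sure each step is justified by the stated hypotheses. Fix an arbitrary $\pi\in\Pi_{\text{unaware}}$. The DR objective under the uniform context prior expands as
\begin{align*}
\mathbb{E}_{c\sim\mathrm{Unif}\{\pm1\}}[J_c(\pi)] = \tfrac12\bigl(J_{+1}(\pi)+J_{-1}(\pi)\bigr).
\end{align*}
The actuator-inversion symmetry $J_{-1}(\pi)=J_{+1}(-\pi)$ turns this into $\tfrac12\bigl(J_{+1}(\pi)+J_{+1}(-\pi)\bigr)$. The hypothesis \eqref{eq:DRDelta} then bounds it by $\Delta$. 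Since $\pi$ was arbitrary, the bound holds uniformly over $\Pi_{\text{unaware}}$, and therefore also for the DR maximizer.

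Before running this computation, I would check that $-\pi$ is itself a well-defined element of $\Pi_{\text{unaware}}$. If it were not, neither the symmetry hypothesis nor \eqref{eq:DRDelta} could be applied to it. This is where the sign-symmetric action space enters (Definition~\ref{def:actInv}). Because $a\mapsto -a$ is a bijection of $\mathcal{A}$, the map $(-\pi)(a\mid s)=\pi(-a\mid s)$ is again a valid conditional distribution on $\mathcal{A}$. For continuous actions, the Jacobian of $a\mapsto -a$ has absolute determinant $1$, so densities transform without rescaling. The negated policy also depends only on $s$, so it is state-only Markovian.

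Optionally, I would note why the symmetry hypothesis is natural rather than merely assumed. Consider running $\pi$ under $P^{-1}(\cdot\mid s,a)=P(\cdot\mid s,-a)$. The environment experiences the effective action $-a$, whose law given $s$ is that of $-\pi$. Under nominal dynamics this produces the same state process as running $-\pi$ in $P^{+1}$. If the reward is also invariant under $a\mapsto -a$, for instance $r$ depends on $\|a\|$ or only on states, the two returns coincide. Since the lemma takes $J_{-1}(\pi)=J_{+1}(-\pi)$ as an assumption, this remark is only motivation and not part of the proof.

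The consequence clause is an interpretation rather than a separate claim. Suppose $\Delta$ is well below $\tfrac12\bigl(J_{+1}(\pi^*_{+1})+J_{-1}(\pi^*_{-1})\bigr)$. Then every Markovian DR policy has average regret bounded below by this gap. By averaging, this gives a worst-case regret of at least the same size, which matches non-overlap in Definition~\ref{def:OverlapNonoverlap} after normalization.
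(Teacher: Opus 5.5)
Your proof is correct and matches the paper's argument: write the uniform average as $\tfrac12\bigl(J_{+1}(\pi)+J_{-1}(\pi)\bigr)$, apply the actuator-inversion symmetry, invoke \eqref{eq:DRDelta}, and pass to the supremum over $\Pi_{\text{unaware}}$. Your additional checks are sound points that the paper leaves implicit: that $-\pi$ is a well-defined state-only policy, and that the symmetry holds when the reward is invariant under $a\mapsto -a$.
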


Condition~\eqref{eq:DRDelta} is a sufficient condition for failure of Markovian DR. It formalizes the intuition that a policy and its negation cannot both make progress when the same action has opposite effects across contexts. This condition holds when reversing effective actuation reverses task progress, as in goal-reaching under actuator inversion. The empirical failure of DR on non-overlapping AIB environments (Table~\ref{tab:aer}) is consistent with this analysis.
Lemma~\ref{eq:DRDelta} does not rule out recurrent or transformer DR policies that infer the hidden context from trajectory history; those are addressed empirically by the Amago baselines; see Appendix~\ref{sec:TxrecurDMASH} and Figure~\ref{fig:iqm-hyper}.

\begin{proof}
For any fixed $\pi\in\Pi_{\mathrm{unaware}}$,
\begin{align*}
J_{\mathrm{DR}}(\pi)
&= \tfrac12\bigl(J_{+1}(\pi) + J_{-1}(\pi)\bigr)
= \tfrac12\bigl(J_{+1}(\pi) + J_{+1}(-\pi)\bigr)
\le \tfrac12\cdot 2\Delta = \Delta,
\end{align*}
where the second equality uses the actuator-inversion symmetry and the inequality
uses condition~\eqref{eq:DRDelta}. Since the bound holds for every such $\pi$,
it also holds after maximizing over $\Pi_{\mathrm{unaware}}$.
\end{proof}

\begin{remark}[Context-unaware policies are epistemic POMDP solvers]
When the policy is context-unaware (as in Markovian DR), the problem becomes an \textit{epistemic POMDP}~\citep{ghosh2021generalization}: the true state is $(s_t, c)$, but the agent only observes $s_t$ and must implicitly maintain a belief over the hidden context $c$. Thus, unknown contexts induce partial observability even when the raw state is fully observed. For overlapping contexts (e.g., small changes in mass or friction), the dynamics $P^c$ vary smoothly. Small belief errors lead to small prediction errors, so the induced belief-MDP remains easy to optimize. For non-overlapping contexts (e.g., actuator inversion $c = \pm 1$), the dynamics for different contexts are mutually incompatible. Even slight uncertainty over $c$ yields drastically different predictions for $s_{t+1}$. 
The context-averaged one-step kernel
$$
\bar{P}(s_{t+1} \mid s_t, a_t) = \mathbb{E}_{c \sim \mathcal{C}_{\text{train}}}\!\left[ P^c(s_{t+1} \mid s_t, a_t) \right],
$$
can be mixture-like (often multimodal or high-variance).
A context-unaware policy is therefore forced to average over contradictory behaviors, often producing low return. Maintaining a high-confidence belief under such conditions requires a sharp separation in the agent’s internal representation, a representation that is both difficult to learn and highly sensitive to noise, leading to higher optimization variance and poorer generalization.  

Providing the agent with an accurately inferred context signal $z_t = g_\phi(\tau_t^c)$ (as in DMA*-SH) sidesteps the epistemic POMDP problem: the policy can condition directly on the correct mode instead of hedging across incompatible ones. This explains the dramatic performance gap of DMA*-SH on non-overlapping benchmarks (Table~\ref{tab:aer}), while the advantage often disappears on overlapping benchmarks where even context-unaware baselines can succeed.
\end{remark}

\subsubsection{Context-Inferred (Our Method, DMA*-SH)}\label{app:overnonover3}

\begin{itemize}[leftmargin=*]
    \item \textit{Overlapping} contexts (\textbf{Moderately difficult})

    The \textit{context-inferred} policy class $\Pi_{\text{inferred}} = \{\pi(a \mid s, z)\}$ depends on the inferred context $z$.
    The agent must solve two coupled problems: 
    \begin{enumerate}
        \item \emph{Context inference}: infer $z$ from a window of past $K$ transitions $\tau = \{(s_k, a_k, \delta s_{k+1})\}$ via the encoder $g_\phi$, and
        \item \emph{Control}: learn the policy $\pi(s, z)$.
    \end{enumerate}
    Inference difficulty scales inversely with context distinguishability. Since the dynamics differ only mildly across contexts, the inferred representation $z$ may be noisy or weakly informative. However, the control problem is comparatively easier: small errors in $z$ induce only small policy deviations, so errors degrade performance smoothly.

    \item \textit{Non-overlapping} contexts (\textbf{Very difficult})

    In our benchmark family, this is the most brittle setting for joint inference and control.  
    Non-overlapping contexts provide strong statistical signals for inference (high Informativeness, e.g., large $I(\tau; c)$), so in principle the encoder can recover $c$ from few transitions. In practice, however, even tiny inference errors are catastrophic: misclassifying $c=+1$ as $c=-1$ induces the \emph{opposite} control law, and the agent immediately fails. Learning is therefore brittle unless the encoder $g_\phi(\tau)$ produces accurate embeddings on most episodes.
    This creates a difficult credit-assignment loop during joint training.

    Encoder imprecision may arise from finite window size $K$ (partial observability), stochasticity in $P^c$ (e.g., sensor noise), or approximation limits of $g_\phi$. In non-overlapping regimes, such small errors are amplified severely in RL performance (e.g., through error propagation in value targets or large policy regret), since the failure modes are binary with no ``graceful'' degradation. The brittleness is worse for concatenation-based baselines, which must learn hard boundaries in their inputs, whereas hypernetwork-conditioning (as in DMA*-SH) effectively captures the multiplicative structure of actuator inversion (Theorem~\ref{thm:hyper_expressiveness}). 
\end{itemize}

\begin{remark}[The smoothness inductive bias of latent-variable models in meta-RL]\label{rem:VariBAD}
In variational latent-variable meta-RL methods for POMDPs (e.g., VariBAD \citep{zintgraf2020varibad}), the agent infers a latent context $z$ from experience and regularizes this inference with an explicit prior.
Typically, the optimization involves an ELBO of the form
$$
\mathcal{L}_{\mathrm{ELBO}} = \mathbb{E}_{q_\phi(z \mid \tau)}[\log p_\theta(\tau \mid z)] - \mathrm{KL}\!\left(q_\phi(z \mid \tau)\;\|\;p(z)\right),
$$
where both the posterior $q_\phi(z \mid \tau)$ and the prior $p(z)$ are \textit{unimodal} Gaussians.  
The KL term encourages $q_\phi(z\mid\tau)$ to stay close to the prior $p(z)$, which in common instantiations is a unimodal Gaussian (e.g., $p(z)=\mathcal{N}(0,I)$). 

In actuator inversion, good representations benefit from mapping the two modes to well-separated latent regions (e.g., approximately opposite signs). A unimodal Gaussian prior places most mass near the origin, so KL regularization biases the encoder toward posteriors that allocate non-negligible probability to intermediate latent values that do not correspond cleanly to either mode.
This mismatch can yield unstable training signals: small errors in $z_t$ can induce the wrong action sign and large performance drops. This is consistent with the empirical failures of VariBAD on ODE and DI tasks (Figure~\ref{fig:returns_varibad}).

In contrast, DMA*-SH avoids smooth latent priors by representing context through multiplicative hypernetwork modulation. Given an inferred embedding $z_t$, the hypernetwork generates adapter weights $\omega=h_\eta(z_t)$. This allows the policy and critic to implement sharp mode-dependent transformations, e.g., approximating a sign flip between the two mode clusters,
$$
\pi_{\xi,\omega}(s) \approx - \pi_{\xi,\omega'}(s),
$$
which matches the multiplicative structure of actuator inversion. These multiplicative interactions provide an inductive bias that is often helpful in non-overlapping regimes where ELBO-regularized latent models struggle empirically.
\end{remark}

\subsection{Informativeness, Variability, and Gradient Variance}
\label{app:varcompress}

We analyze the geometry of learned context representations using two complementary diagnostics: \emph{Variability} (representation spread) and \emph{Informativeness} (mutual information between representation and context). We then link Variability to learning stability by deriving a policy-gradient variance bound controlled by Variability. For binary--continuous contexts, this variance-based lens also motivates an \emph{implicit structural information bottleneck} interpretation: preserving mode information while selectively compressing within-mode variation.

\subsubsection{Decomposing Variability into Noise, Compression, and Separation} \label{app:VariabilityTh}

We relate the Variability of inferred embeddings $z$ 
(Section~\ref{sec:variability}) to the geometry of the learned representation space. In non-overlapping AIB tasks, the context can be factored as $C=(S,U)$ 
where $S\in\{\pm1\}$ is the actuator-inversion mode and $U$ is a continuous parameter (e.g., mass in DI). Below, we treat $(S,U,\mathcal{T},Z)$ as random variables under the rollout distribution induced by a fixed policy (i.e., at a fixed training iterate), with lowercase denoting realizations (e.g., $\tau$ is a realization of $\mathcal{T}$ and $z$ of $Z$). $\mathcal{T}$ denotes a trajectory window of length $K$ and $Z \doteq g_\phi(\mathcal{T})\in\mathbb{R}^{d_z}$ is the inferred embedding.

Fix one dataset split $M\in\{\text{train},\text{eval-in},\text{eval-out}\}$, and let $\mathcal{D}_M$ denote the induced empirical distribution over trajectory windows obtained by pooling all contexts in that split. Each window $\mathcal{T}\sim\mathcal{D}_M$ is generated by the policy interacting with the CMDP under ground-truth context $(S,U)$. All expectations below are with respect to this split-induced joint distribution. Variability is the average per-coordinate variance of embeddings pooled across the split:

\begin{definition}[Variability]\label{def:Variability}
The Variability of split $M$ is
\begin{align}
\label{eq:variability_def}
\mathrm{Variability}(M) \doteq \frac{1}{d_z}\sum_{i=1}^{d_z}\mathrm{Var}[Z_i] 
= \frac{1}{d_z}\operatorname{tr}\!\big(\mathrm{Cov}[Z]\big).
\end{align}
\end{definition}

\begin{theorem}[Variability decomposition for binary-continuous contexts]\label{thm:var_decomp_SU}
Let $C=(S,U)$ be the ground-truth context with $S\in\{\pm1\}$ binary and $U$ a continuous (possibly vector-valued) random variable. Let $Z=g_\phi(\mathcal{T})\in\mathbb{R}^{d_z}$ be the embedding of a trajectory window $\mathcal{T}\sim\mathcal{D}_M$. Assume $\mathbb{E}\|Z\|_2^2<\infty$. Define $\bar z(s,u)\doteq\mathbb{E}[Z\mid S=s,U=u]$, $\mu_s\doteq\mathbb{E}_{U\mid S=s}[\bar z(s,U)]$, and $p\doteq\mathbb{P}(S=+1)$. Then
\begin{align}\label{eq:thm_decomp}
\mathrm{Variability}(M) &= \underbrace{\tfrac{1}{d_z}\mathbb{E}_{S,U}\!\Big[\operatorname{tr}\big(\mathrm{Cov}(Z\mid S,U)\big)\Big]}_{\text{$\mathrm{Term~1}$: within-context noise}}\nonumber\\
&+\underbrace{\tfrac{1}{d_z}\operatorname{tr}\!\Big(\mathbb{E}_{S}\big[\mathrm{Cov}_{U\mid S}(\bar z(S,U))\big]\Big)}_{\text{$\mathrm{Term~2}$: within-mode variation (continuous)}}\nonumber\\
&+\underbrace{\tfrac{p(1-p)}{d_z}\|\mu_+-\mu_-\|_2^2}_{\text{$\mathrm{Term~3}$: between-mode separation (binary)}}.
\end{align}
Moreover, if $u\mapsto\bar z(s,u)$ is $L_s$-Lipschitz on $\mathrm{supp}(U\mid S=s)$, then
\begin{align}\label{eq:lipschitz_bound}
\frac{1}{d_z}\operatorname{tr}\!\Big(\mathrm{Cov}_{U\mid S=s}(\bar z(s,U))\Big)
\le\frac{L_s^2}{d_z}\,\mathbb{E}\!\left[\|U-\mathbb{E}[U\mid S=s]\|_2^2\mid S=s\right].
\end{align}
If, additionally, $\bar z(s,\cdot)$ is differentiable on a convex set $\mathcal{U}_s\supseteq \mathrm{supp}(U\mid S=s)$, then $L_s\le\sup_u\|\partial\bar z(s,u)/\partial u\|_{\mathrm{op}}$.
\end{theorem}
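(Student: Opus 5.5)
The plan is to apply the law of total covariance twice, take traces, and then bound the within-mode term by a symmetrization argument using the Lipschitz assumption.

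\textbf{Step 1 (condition on the full context).} Since $\mathbb{E}\|Z\|_2^2<\infty$, all conditional covariances are well defined and trace-class. Conditioning on $(S,U)$, the law of total covariance gives
\[
\mathrm{Cov}[Z]=\mathbb{E}_{S,U}[\mathrm{Cov}(Z\mid S,U)]+\mathrm{Cov}_{S,U}(\bar z(S,U)).
\]
Taking $\frac1{d_z}\operatorname{tr}$ and exchanging trace with expectation (by linearity) yields Term~1.

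\textbf{Step 2 (condition on the mode).} Apply the law of total covariance again to the random vector $\bar z(S,U)$, now conditioning on $S$. The tower property gives $\mathbb{E}[\bar z(S,U)\mid S=s]=\mu_s$, so
\[
\mathrm{Cov}_{S,U}(\bar z)=\mathbb{E}_S[\mathrm{Cov}_{U\mid S}(\bar z(S,U))]+\mathrm{Cov}_S(\mu_S).
\]
The first summand is Term~2. For the second, $\mu_S$ is a two-point random vector taking value $\mu_+$ with probability $p$ and $\mu_-$ otherwise. Its mean is $p\mu_++(1-p)\mu_-$, so
\[
\mu_S-\mathbb{E}\mu_S=(1-p)(\mu_+-\mu_-)\ \text{ or }\ -p(\mu_+-\mu_-).
\]
It follows that $\mathrm{Cov}(\mu_S)=p(1-p)(\mu_+-\mu_-)(\mu_+-\mu_-)^\top$, whose trace is $p(1-p)\|\mu_+-\mu_-\|_2^2$. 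This gives Term~3 and completes \eqref{eq:thm_decomp}. The degenerate cases $p\in\{0,1\}$ are consistent, with one $\mu_s$ undefined but Term~3 equal to zero.

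\textbf{Step 3 (Lipschitz bound).} Fix $s$ and let $U,U'$ be i.i.d.\ from $U\mid S=s$. The symmetrization identity
\[
\operatorname{tr}\mathrm{Cov}(\bar z(s,U))=\tfrac12\mathbb{E}\|\bar z(s,U)-\bar z(s,U')\|_2^2
\]
combined with Lipschitzness gives at most $\tfrac{L_s^2}{2}\mathbb{E}\|U-U'\|^2=L_s^2\,\mathbb{E}\|U-\mathbb{E}[U\mid S=s]\|^2$. This matches \eqref{eq:lipschitz_bound} exactly, so no slack from the cruder route via $\mathbb{E}\|\bar z(s,U)-\bar z(s,m)\|^2$ is needed.

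If $U$ is not square integrable, the right-hand side is infinite and the bound is trivial. Finite second moment of $\bar z$ follows from Jensen.

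For the derivative clause, use the mean value inequality on the convex set $\mathcal U_s$:
\[
\|\bar z(s,u)-\bar z(s,u')\|\le\sup_{v}\|\partial_u\bar z(s,v)\|_{\mathrm{op}}\|u-u'\|,
\]
obtained by integrating the derivative along the segment $[u,u']\subset\mathcal U_s$. Hence the best Lipschitz constant on the support is at most this supremum.

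\textbf{Main obstacle.} The only delicate point is measure-theoretic. One must ensure that $\bar z(s,u)$ is a well-defined regular conditional expectation, so that the Lipschitz hypothesis makes sense pointwise on the support. One must also ensure that the nested conditioning in Step 2 is legitimate, i.e.\ that $\mathbb{E}[\bar z(S,U)\mid S]=\mu_S$. Both would be handled by working with regular conditional distributions, which exist because $Z$ takes values in $\mathbb{R}^{d_z}$.
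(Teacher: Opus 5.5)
Your proposal is correct and follows the paper's proof essentially step for step. The paper also applies the law of total covariance conditioned on $(S,U)$ and then on $S$, computes the two-point covariance $p(1-p)(\mu_+-\mu_-)(\mu_+-\mu_-)^\top$ explicitly, combines the independent-copy identity with Lipschitzness, and invokes the mean value inequality for the derivative clause. One small refinement: if $\bar z(s,\cdot)$ is merely differentiable, justify the mean value inequality by applying the scalar mean value theorem to $t\mapsto w^\top\bar z(s,u'+t(u-u'))$ for a suitable unit vector $w$. Integrating the derivative along the segment, as you propose, implicitly requires the derivative to be integrable there.
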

\begin{remark}[Geometric Intuition]\label{rem:Varbinterpret}
The decomposition~\eqref{eq:thm_decomp} is a statistical identity relating total embedding variance to three interpretable components. Its validity is independent of the encoder $g_{\phi}$ or the learning algorithm, and holds for any random variable $Z$ with the $(S,U,Z)$ joint structure. $\mathrm{Term~1}$ represents the \textit{representation noise floor}, $\mathrm{Term~2}$ quantifies the \textit{within-mode variation} induced by $U$ (and hence the extent to which a representation does \emph{not} collapse within-mode differences), and $\mathrm{Term~3}$ measures the \textit{separation} between binary actuator modes. Later, in Theorem~\ref{thm:pg_var_bound}, we formalize how $\mathrm{Variability}(M)$ (and hence its decomposition) enters an upper bound on policy-gradient variance.
\end{remark}

Our proofs rely on the law of total covariance. For any random vector $X$ and conditioning random variable $Y$:
\begin{align}\label{eq:total_cov}
\mathrm{Cov}(X) = \mathbb{E}_Y\big[\mathrm{Cov}(X \mid Y)\big] + \mathrm{Cov}_Y\big(\mathbb{E}[X \mid Y]\big).
\end{align}
Furthermore, we utilize the following independent-copy identity for the trace of the covariance: If $X \in \mathbb{R}^d$ is a random vector with finite second moments ($\mathbb{E}\|X\|_2^2 < \infty$), and $X'$ is an independent copy of $X$, then
\begin{equation}\label{eq:independent_copy_identity}
\operatorname{tr}(\mathrm{Cov}(X)) = \mathbb{E}\|X-\mathbb{E}X\|_2^2 = \tfrac{1}{2} \mathbb{E}\big[\|X - X'\|_2^2\big].
\end{equation}
To verify this, note that since $X'$ is independent of $X$ and identically distributed:
\begin{align*}
\mathbb{E}\big[\|X - X'\|_2^2\big] 
&= \mathbb{E}\big[\|X\|_2^2\big] - 2\mathbb{E}[X]^\top\mathbb{E}[X'] + \mathbb{E}\big[\|X'\|_2^2\big]\\ 
&= 2\mathbb{E}\big[\|X\|_2^2\big] - 2\|\mathbb{E}[X]\|_2^2\\
&= 2\operatorname{tr}(\mathrm{Cov}(X)).
\end{align*}
$\|\cdot \|_{\mathrm{op}}$ denotes the induced $L_{2}$ operator norm, and $\mathrm{supp}(P)$ denotes the support of the distribution $P$.

\begin{proof}
All expectations and covariances are under the distribution $\mathcal{T}\sim\mathcal{D}_M$ and induced $(S,U)$.

Apply \eqref{eq:total_cov} to $Z$ conditioned on $(S,U)$.
With $\bar z(S,U)=\mathbb{E}[Z\mid S,U]$,
\begin{align*}
\mathrm{Cov}(Z)=\mathbb{E}_{S,U}\big[\mathrm{Cov}(Z\mid S,U)\big]+\mathrm{Cov}_{S,U}\big(\bar z(S,U)\big).
\end{align*}
Taking $\frac{1}{d_z}\operatorname{tr}(\cdot)$ and using $\operatorname{tr}(\mathbb{E}[A])=\mathbb{E}[\operatorname{tr}(A)]$,
\begin{align}\label{eq:var_step1}
\mathrm{Variability}(M)=\frac{1}{d_z}\mathbb{E}_{S,U}\!\Big[\operatorname{tr}\big(\mathrm{Cov}(Z\mid S,U)\big)\Big]+\frac{1}{d_z}\operatorname{tr}\!\Big(\mathrm{Cov}_{S,U}\big(\bar z(S,U)\big)\Big).
\end{align}
The first term in~\eqref{eq:var_step1} is $\mathrm{Term~1}$ in~\eqref{eq:thm_decomp}. 

Apply \eqref{eq:total_cov} to $\bar z(S,U)$ conditioned on $S$:
\begin{align*}
\mathrm{Cov}_{S,U}\big(\bar z(S,U)\big)=\mathbb{E}_S\big[\mathrm{Cov}_{U\mid S}\big(\bar z(S,U)\big)\big]+\mathrm{Cov}_S(\mu_S),
\end{align*}
where $\mu_S=\mathbb{E}_{U\mid S}[\bar z(S,U)]$. Note that for each $s$, $\mu_s=\mathbb{E}_{U\mid S=s}[\bar z(s,U)]
=\mathbb{E}[Z\mid S=s]$ by iterated expectation. 
Taking $\frac{1}{d_z}\operatorname{tr}(\cdot)$ gives $\mathrm{Term~2}$ in~\eqref{eq:thm_decomp} plus the binary term.

For $S\in\{+1,-1\}$ with $\mathbb{P}(S=+1)=p$, let $\mu_\pm=\mu_{S=\pm1}$ and $m=p\mu_++(1-p)\mu_-$. Then $\mu_+-m=(1-p)(\mu_+-\mu_-)$ and $\mu_--m=-p(\mu_+-\mu_-)$, so
\begin{align*}
\mathrm{Cov}_S(\mu_S)&=p(\mu_+-m)(\mu_+-m)^\top+(1-p)(\mu_--m)(\mu_--m)^\top\\
&=\big[p(1-p)^2+(1-p)p^2\big](\mu_+-\mu_-)(\mu_+-\mu_-)^\top\\
&=p(1-p)(\mu_+-\mu_-)(\mu_+-\mu_-)^\top.
\end{align*}
Hence $\frac{1}{d_z}\operatorname{tr}(\mathrm{Cov}_S(\mu_S))=\frac{p(1-p)}{d_z}\|\mu_+-\mu_-\|_2^2$, which is $\mathrm{Term~3}$ in~\eqref{eq:thm_decomp}.

\paragraph{Lipschitz bound.}
Fix $s$ and let $f(u)=\bar z(s,u)$. By assumption, $f$ is $L_s$-Lipschitz on $\mathrm{supp}(U\mid S=s)$. Thus for all $u,v$, $\|f(u)-f(v)\|_2 \le L_s\|u-v\|_2$. 
Let $U'$ be an independent copy of $U$ under $U\mid S=s$. Applying the independent-copy identity~\eqref{eq:independent_copy_identity} and the Lipschitz property, we obtain:
\begin{align*}
\operatorname{tr}\!\Big(\mathrm{Cov}_{U\mid S=s}(f(U))\Big)
&= \tfrac{1}{2} \mathbb{E}\!\left[\|f(U)-f(U')\|_2^2\mid S=s\right] \\
&\le \frac{L_s^2}{2}\,\mathbb{E}\!\left[\|U-U'\|_2^2\mid S=s\right]
= L_s^2\,\mathbb{E}\!\left[\|U-\mathbb{E}[U\mid S=s]\|_2^2\mid S=s\right].
\end{align*}
Dividing by $d_z$, we obtain \eqref{eq:lipschitz_bound}.
When $\bar z(s,\cdot)$ is differentiable on a convex $\mathcal U_s\supseteq \mathrm{supp}(U\mid S=s)$, the mean value inequality implies
$L_s \le \sup_{u\in\mathcal U_s}\big\|\partial \bar z(s,u)/\partial u\big\|_{\mathrm{op}}$.
\end{proof}

\textbf{Within-mode compression and between-mode separation.}
Theorem~\ref{thm:var_decomp_SU} decomposes $\mathrm{Variability}(M)$ into three nonnegative contributions with distinct geometric meanings:
\begin{itemize}[leftmargin=*]
    \item \textbf{$\mathrm{Term~1}$: within-context noise (window-level spread)}. The first term $\tfrac{1}{d_z}\mathbb{E}_{S,U}\big[\operatorname{tr}\big(\mathrm{Cov}(Z\mid S,U)\big)\big]$ measures the average dispersion of embeddings across different trajectory windows at \emph{fixed} ground-truth context $(S,U)$. It captures stochasticity arising from partial observability, rollout randomness, finite window length, and encoder noise. So this term is the \emph{noise floor} that blurs any geometric organization induced by $\bar z(S,U)$.   Operationally, reducing $\mathrm{Term~1}$ can improve consistency of the context signal across trajectory windows.
    
    \item \textbf{$\mathrm{Term~2}$: within-mode mean-variation (compression of $U$)}. The second term $\frac{1}{d_z}\operatorname{tr}\big(\mathbb{E}_{S}\big[\mathrm{Cov}_{U\mid S}(\bar z(S,U))\big]\big)$ quantifies how the mean embeddings $\bar z(S,U)$ move as the continuous parameter $U$ varies \emph{within} each actuator mode $S$. 
    By~\eqref{eq:lipschitz_bound}, small Lipschitz sensitivity $L_s$ of $u\mapsto \bar z(s,u)$ guarantees that the within-mode mean-variation is controlled by $L_s^2$ times the intrinsic spread of $U\mid S=s$. Intuitively, when $L_s$ is small, varying $u$ within a fixed mode $s$ can shift the mean embedding $\bar z(s,u)$ only marginally, so the mean representation is nearly ``flat'' in the $U$-direction. In this sense, \textit{compressing} $U$ amounts to making the within-mode mean embedding insensitive to changes in $u$.
    Separately, Proposition~\ref{prop:boundary} shows that perfect information-theoretic compression $I(Z;U\mid S)=0$, is sufficient but not necessary for $\mathrm{Term~2}=0$.

    \item \textbf{$\mathrm{Term~3}$: between-mode mean-separation (separation across $S$)}. 
    The third term $\tfrac{p(1-p)}{d_z}\|\mu_+-\mu_-\|_2^2$  (where $\mu_s=\mathbb{E}_{U\mid S=s}[\bar z(s,U)]$) is the binary mode \textit{separation} induced by actuator inversion. Operationally, larger $\mathrm{Term~3}$ signifies stronger mode separation, increasing as the actuator modes occupy more distinct regions in the representation space. $\mathrm{Terms~2}$ and~$3$ can move in opposite directions: the encoder can simultaneously compress $U$ within each mode (small $\mathrm{Term~2}$) while separating the actuator modes (large $\mathrm{Term~3}$).
\end{itemize}

\subsubsection{Variability controls policy-gradient variance}\label{sec:varcompressPG}
We establish a formal connection between the Variability decomposition (Theorem~\ref{thm:var_decomp_SU}) and policy-gradient variance, and use this connection to resolve the apparent Informativeness--Variability paradox in Section~\ref{sec:variability} (Figure~\ref{fig:variability}).

\begin{theorem}[Policy-gradient variance bound controlled by Variability]\label{thm:pg_var_bound}
Let the ground-truth context be $C=(S,U)$ with $S\in\{\pm1\}$ and $U$ a continuous random variable. Let $Z = g_\phi(\mathcal T) \in\mathbb{R}^{d_z}$ be the inferred embedding and let $G\in\mathbb{R}^{d_\xi}$ denote a single-sample policy-gradient estimator. Assume:
\begin{enumerate}
\item[\textup{(A1)}] The embedding $Z$ and gradient $G$ have finite second moments: $\mathbb{E}\|Z\|_2^2 < \infty$ and $\mathbb{E}\|G\|_2^2 < \infty$.
\item[\textup{(A2)}] The conditional gradient variance is bounded: there exists $V_0 \ge 0$ such that
$\mathbb{E}\big[\operatorname{tr}(\mathrm{Cov}(G \mid Z))\big] \le V_0$.
\item[\textup{(A3)}] The conditional mean gradient $m(z) \doteq \mathbb{E}[G \mid Z = z]$ is $L_m$-Lipschitz on a compact set $\mathcal{Z}$ containing $\mathrm{supp}(Z)$.
\end{enumerate}
Then the total gradient variance satisfies:
\begin{align}\label{eq:pg_var_bound_main}
\operatorname{tr}(\mathrm{Cov}(G)) \le V_0 + L_m^2 \cdot d_z \cdot \mathrm{Variability}(M).
\end{align}
\end{theorem}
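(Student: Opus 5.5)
The plan is to apply the law of total covariance~\eqref{eq:total_cov} to the gradient estimator $G$, conditioning on the embedding $Z$, and then bound the two resulting pieces separately. By (A1) all covariances involved are finite. This gives
\begin{align*}
\operatorname{tr}(\mathrm{Cov}(G)) = \mathbb{E}\big[\operatorname{tr}(\mathrm{Cov}(G\mid Z))\big] + \operatorname{tr}\big(\mathrm{Cov}(m(Z))\big),
\end{align*}
where $m(Z)=\mathbb{E}[G\mid Z]$ and we used $\operatorname{tr}(\mathbb{E}[\cdot])=\mathbb{E}[\operatorname{tr}(\cdot)]$. The first term is bounded by $V_0$ directly by (A2), so all the work lies in the second (explained-variance) term.

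For the second term I would reuse the independent-copy argument already employed in the Lipschitz part of the proof of Theorem~\ref{thm:var_decomp_SU}. Let $Z'$ be an independent copy of $Z$. By the identity~\eqref{eq:independent_copy_identity} applied to $X=m(Z)$, we have $\operatorname{tr}(\mathrm{Cov}(m(Z)))=\tfrac12\mathbb{E}\|m(Z)-m(Z')\|_2^2$. This requires $m(Z)$ to have finite second moment, which follows from Jensen's inequality and $\mathbb{E}\|G\|_2^2<\infty$.

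Since $Z,Z'$ both lie in $\mathrm{supp}(Z)\subseteq\mathcal{Z}$ almost surely, (A3) gives $\|m(Z)-m(Z')\|_2\le L_m\|Z-Z'\|_2$. Hence
\begin{align*}
\operatorname{tr}(\mathrm{Cov}(m(Z)))\le \tfrac{L_m^2}{2}\mathbb{E}\|Z-Z'\|_2^2 = L_m^2\operatorname{tr}(\mathrm{Cov}(Z)),
\end{align*}
using~\eqref{eq:independent_copy_identity} again, now for $Z$. By Definition~\ref{def:Variability}, $\operatorname{tr}(\mathrm{Cov}(Z))=d_z\cdot\mathrm{Variability}(M)$, and combining this with the $V_0$ bound yields~\eqref{eq:pg_var_bound_main}.

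The only delicate point is measure-theoretic: $m$ is defined only up to null sets, so the Lipschitz assumption must be read as holding for a version of the conditional expectation on $\mathcal{Z}$. The independent-copy step also needs the Lipschitz inequality only on $\mathrm{supp}(Z)\times\mathrm{supp}(Z)$, which holds almost surely. I would state this convention explicitly and otherwise expect the argument to be routine. Finally, I would note that substituting Theorem~\ref{thm:var_decomp_SU} into the bound expresses the gradient-variance bound in terms of noise, within-mode variation, and separation.
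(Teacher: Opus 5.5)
Your proposal is correct and follows the paper's proof essentially step for step: the law of total covariance conditioned on $Z$, (A2) for the conditional term, Jensen to ensure $m(Z)$ is square-integrable, then the independent-copy identity applied first to $m(Z)$ and then to $Z$, with (A3) in between. Your remark about reading (A3) as holding for a suitable version of the conditional expectation is a harmless refinement that the paper leaves implicit.
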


Under Assumptions (A1)--(A3), Theorem~\ref{thm:pg_var_bound} upper-bounds the variance of the single-sample policy-gradient estimator by an irreducible term $V_0$ (capturing conditional sampling noise given $Z$) plus a term proportional to $\mathrm{Variability}(M)$. These are regularity conditions under the split-induced distribution $\mathcal{D}_M$: (A1) is plausible since SimNorm constrains embeddings to a compact set \citep{obando2025simplicial} and SAC uses bounded actions with gradient clipping \citep{haarnoja2018soft}; (A2) captures conditional stochasticity of single-sample gradients; (A3) assumes bounded sensitivity of $m(z)$ to changes in $z$, a standard regularization perspective \citep{gouk2021regularisation}. Accordingly, we  treat $V_0$ and $L_m$ as problem-dependent constants. We use Theorem~\ref{thm:pg_var_bound} as an interpretive upper bound linking representation spread to policy-gradient variance, not as a tight characterization of SAC's optimization dynamics. Figure~\ref{fig:gradcon} reports proxy diagnostics consistent with lower gradient-norm variability under DMA*-SH.

\begin{proof}
All expectations and covariances are taken under the split-induced distribution $\mathcal T\sim\mathcal{D}_M$ and the induced random variables $(S,U,Z,G)$ (conditional on fixed current parameters).

By Assumption~(A1), $\mathrm{Cov}(G)$ is well-defined. Similarly, \(\mathrm{Cov}(m(Z))\) is well-defined because $\|m(Z)\|_{2}^{2}$ is integrable; specifically, by Jensen's inequality and Assumption~(A1), 
$$
\mathbb{E}\|m(Z)\|_2^2=\mathbb{E}\|\mathbb{E}[G\mid Z]\|_2^2\le \mathbb{E}\mathbb{E}[\|G\|_2^2\mid Z]=\mathbb{E}\|G\|_2^2<\infty.
$$
Applying the law of total covariance to $G$ conditioned on $Z$:
\begin{align}\label{eq:Gradvardecomp}
\mathrm{Cov}(G) &= \underbrace{\mathbb{E}[\mathrm{Cov}(G\mid Z)]}_{\text{irreducible sampling noise even if } Z \text{ is fixed}} 
    +
    \underbrace{\mathrm{Cov}(\mathbb{E}[G\mid Z])}_{\text{variance coming from } Z \text{ moving around}} \nonumber\\
    &= \qquad \,\,\quad \mathbb{E}\big[\mathrm{Cov}(G \mid Z)\big] \quad\quad\,\,\,\, + \qquad \,\,\quad \mathrm{Cov}(m(Z)).
\end{align}
Taking traces and using linearity:
$\operatorname{tr}(\mathrm{Cov}(G)) = \mathbb{E}\big[\operatorname{tr}(\mathrm{Cov}(G \mid Z))\big] + \operatorname{tr}(\mathrm{Cov}(m(Z)))$.

By Assumption~(A2):
\begin{align}\label{eq:step1_bound}
\operatorname{tr}(\mathrm{Cov}(G)) \le V_0 + \operatorname{tr}(\mathrm{Cov}(m(Z))).
\end{align}
Let $Z'$ be an independent copy of $Z$. 
Applying the independent-copy identity~\eqref{eq:independent_copy_identity} with $X = m(Z)$: 
$$
\operatorname{tr}(\mathrm{Cov}(m(Z))) = \tfrac{1}{2}\mathbb{E}\big[\|m(Z) - m(Z')\|_2^2\big].
$$
By the Lipschitz Assumption~(A3):
$\|m(Z) - m(Z')\|_2^2 \le L_m^2\|Z - Z'\|_2^2$, hence
$\operatorname{tr}(\mathrm{Cov}(m(Z))) \le \frac{L_m^2}{2}\mathbb{E}\big[\|Z - Z'\|_2^2\big]$. 

Since $\mathbb{E}\|Z\|_2^2<\infty$ by Assumption~(A1), we may apply \eqref{eq:independent_copy_identity} with $X=Z$:
\begin{align}\label{eq:lipschitz_contraction}
\operatorname{tr}(\mathrm{Cov}(m(Z))) \le L_m^2 \cdot \operatorname{tr}(\mathrm{Cov}(Z)).
\end{align}

Substituting into~\eqref{eq:step1_bound}:
\begin{align}\label{eq:step2_bound}
\operatorname{tr}(\mathrm{Cov}(G)) \le V_0 + L_m^2 \cdot\operatorname{tr}(\mathrm{Cov}(Z)).
\end{align}
By Definition~\ref{def:Variability},
$\operatorname{tr}(\mathrm{Cov}(Z))=d_{z}\cdot \mathrm{Variability}(M)$. Substituting into~\eqref{eq:step2_bound} yields the stated bound~\eqref{eq:pg_var_bound_main}.
\end{proof}

\subsubsection{The Informativeness--Variability paradox: When less information improves learning}
\label{sec:varcompressParadox}

Informativeness $I(Z;S,U)$ measures how much information $Z$ carries about the \textit{full} context $C=(S,U)$. In non-overlapping AIB tasks, only part of that context may be \emph{decision-relevant} for stable control and learning:
\begin{enumerate}[leftmargin=*]
\item The binary mode $S$ is decision-critical (wrong sign can catastrophically flip the control law).
\item The continuous coordinate $U$ (e.g., mass) may be weakly relevant or even largely irrelevant \emph{within each mode} over the benchmark range, especially if robust control can handle that variation.
\end{enumerate}
Intuitively, one might expect that embeddings capturing more information about the full context $(S,U)$ would yield better RL performance. Proposition~\ref{prop:paradox} shows this intuition can fail: increasing informativeness can increase gradient variance by encoding within-mode variation that is not needed for control. This creates a paradoxical regime where an embedding with strictly \textit{lower} informativeness can achieve a strictly \textit{tighter} policy-gradient variance bound and can therefore learn more stably.
Proposition~\ref{prop:paradox} formalizes this possibility: 

\begin{proposition}[The Informativeness--Variability paradox]\label{prop:paradox}
Let the setting be as in Theorems~\ref{thm:var_decomp_SU} and~\ref{thm:pg_var_bound}. Assume:
\begin{enumerate}
\item[\textup{(B1)}] The binary mode is recoverable from the embedding: $H(S \mid Z) = 0$.
\item[\textup{(B2)}] The embedding carries nontrivial information about the continuous parameter beyond the mode: $I(Z; U \mid S) > 0$.
\end{enumerate}
Define the mode-collapsed embedding $\widetilde{Z} \in \mathbb{R}^{d_z}$ by $\widetilde{Z} \doteq \mu_S$, where $\mu_s$ is as defined in Theorem~\ref{thm:var_decomp_SU}.
Then:
\begin{enumerate}[leftmargin=*]
    \item \emph{Informativeness strictly \textit{decreases}:} $I(\widetilde{Z}; S, U) < I(Z; S, U)$.
    \item \emph{Variability strictly \textit{decreases} by removing the within-mode term:} with $\widetilde{\bar z}(s,u)\doteq \mathbb{E}[\widetilde Z\mid S=s,U=u]=\mu_s$,
    \begin{align*}
    \operatorname{tr}(\mathrm{Cov}(\widetilde Z)) &= p(1-p)\|\mu_+ - \mu_-\|_2^2 \le \operatorname{tr}(\mathrm{Cov}(Z)),
    \end{align*}
    and the inequality is strict whenever $\mathrm{Term~1}$ or $\mathrm{Term~2}$ in~\eqref{eq:thm_decomp} is positive.
    \item \emph{The policy-gradient variance upper bound improves despite lower Informativeness:} If the policy-gradient estimator computed with $\widetilde Z$ satisfies the same assumptions of Theorem~\ref{thm:pg_var_bound} with the same constants $V_0$ and $L_m$, then
    $$
    \operatorname{tr}(\mathrm{Cov}(G_{\widetilde Z})) \le V_0 + L_m^2 \operatorname{tr}(\mathrm{Cov}(\widetilde Z)) < V_0 + L_m^2 \operatorname{tr}(\mathrm{Cov}(Z)),
    $$
    whenever the inequality in Part~2 is strict.
    Thus, a representation can be \emph{less informative about the full context} $(S,U)$ while yielding a \emph{strictly smaller} policy-gradient variance upper bound.
\end{enumerate}
\end{proposition}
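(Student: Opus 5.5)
The plan is to prove the three parts in order. Parts~2 and~3 reduce almost directly to Theorems~\ref{thm:var_decomp_SU} and~\ref{thm:pg_var_bound}. Part~1 needs a short information-theoretic argument built on the chain rule for mutual information.

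\textbf{Part 1.} By the chain rule, $I(Z;S,U)=I(Z;S)+I(Z;U\mid S)$. Assumption (B1), $H(S\mid Z)=0$, gives $I(Z;S)=H(S)-H(S\mid Z)=H(S)$. Hence $I(Z;S,U)=H(S)+I(Z;U\mid S)$, which is strictly larger than $H(S)$ by (B2).

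For the collapsed embedding, $\widetilde Z=\mu_S$ is a deterministic function of $S$. Two consequences follow:
\begin{itemize}
\item $I(\widetilde Z;U\mid S)=0$, since $\widetilde Z$ is constant given $S$;
\item $I(\widetilde Z;S)\le H(S)$, with equality iff $\mu_+\neq\mu_-$.
\end{itemize}
Applying the chain rule again gives $I(\widetilde Z;S,U)=I(\widetilde Z;S)\le H(S)<I(Z;S,U)$. The only care needed is measure-theoretic: $Z$ is continuous and $S$ is discrete, so mutual informations should be read in the general (Radon--Nikodym/supremum-over-partitions) sense. The chain rule and the identity $I(Z;S)=H(S)-H(S\mid Z)$ still hold there because $S$ is finite-valued. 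I will state this explicitly.

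\textbf{Part 2.} Apply the decomposition \eqref{eq:thm_decomp} of Theorem~\ref{thm:var_decomp_SU} to $\widetilde Z$, whose second moments are finite because it takes only two values.
\begin{itemize}
\item Given $(S,U)$, $\widetilde Z$ is deterministic, so $\mathrm{Cov}(\widetilde Z\mid S,U)=0$ and Term~1 vanishes.
\item $\widetilde{\bar z}(s,u)=\mu_s$ does not depend on $u$, so Term~2 vanishes.
\item The mode means are $\mathbb E[\widetilde Z\mid S=s]=\mu_s$, the same as for $Z$, so Term~3 is unchanged.
\end{itemize}
Multiplying by $d_z$ yields $\operatorname{tr}(\mathrm{Cov}(\widetilde Z))=p(1-p)\|\mu_+-\mu_-\|_2^2$. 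The decomposition for $Z$ itself equals this quantity plus $d_z(\mathrm{Term~1}+\mathrm{Term~2})$, and both terms are nonnegative traces of covariance matrices. This gives the inequality, strict exactly when Term~1 or Term~2 is positive.

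\textbf{Part 3.} Invoke Theorem~\ref{thm:pg_var_bound} for the estimator $G_{\widetilde Z}$ under the hypothesised shared constants $V_0,L_m$. In the form \eqref{eq:step2_bound}, this gives $\operatorname{tr}(\mathrm{Cov}(G_{\widetilde Z}))\le V_0+L_m^2\operatorname{tr}(\mathrm{Cov}(\widetilde Z))$. Strict monotonicity of $x\mapsto V_0+L_m^2x$ together with Part~2 then yields the strict comparison.

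The main obstacle is this last strictness. It requires $L_m>0$: if $m$ is constant, both bounds collapse to $V_0$. I would therefore add the nondegeneracy condition $L_m>0$, or equivalently note that the strict inequality is meant for the nontrivial case. A secondary point is to check that (A3) still makes sense for $\widetilde Z$, i.e.\ that $\{\mu_+,\mu_-\}\subset\mathcal Z$. This holds because $\mathcal Z$ can be taken convex (e.g.\ the SimNorm product of simplices), and the $\mu_s$ are averages of points of $\mathrm{supp}(Z)$.
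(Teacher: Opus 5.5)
Your proof is correct and follows the paper's argument essentially step for step: the chain rule combined with (B1)--(B2) for Part~1, the decomposition of Theorem~\ref{thm:var_decomp_SU} applied to $\widetilde Z$ for Part~2 (Terms~1 and~2 vanish, Term~3 is unchanged), and Theorem~\ref{thm:pg_var_bound} plus monotonicity in $\operatorname{tr}(\mathrm{Cov}(\cdot))$ for Part~3. Your caveat that the strict inequality in Part~3 needs $L_m>0$ is legitimate, since otherwise both bounds collapse to $V_0$; the paper's proof does not address this degenerate case.
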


Proposition~\ref{prop:paradox} demonstrates that the Informativeness--Variability paradox is mathematically possible: an embedding with strictly lower Informativeness can yield a strictly tighter gradient-variance bound. This provides one explanation consistent with our empirical observations in Section~\ref{sec:variability} (Figure~\ref{fig:variability}), where DMA*-SH achieves lower Informativeness than DMA* and DMA yet superior RL performance. The proposition constructs an idealized oracle embedding $\widetilde{Z} = \mu_S$ that perfectly collapses within-mode variation. We do not claim DMA*-SH learns this oracle exactly; rather, architectural constraints (SimNorm, shared hypernetwork, dynamics-only training) create pressure toward similar compression (Appendix~\ref{app:sib}). The empirical MI trajectories in Figure~\ref{fig:variability2} are consistent with this interpretation and with Assumptions (B1)--(B2): $I(Z;S)$ saturates early near $H(S)$ (high mode recoverability, equivalently small $H(S\mid Z)$), while $I(Z;U\mid S)$ remains nonzero but relatively reduced for DMA*-SH (partial within-mode compression).

\begin{proof}
\textbf{Part 1: Informativeness decreases.}
Since $\widetilde{Z} = \mu_S$ is a deterministic function of $S$, $I(\widetilde{Z}; U \mid S) = 0$. By the chain rule for mutual information, $I(\widetilde{Z}; S, U) = I(\widetilde{Z}; S) + I(\widetilde{Z}; U \mid S) = I(\widetilde{Z}; S) \le H(S)$. By Assumption~(B1), $H(S \mid Z) = 0$, hence $I(Z; S) = H(S) - H(S \mid Z) = H(S)$ and $I(Z; S, U) = I(Z; S) + I(Z; U \mid S) = H(S) + I(Z; U \mid S)$.
By Assumption~(B2), $I(Z; U \mid S) > 0$. So $I(\widetilde{Z}; S, U) \le H(S) < H(S) + I(Z; U \mid S) = I(Z; S, U)$.

\textbf{Part 2: Variability decreases.}
By construction, $\widetilde Z=\mu_S$ depends only on $S$, hence for all $(s,u)$,
$$
\mathrm{Cov}(\widetilde{Z} \mid S = s, U = u) = 0, \qquad \widetilde{\bar{z}}(s, u) \doteq \mathbb{E}[\widetilde{Z} \mid S = s, U = u] = \mu_s.
$$
Since $\widetilde{\bar{z}}(s, u) = \mu_s$ is constant in $u$, $\mathrm{Cov}_{U \mid S = s}(\widetilde{\bar{z}}(s, U)) = 0$. Applying the decomposition (Theorem~\ref{thm:var_decomp_SU}) to $\widetilde{Z}$:
$$
\operatorname{tr}(\mathrm{Cov}(\widetilde{Z})) = 0 + 0 + p(1-p)\|\mu_+ - \mu_-\|_2^2 = p(1-p)\|\mu_+ - \mu_-\|_2^2.
$$
Applying the same decomposition to $Z$:
\begin{align*}
\operatorname{tr}(\mathrm{Cov}(Z)) 
&= \mathbb{E}_{S,U}\Big[\operatorname{tr}\big(\mathrm{Cov}(Z \mid S, U)\big)\Big] + \operatorname{tr}\Big(\mathbb{E}_S\big[\mathrm{Cov}_{U \mid S}(\bar{z}(S, U))\big]\Big) + p(1-p)\|\mu_+ - \mu_-\|_2^2 \\
&= d_z \cdot \mathrm{Term~1} + d_z \cdot \mathrm{Term~2} + p(1-p)\|\mu_+ - \mu_-\|_2^2.
\end{align*}
Since $\mathrm{Term~1}$ and $\mathrm{Term~2}$ are non-negative, $\operatorname{tr}(\mathrm{Cov}(\widetilde{Z})) \le \operatorname{tr}(\mathrm{Cov}(Z))$, with strict inequality whenever 
$\mathrm{Term~1} >0$ or $\mathrm{Term~2} > 0$.

\paragraph{Part 3: Variance bound improves although Informativeness drops.}
By Theorem~\ref{thm:pg_var_bound}, the upper bound for $G_{\widetilde Z}$ is $V_0 + L_m^2 \operatorname{tr}(\mathrm{Cov}(\widetilde{Z}))$, and the upper bound for $G_Z$ is $V_0 + L_m^2 \operatorname{tr}(\mathrm{Cov}(Z))$. By Part~2, $\operatorname{tr}(\mathrm{Cov}(\widetilde{Z})) < \operatorname{tr}(\mathrm{Cov}(Z))$ whenever $\mathrm{Term~1}$ or $\mathrm{Term~2}$ is positive. Therefore:
$$
V_0 + L_m^2 \operatorname{tr}(\mathrm{Cov}(\widetilde{Z})) 
< V_0 + L_m^2 \operatorname{tr}(\mathrm{Cov}(Z)),
$$
proving that the policy-gradient variance upper bound strictly improves for $\widetilde{Z}$ despite its lower informativeness.
\end{proof}

\subsubsection{DMA*-SH as an Implicit Structural Information Bottleneck}\label{app:sib}

We formalize the notion of a \emph{structural information bottleneck} (SIB) for binary--continuous contexts, and explain how DMA*-SH can be viewed as \textit{implicitly} implementing an \textit{approximate SIB} via dynamics prediction under architectural constraints. We emphasize that SIB serves here as an interpretive framework, not an optimization target: DMA*-SH does \emph{not} explicitly optimize an SIB objective. Rather, the SIB perspective helps explain the emergent representation geometry analyzed in Appendix~\ref{sec:SIBemp}: In particular, the observed signatures (fast mode acquisition and reduced within-mode spread in certain tasks) are consistent with a capacity-constrained representation that prioritizes mode information $S$ over within-mode detail $U$ when this is sufficient for dynamics prediction and control.

\paragraph{Two complementary notions of compression.} We first formalize what we mean by ``compression''. 

\paragraph{(A) Variance: compression of within-mode mean dependence on $U$.} 
Recall the conditional mean embedding $\bar{z}(s, u) \doteq \mathbb{E}[Z \mid S = s, U = u]$ and mode-conditional mean $\mu_s \doteq \mathbb{E}_{U \mid S = s}[\bar{z}(s, U)]$. $\mathrm{Term~2}$ in Theorem~\ref{thm:var_decomp_SU} measures how much the conditional mean embedding $\bar z(s,u)$ varies with $U$ within each mode $s$. Small $\mathrm{Term~2}$ indicates that $\bar{z}(s, u)$ is approximately ``flat'' along $U$-directions within a fixed mode, a variance-based signature of compression. Moreover, Theorem~\ref{thm:var_decomp_SU} provides a quantitative sufficient condition: if $u \mapsto \bar{z}(s, u)$ is $L_s$-Lipschitz, then
\begin{align}\label{eq:lipschitz_bound_app}
\mathrm{Term~2} \le \frac{1}{d_z}\sum_{s \in \{\pm 1\}} \mathbb{P}(S = s) \cdot L_s^2 \cdot \operatorname{tr}\!\bigl(\mathrm{Cov}(U \mid S = s)\bigr).
\end{align}
Thus, small Lipschitz constants imply small within-mode mean variation.

\paragraph{(B) Information: compression of $U$ given $S$.}
A complementary notion is the conditional mutual information $I(Z; U \mid S)$. Smaller $I(Z; U \mid S)$ means that, after conditioning on the mode $S$, the representation $Z$ retains less information about the continuous parameter $U$.

\paragraph{Relationship between the two notions.}
Proposition~\ref{prop:boundary} below establishes that these two notions are related but not equivalent: perfect information-theoretic compression ($I(Z;U\mid S)=0$) implies zero within-mode mean variation ($\mathrm{Term~2}=0$), but the converse fails because $\mathrm{Term~2}$ captures only \emph{mean} dependence on $U$, not higher-moment dependence.

\begin{proposition}[Boundary behavior of within-mode variation]\label{prop:boundary}
Let the setting be as in Theorem~\ref{thm:var_decomp_SU}.
\begin{enumerate}
    \item If $I(Z; U \mid S) = 0$, then $\mathrm{Term~2} = 0$.
    \item The converse does not hold: $\mathrm{Term~2} = 0$ does not imply $I(Z; U \mid S) = 0$.
\end{enumerate}
\end{proposition}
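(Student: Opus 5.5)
For Part~1, I would use the fact that conditional mutual information zero means conditional independence. From $I(Z;U\mid S)=0$ it follows that $Z \perp U \mid S$. This means that for $P_S$-almost every $s$, the conditional law of $Z$ given $(S=s,U=u)$ equals the law of $Z$ given $S=s$, for $P_{U\mid S=s}$-almost every $u$.

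Integrability is guaranteed by the standing assumption $\mathbb{E}\|Z\|_2^2<\infty$, so conditional means exist. They then coincide: $\bar z(s,u)=\mathbb{E}[Z\mid S=s,U=u]=\mathbb{E}[Z\mid S=s]=\mu_s$ almost surely. The last equality is the iterated-expectation identity already noted in the proof of Theorem~\ref{thm:var_decomp_SU}.

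Consequently $\bar z(s,U)$ is almost surely constant under $U\mid S=s$, so $\mathrm{Cov}_{U\mid S=s}(\bar z(s,U))=0$ for almost every $s$. Since $S$ is binary, this holds for both $s=\pm1$, excluding any null mode with $\mathbb{P}(S=s)=0$, which does not contribute. Therefore $\mathrm{Term~2}=\tfrac1{d_z}\operatorname{tr}(\mathbb{E}_S[0])=0$.

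The only technical care in this part is the measure-theoretic statement that $I(Z;U\mid S)=0$ implies a.s.\ equality of regular conditional distributions, and hence of their means. I would cite this as the standard characterization of conditional independence via vanishing conditional mutual information.

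For Part~2, I would give an explicit counterexample in which $U$ affects only the conditional variance of $Z$, not its mean. Take $d_z=1$ (or pad with constant coordinates), $S$ uniform on $\{\pm1\}$, and $U$ uniform on $[1,2]$ independent of $S$. Let $Z=S+U\,\varepsilon$ with $\varepsilon\sim\mathcal N(0,1)$ independent of $(S,U)$. Then $\bar z(s,u)=s$ does not depend on $u$, so $\mathrm{Term~2}=0$, while $\mathbb{E}Z^2<\infty$.

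It remains to show $I(Z;U\mid S)>0$. Given $S=s$, the conditional law of $Z$ given $U=u$ is $\mathcal N(s,u^2)$, which varies with $u$. Hence $Z$ is not conditionally independent of $U$ given $S$, and the conditional mutual information is strictly positive. One could also compute it explicitly as $h(Z\mid S)-h(Z\mid S,U)$, where $h(Z\mid S,U)=\mathbb{E}[\tfrac12\log(2\pi e U^2)]$ and $h(Z\mid S)$ is the entropy of a scale mixture of Gaussians, strictly larger by strict concavity and the non-Gaussianity of the mixture.

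The simplest justification is the non-independence argument, and this is the step I expect to need the most care in stating cleanly. If a representation-flavored example is preferred, one could instead use a deterministic $Z$, for instance $Z=(s,\ \sin(2\pi U))$ with $U$ uniform on $[0,1]$: the mean of the second coordinate is zero and constant in $s$, so the mean embedding is flat in $u$, yet $Z$ depends on $U$. However, this has $\bar z(s,u)$ depending on $u$ pointwise, so it fails. The heteroscedastic Gaussian example is the one I would commit to.
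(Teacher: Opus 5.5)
Your proposal is correct and follows the paper's proof essentially step for step. In Part~1, $I(Z;U\mid S)=0$ gives $Z\perp U\mid S$, so $\bar z(s,u)=\mu_s$ and $\mathrm{Term~2}=0$. In Part~2, your $Z=S+U\varepsilon$ is the paper's heteroscedastic Gaussian counterexample ($U\sim\mathrm{Uniform}[1,2]$, $Z\mid(S=s,U=u)\sim\mathcal{N}(\mu_s,u^2 I_{d_z})$) specialized to $d_z=1$, $\mu_s=s$. You can drop the abandoned $\sin(2\pi U)$ aside, since, as you note, its conditional mean is not flat in $u$.
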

\begin{proof}
\textbf{Part 1.}
By definition, $I(Z; U \mid S) = 0$ if and only if $Z \perp U \mid S$, meaning $p(Z \mid S = s, U = u) = p(Z \mid S = s)$ for all $s, u$. This implies $\bar{z}(s, u) = \mathbb{E}[Z \mid S = s, U = u] = \mathbb{E}[Z \mid S = s] = \mu_s$ for all $u$. Since $\bar{z}(s, u) = \mu_s$ is constant in $u$, we have $\mathrm{Cov}_{U \mid S = s}(\bar{z}(s, U)) = 0$ for each $s$. Averaging over $S$ and taking $\frac{1}{d_z}\operatorname{tr}(\cdot)$ yields $\mathrm{Term~2}=0$.

\textbf{Part 2.}
We construct a counterexample. Let $\bar{z}(s, u) = \mu_s$ be constant in $u$, but let the conditional variance depend on $u$. Suppose $U \sim \mathrm{Uniform}[1, 2]$ and $Z \mid (S = s, U = u) \sim \mathcal{N}(\mu_s, u^2 I_{d_z})$. Then $\bar{z}(s, u) = \mu_s$ is constant in $u$, so $\mathrm{Cov}_{U \mid S}(\bar{z}(s, U)) = 0$ and $\mathrm{Term~2} = 0$. However, since $p(Z \mid S = s, U = u)$ depends on $u$ through the variance, we have $Z \not\perp U \mid S$, hence $I(Z; U \mid S) > 0$.
\end{proof}

Thus $\mathrm{Term~2}$ is best interpreted as a \emph{mean-flatness} signature rather than a complete proxy for $I(Z;U\mid S)$.

\paragraph{Structural information bottleneck (SIB).} The classical information bottleneck (IB) seeks representations that preserve task-relevant information while compressing other variation~\citep{tishby2000information,alemi2017deep,banerjee2020variational}. In actuator-inversion regimes, the context $C=(S,U)$ has an asymmetric structure: flipping the mode $S$ typically changes the \emph{qualitative} controller structure (e.g., action sign), while $U$ modulates \emph{within-mode} quantitative details (e.g., gains). This motivates a conditional compression objective that prioritizes mode information while penalizing within-mode sensitivity to $U$.

\begin{definition}[Structural information bottleneck (SIB) for actuator inversion]\label{def:sib_formal}
Let $C=(S,U)$ with $S \in \{\pm 1\}$ binary and $U$ continuous and $Z=g_\phi(\mathcal T)$. 
The \emph{SIB objective} is
\begin{align}\label{eq:sib_objective}
\mathcal{L}_{\mathrm{SIB}}(Z) = I(Z; S) - \beta\, I(Z; U \mid S), \quad \beta > 0,
\end{align}
or in constrained form: maximize $I(Z;S)$ over $g_\phi$ subject to $I(Z;U\mid S)\le R$. We say that $Z$ implements an \emph{exact SIB} if $I(Z; S) = H(S)$ and $I(Z; U \mid S) = 0$, and an \emph{approximate SIB} if $I(Z; S)/H(S)$ (assuming $H(S)>0$, as in our non-degenerate splits) is high and $I(Z; U \mid S)$ is relatively low.
\end{definition}

The first term $I(Z; S)$ measures \emph{mode sufficiency}: how much information the representation retains about the binary mode. The second term $I(Z; U \mid S)$ measures \emph{within-mode compression}: how much information about $U$ is retained beyond mode identification. The SIB objective prioritizes mode information while penalizing within-mode encoding.

\paragraph{Implicit SIB via dynamics prediction and architectural constraints.}
DMA*-SH does not optimize~\eqref{eq:sib_objective} explicitly. The encoder $g_\phi$ is trained solely via the dynamics prediction loss~\eqref{eq:hnopt},
$$L_{\phi,\theta,\eta} = \left\lVert \delta \hat{s}_{t+1} - \delta s_{t+1} \right\rVert_2^2,$$
where $\delta s_{t+1}$ is the next-state change and $\delta \hat{s}_{t+1}$ is its prediction. This objective contains no information-theoretic regularizer.
Nevertheless, the combination of dynamics prediction with architectural constraints creates an \emph{implicit} bottleneck that can approximate the SIB structure. The architectural constraints in DMA*-SH are shown in Table~\ref{tab:SIB-constraints}.

\begin{table}[ht]
  \caption{Architectural constraint and effect on $Z$.}
  \label{tab:SIB-constraints}
  \centering
\begin{tabular}{ll}
\toprule
\textbf{Constraint} & \textbf{Effect} \\
\midrule
SimNorm & Constrains $Z$ to product of $L=2$ simplices of dimension $V=4$ \\
Low $d_z$ & Limited embedding dimensionality ($d_z = 8$) \\
LSTM bottleneck & Sequential processing compresses temporal information \\
Random masking & Forces robustness, prevents overfitting to specific features \\
\bottomrule
\end{tabular}
\end{table}

These constraints limit the effective capacity of the representation. Without additional assumptions, minimizing squared prediction error is \emph{not} generally equivalent to maximizing mutual information. We use the following only as an \emph{interpretive} lens: Under restricted representation classes, optimizing dynamics prediction tends to favor encoding those context factors whose recovery most reduces prediction error. In actuator-inversion environments, the mode $S$ typically governs the sign/compatibility of the action-to-dynamics response, whereas $U$ often modulates within-mode scaling. Under strong capacity constraints, it is therefore plausible that optimization pressure prioritizes (or learns earlier) the mode information yielding high $I(Z;S)$ while compressing fine-grained within-mode dependence on $U$, yielding relatively low $I(Z; U \mid S)$. The key insight is that DMA*-SH’s compression is not imposed by the loss, but by the architecture: the dynamics loss determines \emph{what} information is useful; the architectural constraints determine \emph{how much} can be retained. When mode information is more predictive for dynamics than within-mode variation, the resulting representation can be biased toward high $I(Z;S)$ and reduced $I(Z;U\mid S)$, thereby approximating the SIB objective in effect. 

\paragraph{SIB links representation geometry to policy-gradient variance.}
The SIB framework connects to our main theoretical results as follows. Theorem~\ref{thm:var_decomp_SU} decomposes total $\mathrm{Variability}(M)$ into $\mathrm{Term~1}$ (within-context noise), $\mathrm{Term~2}$ (within-mode mean-variation of $\bar z(s,u)=\mathbb{E}[Z\mid S=s,U=u]$ across $U\mid S$), and $\mathrm{Term~3}$ (between-mode mean-separation). 

By Proposition~\ref{prop:boundary}, $I(Z;U\mid S)=0$ implies $\mathrm{Term~2}=0$; more generally, reduced conditional informativeness about $U$ is compatible with (though does not in itself imply) smaller within-mode mean-variation. 
By Theorem~\ref{thm:pg_var_bound}, under Lipschitz regularity of $m(z) = \mathbb{E}[G \mid Z = z]$, the policy-gradient variance satisfies $\operatorname{tr}(\mathrm{Cov}(G)) \leq V_0 + L_m^2 d_z \cdot \mathrm{Variability}(M)$; since this bound depends on $\mathrm{Variability}(M)$, reducing $\mathrm{Term~2}$ while preserving $\mathrm{Term~3}$ tightens this bound, provided $\mathrm{Term~1}$ does not increase to offset the reduction. Proposition~\ref{prop:paradox} formalizes the resulting tradeoff: an embedding with lower total informativeness $I(Z; S, U)$ can achieve a tighter gradient-variance upper bound when the reduction comes from compressing within-mode structure rather than degrading mode information. 

In this sense, DMA*-SH can be viewed as approximately solving a \emph{structural IB} that prioritizes mode information while compressing within-mode variation in a way that benefits the policy-gradient variance bound.

\paragraph{Why DMA*-SH outperforms Concat on non-overlapping eval-out.} The SIB framework highlights three properties. To state them precisely, we distinguish between the data-generating process and what the agent observes. The latent context $C=(S,U)$ determines the MDP dynamics under which rollouts are generated. For fixed parameters, this induces the causal structure $(S,U)\to \mathcal{T} \to Z \to \omega$, where $Z=g_\phi(\mathcal{T})$ and $\omega=h_\eta(Z)$. Let $X$ denote additional non-context inputs to downstream modules (e.g., state for the policy or state--action pairs for the critic). Downstream modules receive $\omega$ together with $X$ without ever observing
$(S,U)$.

\begin{enumerate}[label=(\roman*),leftmargin=*,nosep]
    \item \textit{Mode sufficiency:} $H(S \mid Z) \approx 0$, i.e., the latent mode $S$ is approximately recoverable from $Z$.
    \item \textit{Within-mode compression:} $I(Z;U\mid S)$ is relatively small, i.e., $Z$ carries limited information about $U$ beyond what $S$ provides.
    \item \textit{Downstream context-channel restriction (architectural):} any downstream output $\widehat{Y}$ produced by a module that receives context only through $(Z,\omega)$ satisfies 
    $$\widehat{Y} \perp (S,U) \mid (Z, X).$$
    Consequently, any sensitivity to within-mode shifts in $U$ must be mediated through how $U$ affects the observed rollout distribution and the inferred embedding.
\end{enumerate}

Condition~(iii) is an architectural property of DMA*-SH: the shared adapter pathway ensures that downstream modules receive context only through $\omega=h_\eta(Z)$, with no explicit $(S,U)$ input. Conditions~(i) and~(ii) are learned properties; gradient detachment of $\omega$ in the RL losses ensures that the encoder is shaped only by dynamics prediction, creating pressure to encode features predictive for dynamics (primarily mode $S$) while architectural bottlenecks limit capacity for within-mode detail. Downstream outputs can still depend on $U$ marginally through the observed non-context inputs $X$ (e.g., the state), since the rollout distribution under $P^{(S,U)}$ depends on $U$. We therefore interpret (ii)--(iii) as a structural bias against \emph{explicit} and potentially brittle $U$-conditioning.

In non-overlapping tasks, the binary mode $S$ acts as a compatibility variable: correct behavior must commit to the appropriate mode, while in our benchmark ranges $U$ predominantly modulates within-mode quantitative details. The standard context-aware baseline Concat receives $(S,U)$ directly as input to the policy, so it has no architectural restriction analogous to (iii) and no intrinsic pressure to \emph{ignore $U$}. In practice, this can entangle policy/value features with fine-grained $U$-dependence learned on $\mathcal{C}_{\mathrm{train}}$, which may be brittle under the held-out split $\mathcal{C}_{\mathrm{eval\text{-}out}}$. Concat is thus context-aware but \emph{not context-selective}. DMA*-SH's architectural restriction (iii), combined with approximate satisfaction of (i)--(ii), provides a structural bias toward mode-based rather than fine-grained $U$-based conditioning, which we interpret as contributing to its improved robustness on non-overlapping $\mathcal{C}_{\mathrm{eval\text{-}out}}$
(Table~\ref{tab:aer-eval-out}).

\paragraph{Non-overlapping versus overlapping environments.}
The SIB interpretation applies most naturally to non-overlapping (actuator-inversion) environments where the binary mode $S$ has a discontinuous, dominant effect on dynamics. In overlapping environments with purely continuous context $C = (U_1, U_2)$ and no binary mode, there is no $\mathrm{Term~3}$ (between-mode separation) since no discrete mode exists. In such environments, compression can discard task-relevant variation, and the actuator-inversion-specific inductive bias provides less benefit. This is consistent with DMA*-SH achieving competitive but not dominant performance on overlapping environments (Table~\ref{tab:consolidatedAER}).

\subsubsection{Regime shifts and selective-compression signatures on $\mathcal{C}_{\mathrm{eval\text{-}out}}$}
\label{sec:SIBemp}
Figure~\ref{fig:variability2} reports $I(Z;S)$, $I(Z;U\mid S)$, Variability, and returns on the held-out split $\mathcal{C}_{\mathrm{eval\text{-}out}}$ for three non-overlapping environments: DI, Cartpole, and Reacher (Hard). Mutual information (MI) is measured in nats; for a binary mode $S\in\{\pm1\}$ with balanced splits, the maximum is $I(Z;S)=H(S)=\ln 2 \approx 0.693$ nats. We treat MI estimates as comparative diagnostics (relative ordering across methods under the same estimator and sample budget), not calibrated absolute values.

Across all three environments, the evolution exhibits a consistent qualitative ordering:
\begin{enumerate}[leftmargin=*]
    \item \textbf{Rapid mode acquisition.} $I(Z;S)$ increases quickly and enters a near-saturated regime (relative to $H(S)$) early in optimization, indicating rapid acquisition of mode information. In Cartpole and Reacher, $I(Z;S)$ approaches values close to $\ln 2$, whereas in DI the attained level is lower but still substantial.
    \item \textbf{Slower within-mode refinement.} $I(Z;U\mid S)$ increases more gradually and typically continues to rise after $I(Z;S)$ has effectively plateaued. The eventual level of $I(Z;U\mid S)$ is task-dependent, with Reacher exhibiting markedly larger within-mode information than DI or Cartpole, consistent with arm-length scaling being more consequential for Reacher's kinematics/dynamics than mass (DI) or pole length (Cartpole) for the other two tasks.
    \item \textbf{Regime changes aligned with mode saturation.} Visible kinks (regime changes) occur around the time $I(Z;S)$ enters its near-saturated regime. The accompanying behavior of $\mathrm{Variability}(M)$ and returns is environment-specific.
\end{enumerate}

\begin{figure}[ht]
    \centering    
    \includegraphics[width=0.6\textwidth]{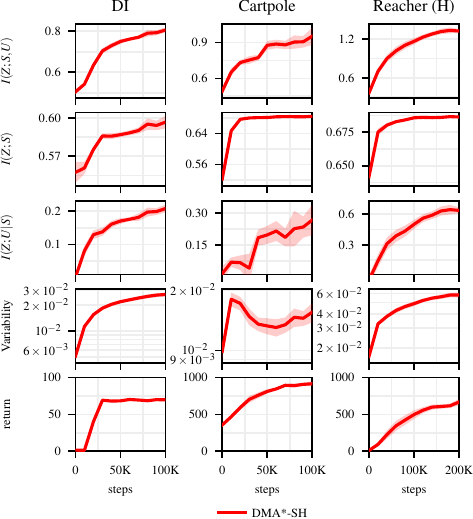}
    \caption{\textbf{Held-out dynamics of mode sufficiency vs. within-mode refinement.} Evolution of $I(Z;S,U)$, $I(Z;S)$, $I(Z;U\mid S)$, Variability, and returns for DI, Cartpole, and Reacher (Hard). Across tasks, mode information $I(Z;S)$ rises fast and approaches the binary ceiling $H(S)=\ln 2$, while within-mode information $I(Z;U\mid S)$ accumulates more slowly; kinks near mode saturation coincide with regime changes in Variability, consistent with a \textit{mode-first, task-adaptive structural bottleneck}.
    }
    \label{fig:variability2}
\end{figure}

\paragraph{DI.}
DI exhibits the sharpest association between mode acquisition and task success. Returns rise sharply and reach their maximum precisely when $I(Z;S)$ transitions from rapid growth to a near-plateau. This temporal alignment is consistent with mode identification being a key enabler of success in DI: under actuator inversion, incorrect mode identification induces acceleration in the wrong direction, which (under sparse reward) yields near-zero return. After this mode-acquisition transition, returns remain saturated, while $I(Z;U\mid S)$ and Variability continue to evolve on $\mathcal{C}_{\mathrm{eval\text{-}out}}$, indicating that additional within-mode information is not required to improve returns in this environment. While our results do not claim a monotone relationship between returns and mutual information, this pattern is compatible with the tradeoff highlighted by Proposition~\ref{prop:paradox}: increasing within-mode information can, in principle, increase representational variability (and hence loosen the gradient-variance upper bound) without yielding additional control benefit once mode information is reliable.

\paragraph{Cartpole.}
In Cartpole, $I(Z;S)$ reaches a near-saturated regime early, while returns continue improving over a longer horizon. The Variability curve exhibits a regime change aligned with the mode-acquisition transition, followed by non-monotone evolution thereafter. This is consistent with a two-stage process: (i) early acquisition of mode information sufficient to avoid catastrophic inversion errors, followed by (ii) longer-horizon refinement associated with within-mode adaptation to $U$ (pole length) under feedback control. Since $I(Z;S)$ does not decrease (and in fact increases slightly) across this transition, a post-transition decrease in Variability cannot be attributed to a loss of mode information; rather, it indicates that non-mode contributions to Variability must be decreasing over that interval (i.e., a reduction in $\mathrm{Term~1}$ and/or $\mathrm{Term~2}$ in Theorem~\ref{thm:var_decomp_SU}). Moreover, the observation that $I(Z;U\mid S)$ can increase while Variability decreases is compatible with Proposition~\ref{prop:boundary} (Part~2): within-mode information about $U$ can be expressed through aspects of the conditional distribution beyond the conditional mean map $\bar z(s,u)$, so $\mathrm{Term~2}$ need not track $I(Z;U\mid S)$. Unlike DI, returns in Cartpole rise gradually rather than jumping at the mode transition, reflecting that balancing provides continuous reinforcement even when mode discrimination remains imperfect, and feedback control can partially offset suboptimal gains.

\paragraph{Reacher (Hard).}
In Reacher (Hard), $I(Z;S)$ again enters a near-saturated regime early, while returns continue to improve substantially afterwards, in parallel with a sustained rise of $I(Z;U\mid S)$. This pattern is consistent with a regime where mode identification resolves the actuator-inversion incompatibility, yet accurate within-mode control remains sensitive to arm-length scaling ($U$), which affects the reachable workspace, the joint-to-end-effector geometry, and the torques required to move limbs of varying length and inertia. Correspondingly, within-mode information appears more behaviorally consequential than in DI. The Variability curve exhibits a regime change aligned with the kink in $I(Z;S)$ and continues to increase before stabilizing late in optimization, consistent with extended refinement on held-out contexts.

Taken together, the eval-out trajectories are qualitatively consistent with a \textit{mode-first} optimization pattern and an \emph{approximate}, \textit{task-adaptive} structural bottleneck: $I(Z;S)$ approaches $H(S)$ early (mode sufficiency), while $I(Z;U\mid S)$ continues to grow later (within-mode refinement) in a task-dependent manner. When $U$ is strongly control-relevant, DMA*-SH does \emph{not} compress it aggressively; instead it learns it. The alignment between mode-acquisition transitions and regime changes in Variability is consistent with the variance decomposition of Theorem~\ref{thm:var_decomp_SU} and the representation-dependent component of the policy-gradient variance bound in Theorem~\ref{thm:pg_var_bound}. Finally, the differing coupling between $I(Z;U\mid S)$ and returns across environments is compatible with the tradeoff articulated by Proposition~\ref{prop:paradox}. We emphasize that this SIB interpretation is a post-hoc explanatory framework supported by these empirical patterns.

\subsection{Scale Control and Directional Geometry in the Context Space}
\label{sec:directional_bias}

We record two empirically relevant properties of DMA* and DMA*-SH that motivate cosine-based geometric summaries used later (Definition~\ref{def:RO}) and help interpret the DI visualizations (Appendix~\ref{sec:more_variability}) through the decomposition in Theorem~\ref{thm:var_decomp_SU} (Appendix~\ref{app:varcompress}).

\paragraph{The DMA* normalization pipeline controls representation scale.}
DMA* and DMA*-SH apply (i) per-sample input normalization via AvgL1Norm and (ii) output normalization via SimNorm (Section~\ref{sec:DMA-schemes}). AvgL1Norm rescales each input vector $x\in\mathbb{R}^N$ to $\mathrm{AvgL1Norm}(x)=\frac{N x}{\sum_i |x_i|}$, which removes arbitrary per-sample magnitude drift in the encoder input. SimNorm maps $z_t$ to a product of simplices via group-wise softmax, yielding a bounded output space with controlled scale. Together, these operators substantially reduce uncontrolled radial scaling of embeddings compared to an unnormalized encoder, so that angular geometry (cosine similarity) becomes a more interpretable notion of similarity for per-context mean embeddings.

\paragraph{Shared hypernetwork conditioning is associated with stronger directional concentration in DI.}
DMA*-SH conditions a single hypernetwork $h_\eta$ on the normalized embedding $z_t$ to generate adapter weights used across dynamics, policy, and value networks. In DI, the cosine heatmaps and RO scores (Figures~\ref{fig:tsnero} and \ref{fig:variability}) show that moving from DMA* to DMA*-SH is associated with higher within-mode cosine similarity across mass values, while the actuator-inversion modes remain distinguishable in both cosine structure and t-SNE. This empirical pattern is consistent with Theorem~\ref{thm:var_decomp_SU}: reduced continuous within-mode spread (compression) can co-occur with nontrivial separation across the binary actuator mode.

\subsubsection{Representation-Overlap ($\mathrm{RO}$).} \label{sec:ROTh}

We introduce Representation-Overlap ($\mathrm{RO}$) as a directional summary of how per-context mean embeddings align in latent space. This is particularly natural in DMA*-SH as SimNorm controls the scale of $z_t$ and the hypernetwork $h_\eta$ is conditioned on the normalized embedding (Appendix~\ref{sec:directional_bias}), so cosine-based comparisons are less confounded by uncontrolled radial scaling. Accordingly, cosine similarity provides an invariant notion of similarity under positive rescalings:
$$
\operatorname{cos}(\alpha u,\beta v)=\operatorname{cos}(u,v)
\qquad \forall \alpha,\beta>0.
$$
When norms are controlled by normalization, higher cosine similarity between mean embeddings is a useful proxy for more directionally concentrated representations, and therefore for more similar hypernetwork conditioning inputs.

\begin{definition}[Representation-Overlap ($\mathrm{RO}$)]\label{def:RO}
Let $\{c^{(1)},\dots,c^{(n)}\}$ be $n$ distinct contexts.
For each $c^{(i)}$, given $B$ embeddings $\{z_{c^{(i)}}^{(b)}\}_{b=1}^B$, define the mean embedding
$\mu_{c^{(i)}}=\frac{1}{B}\sum_{b=1}^B z_{c^{(i)}}^{(b)}$.
The pairwise cosine similarity is
\begin{align}
\label{eq:pairwise_cosim}
\operatorname{cos}(\mu_{c^{(i)}},\mu_{c^{(j)}}) = \frac{\mu_{c^{(i)}}^\top \mu_{c^{(j)}}}{\|\mu_{c^{(i)}}\|\,\|\mu_{c^{(j)}}\|}.
\end{align}
The global $\mathrm{RO}$ score averages over all $n^2$ pairs (including $i=j$):
\begin{align}
\label{eq:RO_cos}
\mathrm{RO} = \frac{1}{n^{2}} \sum_{i=1}^{n} \sum_{j=1}^{n} \operatorname{cos}(\mu_{c^{(i)}},\mu_{c^{(j)}}).
\end{align}
\end{definition}

$\mathrm{RO}$ summarizes global directional concentration of the per-context mean embeddings. In the DI setting, Theorem~\ref{thm:var_decomp_SU} (Appendix~\ref{app:varcompress}) distinguishes (i) within-mode spread over the continuous parameter and (ii) separation across actuator-inversion modes. Empirically, higher $\mathrm{RO}$ in DMA*-SH co-occurs with stronger within-mode alignment across mass (a directional signature consistent with reduced continuous within-mode spread), while still preserving a clear distinction between actuator modes (Appendix~\ref{sec:more_variability}).

\subsubsection{t-SNE Visualization and $\mathrm{RO}$ Cosine Similarity Analysis}
\label{sec:more_variability}

We visualize the geometry of inferred embeddings $z_t$ in DI using t-SNE \citep{van2008visualizing} and cosine similarity of per-context means (Definition~\ref{def:RO}). These views complement the scalar diagnostics in Figure~\ref{fig:variability} by illustrating how the mean structure across contexts changes, which is central to the decomposition in Theorem~\ref{thm:var_decomp_SU} (Appendix~\ref{app:varcompress}).

Figure~\ref{fig:tsnero} shows t-SNE visualizations for DI comparing DMA, DMA*, and DMA*-SH. Each point corresponds to an embedding $z_t = g_\phi(\tau)$, and colors indicate the underlying context pair (actuator-inversion mode and mass). Across methods, embeddings organize into two actuator-mode groups, while DMA*-SH exhibits visibly weaker stratification by mass within each mode. This qualitative pattern is consistent with Theorem~\ref{thm:var_decomp_SU}: lowering the continuous within-mode contribution (compression of $U$ within fixed $S$) can reduce overall Variability, while retaining a nontrivial between-mode separation term for actuator inversion.

We further examine cosine similarities between per-context mean embeddings using Equation~\ref{eq:pairwise_cosim} (Figure~\ref{fig:tsnero}, bottom).

\paragraph{DMA $\rightarrow$ DMA*: Effect of normalization.}
In DI, DMA exhibits strongly negative cosines between means from opposite actuator modes (approximately $-0.4$ to $-0.97$), indicating near sign-opposition in mean directions across modes.
With input/output normalization (DMA*), these cross-mode cosines move to near-orthogonality (about $0.01$--$0.03$), yielding a less extreme directional relationship between the two actuator-mode groups.

\paragraph{DMA* $\rightarrow$ DMA*-SH: Effect of shared hypernetwork conditioning.}
Moving from DMA* to DMA*-SH, within-mode cosines across different mass values increase (roughly from $0.53$--$1.0$ to values near $1.0$ in DI), consistent with stronger within-mode directional concentration and hence reduced continuous within-mode spread (the compression term in Theorem~\ref{thm:var_decomp_SU}). At the same time, cross-mode cosines remain distinct from within-mode values (e.g., around $0.14$--$0.23$ in DI), consistent with maintaining a separable actuator-inversion structure rather than collapsing the discrete distinction.

Overall, the t-SNE and cosine patterns provide a geometric complement to Figure~\ref{fig:variability} and are consistent with the compression/separation decomposition in Theorem~\ref{thm:var_decomp_SU}.

\subsection{Implicit Regularization via Shared Hypernetwork Gradients} \label{app:gradcon}

\paragraph{Shared hypernetworks (DMA*-SH).}
We examine the effect of sharing a single hypernetwork across the dynamics model, policy, and Q-function in DMA*-SH. The training loop in Algorithm~\ref{alg:dmash} interleaves (i) RL updates of the actor and critic base parameters and (ii) dynamics-driven updates of the representation and hypernetwork parameters:
\begin{itemize}[leftmargin=*]
\item RL updates:
$$
\xi \gets \xi - \alpha_1 \nabla_\xi \sum_c L^c_\xi,
\qquad
\zeta \gets \zeta - \alpha_2 \nabla_\zeta \sum_c L^c_\zeta,
$$
where $L^c_\xi$ and $L^c_\zeta$ are the actor and critic losses computed using $\pi_{\xi,\omega}$ and $Q_{\zeta,\omega}$ with $\omega=h_\eta(z_t)$ (treated as a constant in $L^c_\xi$ and $L^c_\zeta$ via stop-gradient).

\item Dynamics updates:
$$
\phi \gets \phi - \alpha_3 \nabla_\phi \sum_c L^c_{\phi,\theta,\eta},
\qquad
\theta \gets \theta - \alpha_3 \nabla_\theta \sum_c L^c_{\phi,\theta,\eta},
\qquad
\eta \gets \eta - \alpha_3 \nabla_\eta \sum_c L^c_{\phi,\theta,\eta},
$$
where $L^c_{\phi,\theta,\eta}$ is the forward-dynamics (FD) reconstruction objective~\eqref{eq:hnopt} and depends on $f_{\theta,\omega}(s_t,a_t)$ with $\omega=h_\eta(z_t)$.
\end{itemize}

Since $\omega$ is detached in the RL losses, the RL objectives update only the base parameters $\xi$ and $\zeta$; gradients from $L_\xi$ and $L_\zeta$ do not backpropagate through $\omega$ to $\eta$ (or to $z_t$) (Figure~\ref{fig:architectureB}). Nonetheless, the objectives remain coupled through the \emph{shared adapter pathway}, namely the forward mapping
\begin{align}
\label{eq:shared-adapter-pathway}
\tau_t^c \xrightarrow{\,g_\phi\,} z_t \xrightarrow{\,h_\eta\,} \omega_t \xrightarrow{\mathrm{shared\ adapters}} \{f_{\theta,\omega_t},\,\pi_{\xi,\omega_t},\,Q_{\zeta,\omega_t}\}.
\end{align}
The same $\omega_t=h_\eta(z_t)$ is reused by the dynamics model, actor, and critic. During RL updates, $\xi$ and $\zeta$ are optimized under the adapter configuration produced by the current hypernetwork, but gradients through $\omega_t$ are stopped. During dynamics updates, $\phi$, $\theta$, and $\eta$ are refined by the reconstruction objective using data collected by the agent. This sharing injects a dynamics-trained structural constraint into RL optimization: the actor and critic can use context only through adapters shaped by the dynamics objective. Appendix~\ref{sec:TxrecurDMASH} contrasts this factorization with history-based recurrent and Transformer agents.

\paragraph{Separate hypernetworks (DMA*-H).}
We compare DMA*-SH to a separate-hypernetwork design, denoted DMA*-H, where the dynamics model, policy, and Q-function have separate hypernetwork parameters $\eta^f,\eta^\pi,\eta^Q$ and produce separate adapter weights $\omega_t^f,\omega_t^\pi,\omega_t^Q$:
\begin{align}
\label{eq:adapter-pathway}
z_t \xrightarrow{\,h_{\eta^m}\,} \omega_t^m
\xrightarrow{\mathrm{adapter}} M^m_{\omega_t^m},
\qquad m\in\{f,\pi,Q\}.
\end{align}
Here $M^m$ denotes the corresponding module: $f_\theta$, $\pi_\xi$, or $Q_\zeta$. The FD loss updates only $\eta^f$, while the actor and critic losses update $\eta^\pi$ and $\eta^Q$. This decouples dynamics-driven and reward-driven adaptation, which can introduce objective mismatch when reward-driven adapters move in directions not supported by the dynamics-trained pathway.

\paragraph{Shadow gradients (diagnostics in $\eta$-space and $z$-space).}
Since $\omega$ is detached in the RL losses in DMA*-SH, the true training-time gradients through the shared adapter pathway~\eqref{eq:shared-adapter-pathway} satisfy $\nabla_\eta L_\pi = 0$ and $\nabla_z L_\pi = 0$ by construction. For analysis only, we define a \emph{shadow} computation that temporarily removes the stop-gradient through $\omega$ and differentiates RL losses along this pathway. These shadow quantities are \emph{not} used for training and do not affect the optimization trajectory; they are diagnostic probes of how the RL objective would prefer to change the shared adapter pathway if the detachment were removed.

In $\eta$-space, the shadow sensitivity of $L_{\mathrm{RL}}$ to the shared hypernetwork parameters follows
$\frac{\partial L_{\mathrm{RL}}}{\partial \eta}
= \frac{\partial L_{\mathrm{RL}}}{\partial \omega}\cdot \frac{\partial \omega}{\partial \eta}$, 
where $\frac{\partial \omega}{\partial \eta} = \frac{\partial h_{\eta}(z_t)}{\partial \eta}$. The factor $\frac{\partial L_{\mathrm{RL}}}{\partial \omega}$ chains through both $\pi_{\xi,\omega}$ and $Q_{\zeta,\omega}$.
For instance, for the actor, the relevant component is
$-\frac{\partial}{\partial \omega}\mathbb{E}_{a_t\sim \pi_{\xi,\omega}}\left[Q_{\zeta,\omega}-\alpha \log \pi_{\xi,\omega}\right]$.
Although $\eta$ is shielded from RL updates to avoid direct interference with the FD-trained adapter mapping, the shadow gradients quantify how the RL objective would (hypothetically) prefer to adjust the shared mapping $z\mapsto \omega$ if such updates were enabled, while training instead proceeds by adapting $(\xi,\zeta)$ under the dynamics-trained adapters.

In $z$-space, the analogous shadow sensitivity is
$\frac{\partial L_{\mathrm{RL}}}{\partial z_t} = \frac{\partial L_{\mathrm{RL}}}{\partial \omega} \cdot \frac{\partial \omega}{\partial z_t}$, where $\frac{\partial \omega}{\partial z_t}=\frac{\partial h_\eta(z_t)}{\partial z_t}$. The factor $\frac{\partial L_{\mathrm{RL}}}{\partial \omega}$ chains through both $\pi_{\xi,\omega}$ and $Q_{\zeta,\omega}$, so $\|\nabla_{z_t} L_\pi\|$ and $\|\nabla_{z_t} L_Q\|$ quantify how strongly the policy and critic objectives would (hypothetically) prefer to change the inferred context signal along the adapter pathway. These $z$-space norms provide a direct measure of \textit{context sensitivity} in the shared embedding space $\mathbb{R}^{d_z}$.

\paragraph{Gradient analysis (Figure~\ref{fig:gradcon}).}

We report gradient-based diagnostics across DI, DI-Friction, ODE, and Cartpole to characterize how DMA*-SH and DMA*-H differ in (i) effective context sensitivity along the adapter pathway and (ii) the variance of selected gradient-norm signals.

In DMA*-SH, all quantities involving $\nabla_\eta L_\pi$, $\nabla_z L_\pi$, or the cosines with these terms are computed using the shadow construction described above.

\begin{enumerate}[leftmargin=*, topsep=2pt, itemsep=2pt]

\item Mean policy hypernetwork sensitivity ($\eta$-space): $\mathbb{E}\|\nabla_{\eta} L_\pi\|$ (DMA*-SH) and $\mathbb{E}\|\nabla_{\eta^\pi} L_\pi\|$ (DMA*-H).
Across environments, the DMA*-SH curve remains substantially larger than DMA*-H, indicating a persistent hypothetical tendency of the policy objective to reshape the shared hypernetwork mapping if such updates were enabled. This is consistent with the actor and critic adapting their base parameters $(\xi,\zeta)$ under a dynamics-trained adapter mapping rather than directly rewriting it.

\item Mean policy context sensitivity ($z$-space): $\mathbb{E}\|\nabla_{z} L_\pi\|$.
DMA*-SH exhibits consistently larger values than DMA*-H, indicating stronger dependence of the policy objective on the inferred context through the adapter pathway. In DMA*-H, the smaller values are consistent with weaker effective context utilization along this route.

\item Variance of policy gradients w.r.t.\ context ($z$-space): $\mathrm{Var}\|\nabla_{z} L_\pi\|$.
Unlike the mean sensitivity, which captures persistent dependence on the adapter pathway, the relative ordering varies across environments, suggesting that the variance is influenced by environment-specific nonstationarity and mode-switching.

\item Variance of policy base-parameter gradient norms ($\xi$-space): $\mathrm{Var}\|\nabla_\xi L_\pi\|$.
DMA*-SH typically shows slightly lower variance than DMA*-H, consistent with a modestly more stable optimization signal for the policy base parameters under shared adapters.

\item Variance of context-encoder gradients under dynamics ($\phi$-space): $\mathrm{Var}\|\nabla_\phi L_d\|$.
This measures variance of the dynamics-driven learning signal entering the context encoder. Values are small in magnitude, but DMA*-SH often shows slightly lower variance than DMA*-H.

\item Variance of dynamics gradients w.r.t.\ embedding ($z$-space): $\mathrm{Var}\|\nabla_z L_d\|$.
This measures variance of the dynamics objective's sensitivity to the inferred embedding in $\mathbb{R}^{d_z}$. Magnitudes are small, so this metric serves mainly as a weak supporting diagnostic.

\item Dynamics--policy alignment ($\eta$-space): $\mathrm{cos}(\nabla_{\eta} L_d, \nabla_{\eta} L_\pi)$ (DMA*-SH) and a heuristic analogue in DMA*-H comparing $\eta^f$ and $\eta^\pi$.
In DMA*-SH, this cosine tracks whether the (hypothetical) reward-driven direction in the shared $\eta$ coordinates tends to align with or oppose the dynamics-driven direction. In DMA*-H, the corresponding cosine is a heuristic as it compares different parameter spaces.

\item Policy--critic alignment ($\eta$-space): $\mathrm{cos}(\nabla_{\eta} L_\pi, \nabla_{\eta} L_Q)$ (DMA*-SH) and a heuristic analogue in DMA*-H comparing $\eta^\pi$ and $\eta^Q$.
In DMA*-SH, this cosine summarizes whether actor and critic objectives would push the shared hypernetwork parameters in similar directions under the hypothetical update. In DMA*-H, the corresponding cosine is a heuristic as it compares different parameter spaces.

\item Returns.
DMA*-SH attains faster learning and higher returns than DMA*-H across the considered environments.
This performance gap co-occurs with the separation in $\mathbb{E}\|\nabla_{\eta} L_\pi\|$ and $\mathbb{E}\|\nabla_{z} L_\pi\|$, consistent with stronger and more persistent context dependence through the shared adapter pathway.
\end{enumerate}

Overall, the most consistent separation is in the mean context-sensitivity diagnostics, both $\mathbb{E}\|\nabla_{\eta} L_\pi\|$ and $\mathbb{E}\|\nabla_{z} L_\pi\|$. This suggests that DMA*-SH maintains a sustained context-dependent adapter pathway shaped by the dynamics-trained hypernetwork, whereas DMA*-H exhibits substantially weaker sensitivity along the corresponding policy pathway.

\subsection{Comparison to History-Based Recurrent and Transformer Agents}
\label{sec:TxrecurDMASH}

A history-based agent such as Amago-2~\citep{grigsby2024bamago} uses a trajectory encoder or sequence model to produce a latent state shared by actor and critic heads. Abstractly, this adaptation pathway can be written as
$$\tau_t \xrightarrow{\mathrm{SeqModel}_\psi}
h_t \xrightarrow{\mathrm{policy/value\ heads}} (a_t,Q_t).
$$
Thus task inference and control are coupled inside a single hidden state $h_t$ optimized by RL objectives. Such models can in principle implement multiplicative interactions through attention or gating, and our expressivity theorem is not a separation against Transformers. The distinction is structural rather than purely expressive.

In DMA*-SH, the corresponding pathway is the dynamics-grounded factorization in~\eqref{eq:shared-adapter-pathway}. History first determines a context embedding $z_t$, then a hypernetwork maps this embedding to a shared adapter configuration $\omega_t$, and the same $\omega_t$ is reused by the dynamics model, actor, and critic. The map $z_t\mapsto\omega_t$ is trained only by the forward-dynamics objective, while actor and critic gradients are stopped through $\omega_t$. Thus all context dependence seen by control is forced through one dynamics-trained intermediate object, rather than an arbitrary RL-trained hidden state.

This factorization separates two roles that are often conflated in recurrent or Transformer baselines: \textit{mode identification} and \textit{mode-conditioned computation}. A sequence model may infer latent task information from history, but it does not require the inferred representation to parameterize the transition model and the control networks through the same operator. DMA*-SH imposes this constraint explicitly: the multiplicative operator used by control is grounded in transition prediction and shared across model, actor, and critic.

This distinction is particularly important in non-overlapping settings. A wrong mode estimate can make the policy catastrophically wrong, so early reward gradients may be sparse, high-variance, or misleading. Dynamics prediction provides a dense supervised signal from the first transition. Detaching $\omega_t$ forces the actor and critic to learn under a dynamics-trained adapter configuration rather than rewriting the context-to-operator map themselves. Hence detachment is part of the structural prior, not merely an implementation detail.

\paragraph{No-detach ablation.}
Our no-detach ablations in Figure~\ref{fig:iqm-hyper} support this interpretation. Allowing critic gradients through the shared hypernetwork degrades performance, while allowing actor gradients through the shared hypernetwork causes near-zero returns on the tested non-overlapping environments. In contrast, analogous actor-gradient flow in DMA* without a shared hypernetwork has only modest effect. This indicates that the critical failure mode is not simply that RL gradients touch a context encoder, but that RL gradients rewrite the shared context-to-operator map used by all modules.

\begin{figure}[ht]
    \centering
    \includegraphics[width=0.73\textwidth]{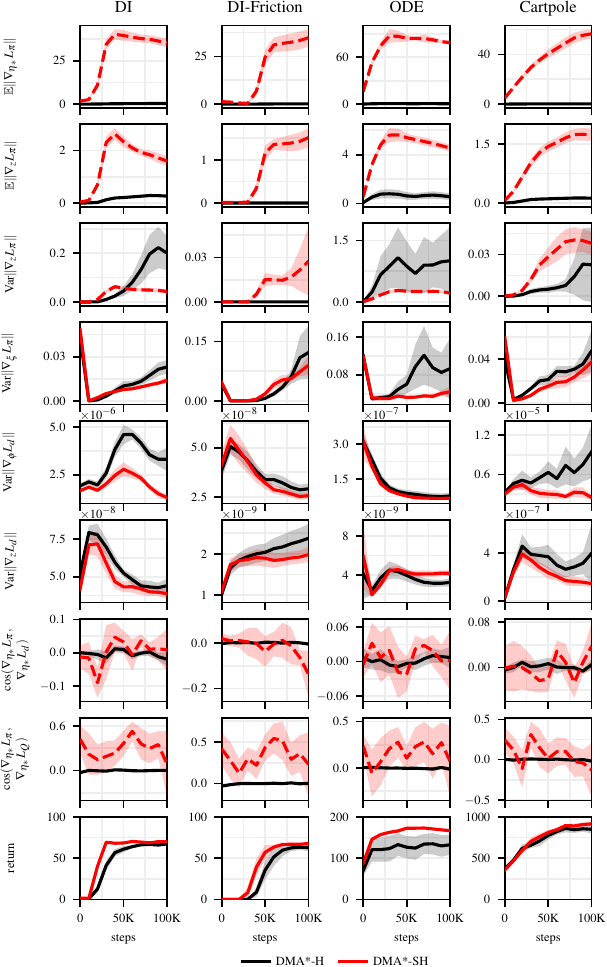}
    \caption{
    Gradient analysis comparing shared hypernetworks (DMA*-SH, red) vs.\ separate hypernetworks (DMA*-H, black) across DI, DI-Friction, ODE, and Cartpole environments. Dashed lines indicate quantities computed via a shadow graph obtained by temporarily removing the stop-gradient through the adapter pathway (e.g., $\nabla_{\eta} L_{\pi}$ or $\nabla_{z} L_{\pi}$ in DMA*-SH, where neither $\eta$ nor $z$ receives policy gradients during training due to detachment of $\omega$). This enables hypothetical gradient evaluation without altering the training loop. Here, $\eta_*$ denotes the relevant hypernetwork parameters: in shared, the single $\eta$ (optimized solely via the dynamics loss $L_d$); in separate, the module-specific hypernetworks (e.g., $\eta^\pi$ for policy-related gradients). $L_d$, $L_{\pi}$ and $L_Q$ correspond respectively to the FD loss, $L_\xi$ (actor) and $L_\zeta$ (Q-function).    }
    \label{fig:gradcon}
\end{figure}

\clearpage

\section{Extended Related Work}
\label{sec:related_extended}

\textbf{Zero-shot generalization in contextual RL.} Contextual RL has been studied from multiple perspectives, including CMDPs, domain randomization, and meta-RL \citep{hallak2015contextual,modi2018markov,beck2023survey}. The survey by \citet{kirk2023survey} highlights its importance for zero-shot generalization, noting that separating training and evaluation context sets enables systematic evaluation. Broadly, two directions are distinguished: 1) explicit context is observable as privileged information \citep{chen2018hardware,seyed2019smile,ball2021augmented,eghbal2021context,sodhani2021multi,mu2022domino,benjamins2023contextualize,prasanna2024dreaming}, and 2) context must be inferred implicitly from past experience \citep{chen2018hardware,xu2019densephysnet,lee2020context,seo2020trajectory,xian2021hyperdynamics,sodhani2022block,melo2022transformers,evans2022context,ndir2024inferring,roder2025dynamics}.
Our work follows the second approach, focusing on self-supervised context inference via dynamics-model alignment. Recurrent agents may also learn internal context representations \citep{grigsby2024amago,grigsby2024bamago,luo2024efficient,hafner2019learning,hafner2025mastering}, though these are typically not aligned with the underlying dynamics.
Closely related, \citet{beukman2023dynamics} utilize hypernetworks \citep{ha2017hypernetworks} to incorporate context into RL models. Our approach differs fundamentally: We do not assume access to explicit context, and rather than employing separate hypernetworks for the policy/Q-function, we train a single hypernetwork jointly with the dynamics model, which is then shared across policy and action-value networks.

\textbf{Meta-RL.} Meta-RL aims to enable agents to rapidly adapt to unseen tasks with minimal data \citep{beck2023survey}, often by learning policies that infer task structure from prior interactions, sometimes using hypernetworks \citep{beck2022hyper,beck2023recurrent}. Most meta-RL approaches require fine-tuning on new tasks {across multiple episode rollouts} \citep{duan2016rl,finn2017model,nagabandi2018learning,rakelly2019efficient,zintgraf2019fast}, which is incompatible with our zero-shot generalization setting. VariBAD \citep{zintgraf2020varibad} and TrMRL \citep{melo2022transformers} are not subject to this limitation, as they have been shown to adapt to the task within the first rollout. Recent advances in context-based offline meta-RL are also promising, as these algorithms aim to infer latent task information, both reward- and dynamics-based, from static datasets \citep{li2021focal,dorfman2021offline,yuan2022robust,li2024Unicorn,li2024efficient}. However, since these methods typically rely on assumptions inherent to the offline setting, it remains an open question whether they can be robustly applied to online RL \citep{li2024Unicorn}.

\textbf{Context in cognition.} Beyond RL, cognitive modeling suggests that humans segment the environment into context-like events \citep{Zacks:2001,Zacks:2007,Butz:2016}. For instance, the recurrent REPRISE model learns latent context representations from scratch, distinguishing dynamic regimes \citep{Butz:2019}. More recent work differentiates event segmentation from context inference, showing that contextual priors support learning of sensorimotor repertoires and memory structures \citep{Heald:2021,Heald:2023}. 
Bayesian active inference models indicate that context can reduce computational effort while accurately modeling human behavior \citep{Markovic:2021,Schwoebel:2021,Butz:2022,CuevasRivera:2023,Parr:2023,Mittenbuehler:2024}. In cognitive modeling-inspired deep learning, contextualized hypernetworks have been introduced in various forms, showing superior generalization and emergent compositionality \citep{sugita2011simultaneously}, the emergence of affordance maps \citep{Scholz:2022}, as well as the possibility to focus object-oriented encoding pipelines \citep{Traub:2024}. At the intersection of neuroscience, developmental psychology, cognitive modeling, and machine learning, context inference and context-conditioned learning appear to be critical to enabling robust behavioral learning in complex environments \citep{Butz:2024}.

\clearpage
\section{Algorithms}
\label{sec:algorithms}

For DMA/DMA* we isolate encoder training to the dynamics objective by detaching $z_t$ in RL losses (Algorithm~\ref{alg:dma}); analogously, DMA*-SH detaches $\omega$ in RL losses (Algorithm~\ref{alg:dmash}). See Figure~\ref{fig:architecture}.

\begin{minipage}{0.47\textwidth}
\begin{algorithm}[H]
\centering
\begin{algorithmic}[1]
\REQUIRE Context set $\mathcal{C}_{\text{train}} = \{c_i\}_{i=1 \ldots n_c}$ sampled from context range for training, learning rates $\alpha_1, \alpha_2, \alpha_3$
\STATE Init. replay buffers $\mathcal{B}^c$ for each context
\STATE Init. context windows (deque) $\tau^c_{t=0} = \{(s_{t-k}, a_{t-k}, \delta s_{t+1-k})\}_{k:1\ldots K} \sim \pi_{random}$ for each context
\FOR{step in training steps}
\STATE \textit{// Collect data in environment} 
\FOR{$c$ in $\mathcal{C}_{\text{train}}$}
\STATE Encode past transitions $z_t = g_\phi(\tau^c_t)$
\STATE \phantom{Compute weights $\omega = h_\eta(z_t)$}
\STATE Gather data from environment interaction with $a_t \sim \pi_{\xi}(\cdot | s_t, z_t)$
\STATE Add data to $\mathcal{B}^c$ and update $\tau^c_t$
\ENDFOR
\STATE \textit{// Training} 
\FOR{$c$ in $\mathcal{C}_{\text{train}}$}
\STATE Sample RL batch $b^c \sim \mathcal{B}^c$ with corresponding context windows $\tau^c_t$
\STATE Encode past transitions $z_t = g_\phi(\tau^c_t)$
\STATE \phantom{}
\STATE Predict $\delta \hat{s}_{t+1} = f_\theta(s_t, a_t, z_t)$
\STATE $L^c_\xi = L_\xi(\pi_\xi, b^c, z_t.\mathrm{detach}())$
\STATE $L^c_\zeta = L_\zeta(Q_\zeta, b^c, z_t.\mathrm{detach}())$
\STATE $L^c_{\phi,\theta} = \left\lVert \delta \hat{s}_{t+1} - \delta s_{t+1} \right\rVert_2^2$
\ENDFOR
\STATE $\xi \gets \xi - \alpha_1 \nabla_\xi \sum_c L^c_\xi$
\STATE $\zeta \gets \zeta - \alpha_2 \nabla_\zeta \sum_c L^c_\zeta$
\STATE $\phi \gets \phi - \alpha_3 \nabla_\phi \sum_c L^c_{\phi,\theta}$
\STATE $\theta \gets \theta - \alpha_3 \nabla_\theta \sum_c L^c_{\phi,\theta}$
\STATE \phantom{$\eta \gets \eta - \alpha_3 \nabla_\eta \sum_c L^c_{\phi,\theta,\eta}$}
\ENDFOR
\end{algorithmic}
\caption{{Training loop \textbf{DMA/DMA*}}}
\label{alg:dma}
\end{algorithm}
\end{minipage}
\hfill
\begin{minipage}{0.47\textwidth}
\begin{algorithm}[H]
\centering
\begin{algorithmic}[1]
\REQUIRE Context set $\mathcal{C}_{\text{train}} = \{c_i\}_{i=1 \ldots n_c}$ sampled from context range for training, learning rates $\alpha_1, \alpha_2, \alpha_3$
\STATE Init. replay buffers $\mathcal{B}^c$ for each context
\STATE Init. context windows (deque) $\tau^c_{t=0} = \{(s_{t-k}, a_{t-k}, \delta s_{t+1-k})\}_{k:1\ldots K} \sim \pi_{random}$ for each context
\FOR{step in training steps}
\STATE \textit{// Collect data in environment} 
\FOR{$c$ in $\mathcal{C}_{\text{train}}$}
\STATE Encode past transitions $z_t = g_\phi(\tau^c_t)$
\STATE {\color{red} Compute weights $\omega = h_\eta(z_t)$}
\STATE Gather data from environment interaction with $a_t \sim \textcolor{red}{\pi_{\xi,\omega}(\cdot | s_t)}$
\STATE Add data to $\mathcal{B}^c$ and update $\tau^c_t$
\ENDFOR
\STATE \textit{// Training} 
\FOR{$c$ in $\mathcal{C}_{\text{train}}$}
\STATE Sample RL batch $b^c \sim \mathcal{B}^c$ with corresponding context windows $\tau^c_t$
\STATE Encode past transitions $z_t = g_\phi(\tau^c_t)$
\STATE {\color{red} Compute weights: $\omega = h_\eta(z_t)$}
\STATE Predict $\delta \hat{s}_{t+1} = \textcolor{red}{f_{\theta,\omega}(s_t,a_t)}$
\STATE $L^c_\xi = \textcolor{red}{L_\xi(\pi_{\xi,\omega.\mathrm{detach}()}, b^c)}$
\STATE $L^c_\zeta = \textcolor{red}{L_\zeta(Q_{\zeta,\omega.\mathrm{detach}()}, b^c)}$
\STATE $\textcolor{red}{L^c_{\phi,\theta,\eta}} = \left\lVert \delta \hat{s}_{t+1} - \delta s_{t+1} \right\rVert_2^2$
\ENDFOR
\STATE $\xi \gets \xi - \alpha_1 \nabla_\xi \sum_c L^c_\xi$
\STATE $\zeta \gets \zeta - \alpha_2 \nabla_\zeta \sum_c L^c_\zeta$
\STATE $\phi \gets \phi - \alpha_3 \nabla_\phi \sum_c \textcolor{red}{L^c_{\phi,\theta,\eta}}$
\STATE $\theta \gets \theta - \alpha_3 \nabla_\theta \sum_c \textcolor{red}{L^c_{\phi,\theta,\eta}}$
\STATE {\color{red} $\eta \gets \eta - \alpha_3 \nabla_\eta \sum_c L^c_{\phi,\theta,\eta}$}
\ENDFOR
\end{algorithmic}
\caption{{Training loop \textbf{DMA*-SH}}}
\label{alg:dmash}
\end{algorithm}
\end{minipage}

\newpage
\section{Hyperparameters and Implementation Details}
\label{sec:hyperparameters}
Table~\ref{tab:hyperparameters} provides an overview of the used hyperparameters of the SAC agent, the context encoder, the dynamic model and the hypernetwork. We did not perform any tuning for SAC and kept hyperparameters standard as provided in CleanRL \citep{huang2022cleanrl}. 

At the core of the context encoder, we use an LSTM layer whose final hidden state serves as the context representation $z_t$. This follows prior work employing MLPs, RNNs, or Transformer encoder layers as context encoders \citep{rakelly2019efficient, evans2022context}. Prior work \citep{rakelly2019efficient} used permutation-invariant encoders so that context inference does not depend on the temporal ordering of transitions within the context window. Motivated by this, we process a uniformly subsampled fraction of transitions from the context window in random order before feeding them to the LSTM. The context window size $K$ depends on the environment: tasks derived from the DM Control Suite require a larger $K$ than others. The context encoder samples a random $20\%$ of the $K$ transitions as input $\tau_t^c$; for example, in the DM Control Suite it observes only $128 \times 0.2 \approx 25$ transitions.
 
For our hypernetworks, we use the framework of \citet{oshg2019hypercl}. The adapter introduces a bottleneck, and importantly, we do not apply an activation function before it. Our design also allows the adapter to be bypassed via a skip connection. The design choices regarding the hypernetworks and adapters match those in DA \citep{beukman2023dynamics}, where the placement of activation functions is likewise crucial. We reimplemented DA and verified that its performance is comparable to the original implementation.

\begin{figure}[ht]
    \centering
    \includegraphics[width=0.98\textwidth]{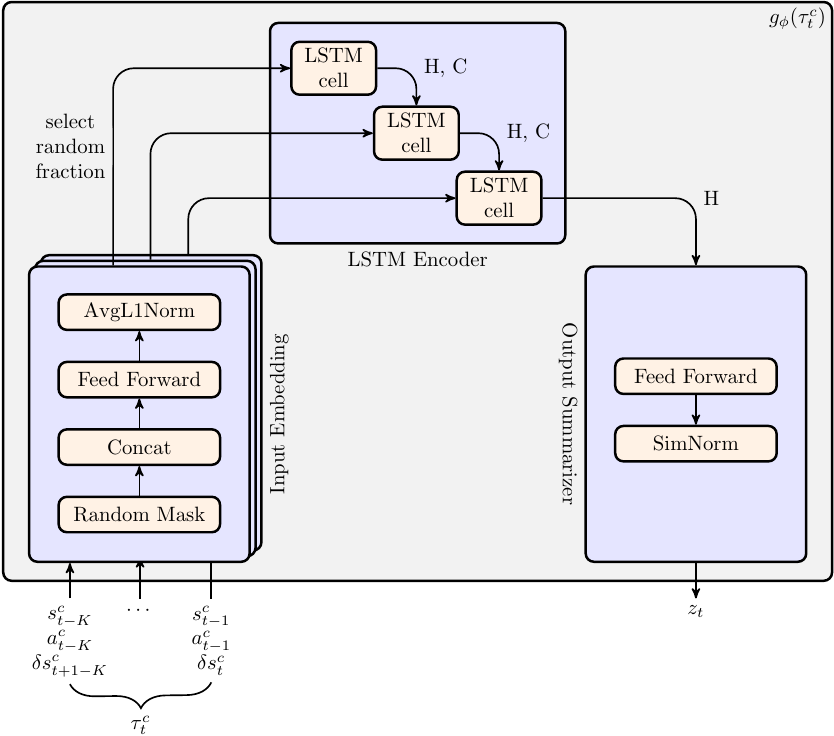}
    \caption{Architecture of the context encoder.}
    \label{fig:encoder}
\end{figure}

\begin{table}[ht]
  \caption{Hyperparameters.}
  \label{tab:hyperparameters}
  \centering
  \begin{tabular}{lll}
    \toprule

    Module & Name & Value \\
    \midrule
    SAC&Buffer capacity (per context) &100 000\\
    &Batch size &256\\
    &Discount $\gamma$ &0.99\\
    &Optimizer &Adam\\
    &Critic LR &0.001\\
    &Actor LR &0.0003\\
    &Temperature LR &0.0003\\
    &Critic soft target update $\tau$ &0.005\\
    &Init temperature & 1.0\\
    &Hidden dims & (256, 256) \\
    &Activation function & ReLU \\
    \midrule
    Context encoder&LR &0.0003\\
    &Model dim  & 32 \\
    &Context dim & 8 \\
    &Context window size $K$ (DI, ODE) & 24 \\
    &Context window size $K$ (DMC- and Gymnasium based) & 128 \\
    &Context window fraction & 0.2 \\
    &Context encoder type & LSTM \\
    &LSTM representation state & hidden H\\
    &Activation function & ReLU \\
    &Input Norm & AvgL1Norm \\
    &Output Norm & SimNorm \\
    &Input Masking (DMA*) & 0.2 \\
    &Input Masking (DMA*-SH) & 0.4 \\ 
    \midrule
    Dynamic model &LR &0.0003\\
    &Hidden dims & (256, 256) \\
    &Activation function & ReLU \\
    \midrule
    Hypernetwork &LR&0.0003\\
    &Hidden dims & (64, 64) \\
    &Activation function & ReLU \\
    \midrule
    Adapter & Bottleneck & 32\\
    &Skip connection & True \\
    &Pre adapter activation function & None \\
    &Post adapter activation function & ReLU \\
    \bottomrule
  \end{tabular}
\end{table}

\clearpage
\section{AIB: Discontinuous Context-to-Dynamics Environment Suite}\label{app:env_details}

This section gives concise descriptions of the environments summarized in Table~\ref{tab:envs_summary}, together with the context variables and our overlap classification (Definition~\ref{def:OverlapNonoverlap}). Unless noted otherwise, each environment uses two context dimensions sampled once per episode and held fixed throughout. The context does not vary within an episode, it varies across episodes.

At a high level, environments are \emph{overlapping} when a single robust context-unaware policy can often perform reasonably across the context range, as is typical for continuous physical parameters inducing gradual changes in optimal behavior. Environments are \emph{non-overlapping} when context instances induce mutually incompatible optimal control laws. AIB creates such structure through discontinuous action-effect changes, including actuator inversion, actuator permutations, and continuous weakly non-overlapping dynamics. Actuator inversion (Definition~\ref{def:actInv}) is the minimal binary case: a context $c\in\{\pm1\}$ multiplies the action before it affects the dynamics, $P^c(s'|s,a)=P(s'|s,c\cdot a)$. This makes identical actions induce opposite state transitions across modes. Actuator permutations similarly swap control channels, while ODE/ODE-$k$ provide continuous contexts with empirically low policy overlap over the benchmark ranges.

\subsection{Environments\label{sec:AIBenviron}}

Several tasks are taken from the DeepMind Control Suite (DMC) \citep{tassa2018deepmind} and Gymnasium MuJoCo \citep{towers2024gymnasium}. The ODE/ODE-$k$ environments follow prior work \citep{beukman2023dynamics}. The suite contains three types of non-overlap: binary sign inversions, actuator permutations, and continuous weakly non-overlapping dynamics.

\paragraph{DI (Custom; non-overlapping).}
We consider a custom two-dimensional double-integrator point-mass task. The state consists of planar position and velocity, and actions apply planar forces in the $x$ and $y$ directions. The agent is initialized away from the goal and receives a sparse reward: $+1$ on reaching the goal, $0$ otherwise. Context variables are (i) the mass and (ii) an actuator inversion factor $c\in\{\pm 1\}$ that flips the sign of the applied force. The inversion produces incompatible control laws related approximately by a sign flip, yielding non-overlapping structure.

\paragraph{DI-Friction (Custom; overlapping).}
This environment matches DI but includes frictional damping. Context variables are (i) mass and (ii) friction coefficient. Both are continuous physical parameters and typically induce gradual shifts in the optimal control law. Robust context-unaware agents can often handle a wide range, so we treat this family as overlapping.

\paragraph{DI-Perm (Custom; non-overlapping).} This environment considers an actuator permutation-variant of the DI. Context variables are (i) the mass and (ii) an actuator permutation factor $q \in \{0,1\}$ that swaps the $x$- and $y$-components of the applied force. For $q=0$, actions are applied normally; for $q=1$, the force components are exchanged, i.e. $(a_x,a_y)$ is applied as $(a_y,a_x)$. The permutation produces incompatible control laws related approximately by a coordinate permutation, yielding non-overlapping structure.

\paragraph{ODE (Custom; weakly non-overlapping).}
We include a scalar regulation task where the state evolves as
$$
x_{t+1}=x_t+\dot x_t\,dt,\qquad \dot x_t = c_1 a_t + c_2 a_t^2,
$$
with context $c=(c_1,c_2)\in\mathbb{R}^2$. The objective is to choose $a_t$ to keep $x_t$ near zero. Although the context variables are continuous (no explicit binary inversion), the action-to-state map depends multiplicatively and nonlinearly on $c$. Regulating $x_t > 0$ toward zero requires $\dot x_t < 0$. To first order, this requires $c_1 a_t < 0$: when $c_1 > 0$, the agent must choose $a_t < 0$, whereas when $c_1 < 0$, the agent must choose $a_t > 0$. Thus, the required action direction flips with $\mathrm{sign}(c_1)$. This acts as a continuous analog of actuator inversion induced by $c_1$ crossing $0$ rather than an explicit binary flag. The quadratic term $c_2 a_t^2$ adds context-dependent magnitude constraints, e.g., when $c_2 > 0$, excessively large $|a|$ makes the quadratic term dominate and can prevent $\dot x_t<0$ even if $c_1 a_t<0$. The primary action direction, however, is still dictated by $\mathrm{sign}(c_1)$. 
Empirically, \citet{beukman2023dynamics} report poor performance of context-unaware baselines on this task over their benchmark ranges, consistent with low policy overlap; we therefore classify ODE as {weakly non-overlapping}.

\paragraph{ODE-$k$ (Custom; weakly non-overlapping).}
This environment extends the ODE task to $k$ context dimensions via a higher-order polynomial
$$
x_{t+1}=x_t+\dot x_t\,dt,\qquad \dot x_t = \sum_{j=1}^{k} c_j a_t^{j},
$$
with context $c=(c_1,\ldots,c_k)$. In ODE-$k$, sign variation in odd-order coefficients ($c_1, c_3, \dots$) can enforce directional action flips analogous to the $c_1$ effect in ODE. Even with fixed signs, varying the coefficients can substantially alter the polynomial's shape, introducing multiple local minima or divergent directions that can cause the optimal control strategy to shift abruptly between disparate regimes. This increased structural complexity reduces policy overlap. Empirically, context-unaware agents perform poorly in these settings \citep{beukman2023dynamics}. We therefore classify ODE-$k$ as {weakly non-overlapping}.

\paragraph{Cartpole (DMC; non-overlapping).}
We use the cartpole-balance-v0 task, where the agent balances a pole by applying horizontal forces to its base. The environment is contextualized by (i) pole length and (ii) an actuator inversion factor $c\in\{\pm1\}$ that flips the sign of the applied force. This creates a discontinuous action-effect change and yields non-overlapping structure.

\paragraph{Cheetah (DMC; non-overlapping).}
We use the cheetah-run-v0 task, where a planar biped is rewarded for moving forward. The environment is contextualized by (i) leg-length scaling, which significantly affects gait dynamics, and (ii) an actuator inversion factor $c \in \{\pm 1\}$ applied to action effects. The inversion produces incompatible gaits/control commands across modes, yielding non-overlapping structure.

\paragraph{Reacher (Easy/Hard) (DMC; non-overlapping).}
We use the reacher-easy/hard-v0 tasks (easy and hard variants), where a two-link planar arm is rewarded for reaching a target sphere (larger in the easy variant than in the hard variant). The environment is contextualized by (i) an arm-length scaling factor, which changes the kinematics and dynamics, and (ii) an actuator inversion factor $c\in \{\pm 1\}$ that flips the effect of actions. The inversion induces incompatible reaching behaviors across modes, so we classify these contexts as non-overlapping.

\paragraph{Reacher (Easy/Hard)-Perm (DMC; non-overlapping).}
These environments consider actuator permutation-variants of Reacher (Easy/Hard). Context variables are (i) arm-length scaling and (ii) an actuator permutation factor $q\in\{0,1\}$ that exchanges the actuation of the two reacher links shoulder and wrist. For $q=0$, actions are applied normally; for $q=1$, the action dimensions are swapped. The permutation produces incompatible control laws, yielding non-overlapping structure.

\paragraph{BallInCup (DMC; overlapping).}
We use the ball\_in\_cup-catch-v0 task, where an actuated cup moves in the vertical plane to swing and catch a ball attached by a tendon. The reward signal is sparse, i.e., $+1$ if the ball is in the cup, $0$ otherwise. The environment is contextualized by (i) the tendon length and (ii) gravity. Since both context variables are continuous physical parameters that typically induce gradual changes in the swing dynamics, robust policies often transfer across ranges; we therefore classify this family as overlapping.

\paragraph{Walker (DMC; overlapping).}
We use the walker-walk-v0 task, where a planar walker is rewarded for moving forward. The environment is contextualized by (i) actuator strength and (ii) gravity. Both are continuous parameters that induce smooth variations in locomotion difficulty and dynamics; we classify this family as overlapping.

\paragraph{WalkerGym (Gymnasium; overlapping).}
We use the Walker2d-v5 task, where a planar walker is rewarded for moving forward. The environment is contextualized by (i) actuator strength and (ii) gravity. Both are continuous and typically yield gradual policy variation; we classify this family as overlapping.

\paragraph{HopperGym (Gymnasium; overlapping).}
We use the Hopper-v5 task, where a planar hopper is rewarded for moving forward without falling over. The environment is contextualized by (i) actuator strength and (ii) gravity. Both are continuous and generally yield gradual variations in feasible hopping gaits; we classify this family as overlapping.

\begin{table}[ht]
\caption{\textbf{Context ranges and return bounds for AIB environments.} 
Each environment samples a context per episode, held fixed throughout. Contexts are two-dimensional for all environments (except ODE-$k$, which uses $k$ coefficients with identical per-coefficient ranges to ODE). Columns Train, Eval-in, Eval-out specify the sampling supports for $\mathcal{C}_{\text{train}}$, $\mathcal{C}_{\text{eval-in}}$, and $\mathcal{C}_{\text{eval-out}}$, respectively; while supports may overlap, the sampled context instances are pairwise disjoint across sets. The last column gives return bounds $[J^{\mathrm{lo}}_E, J^{\mathrm{hi}}_E]$ used for score normalization.}
\label{tab:envs}
\centering
\fontsize{6pt}{6pt}\selectfont
\begin{tabular}{@{}llccccc@{}}
\toprule
& & \multicolumn{3}{c}{\textbf{Context ranges}} & \\
\cmidrule(lr){3-5}
\textbf{Environment} & \textbf{Context} & Train & Eval-in & Eval-out & \textbf{Training steps} &\textbf{Return bounds} \\
\midrule
\multirow{2}{*}{DI} 
  & mass & $[0.5,1.5]$ & $(0.5,1.5)$ & $[0.1,0.5)\cup(1.5,2.0]$ & \multirow{2}{*}{\num{100000}} & \multirow{2}{*}{$[0,100]$} \\
  & actuator inversion & $\{\pm1\}$ & $\{\pm1\}$ & $\{\pm1\}$ & \\
\midrule
\multirow{2}{*}{DI-Friction} 
  & mass & $[0.5,1.5]$ & $(0.5,1.5)$ & $[0.1,0.5)\cup(1.5,2.0]$ & \multirow{2}{*}{\num{100000}} & \multirow{2}{*}{$[0,100]$} \\
  & friction & $[0.5,1.5]$ & $(0.5,1.5)$ & $[0.1,0.5)\cup(1.5,2.0]$ & \\
\midrule
\multirow{2}{*}{DI-Perm} 
  & mass & $[0.5,1.5]$ & $(0.5,1.5)$ & $[0.1,0.5)\cup(1.5,2.0]$ & \multirow{2}{*}{\num{100000}} & \multirow{2}{*}{$[0,100]$} \\
  & actuator permutation & $\{0,1\}$ & $\{0,1\}$ & $\{0,1\}$ & \\
\midrule
\multirow{2}{*}{ODE} 
  & $c_1$ & $[-5,5]$ & $(-5,5)$ & $[-10,-5)\cup(5,10]$ & \multirow{2}{*}{\num{100000}} & \multirow{2}{*}{$[0,200]$} \\
  & $c_2$ & $[-5,5]$ & $(-5,5)$ & $[-10,-5)\cup(5,10]$ & \\
\midrule
ODE-$k$ & all $c_i$ ($i\in\{1,\ldots,k\}$) & $[-5,5]$ & $(-5,5)$ & $[-10,-5)\cup(5,10]$ & \num{100000} & $[0,200]$ \\
\midrule
\multirow{2}{*}{Cartpole} 
  & pole length & $[0.3,0.85]$ & $(0.3,0.85)$ & $[0.1,0.3)\cup(0.85,2.0]$ & \multirow{2}{*}{\num{100000}} & \multirow{2}{*}{$[0,1000]$} \\
  & actuator inversion & $\{\pm1\}$ & $\{\pm1\}$ & $\{\pm1\}$ & \\
\midrule
\multirow{2}{*}{Cheetah} 
  & leg length & $[0.8,1.2]$ & $(0.8,1.2)$ & $[0.4,0.8)\cup(1.2,1.6]$ & \multirow{2}{*}{\num{200000}} & \multirow{2}{*}{$[0,1000]$} \\
  & actuator inversion & $\{\pm1\}$ & $\{\pm1\}$ & $\{\pm1\}$ & \\
\midrule
\multirow{2}{*}{Reacher (E/H)} 
  & arm length & $[0.8,1.2]$ & $(0.8,1.2)$ & $[0.4,0.8)\cup(1.2,1.6]$ & \multirow{2}{*}{\num{200000}} & \multirow{2}{*}{$[0,1000]$} \\
  & actuator inversion & $\{\pm1\}$ & $\{\pm1\}$ & $\{\pm1\}$ & \\
\midrule
\multirow{2}{*}{Reacher (E/H)-Perm} 
  & arm length & $[0.8,1.2]$ & $(0.8,1.2)$ & $[0.4,0.8)\cup(1.2,1.6]$ & \multirow{2}{*}{\num{200000}} & \multirow{2}{*}{$[0,1000]$} \\
  & actuator permutation & $\{0,1\}$ & $\{0,1\}$ & $\{0,1\}$ & \\
\midrule
\multirow{2}{*}{BallInCup} 
  & gravity & $[8.0,12.0]$ & $(8.0,12.0)$ & $[1.0,8.0)\cup(12.0,20.0]$ & \multirow{2}{*}{\num{200000}} & \multirow{2}{*}{$[0,1000]$} \\
  & tendon length & $[0.24,0.36]$ & $(0.24,0.36)$ & $[0.1,0.24)\cup(0.36,0.5]$ & \\
\midrule
\multirow{2}{*}{Walker} 
  & gravity & $[4.9,14.7]$ & $(4.9,14.7)$ & $[1.0,4.9)\cup(14.7,19.6]$ & \multirow{2}{*}{\num{200000}} & \multirow{2}{*}{$[0,1000]$} \\
  & actuator strength & $[0.5,1.5]$ & $(0.5,1.5)$ & $[0.1,0.5)\cup(1.5,2.0]$ & \\
\midrule
\multirow{2}{*}{WalkerGym} 
  & gravity & $[4.9,14.7]$ & $(4.9,14.7)$ & $[1.0,4.9)\cup(14.7,19.6]$ & \multirow{2}{*}{\num{500000}} & \multirow{2}{*}{$[0,5000]$} \\
  & actuator strength & $[0.5,1.5]$ & $(0.5,1.5)$ & $[0.1,0.5)\cup(1.5,2.0]$ & \\
\midrule
\multirow{2}{*}{HopperGym} 
  & gravity & $[4.9,14.7]$ & $(4.9,14.7)$ & $[1.0,4.9)\cup(14.7,19.6]$ & \multirow{2}{*}{\num{500000}} & \multirow{2}{*}{$[0,3800]$} \\
  & actuator strength & $[0.5,1.5]$ & $(0.5,1.5)$ & $[0.1,0.5)\cup(1.5,2.0]$ & \\
\bottomrule
\end{tabular}
\end{table}

\subsection{Comparison with Existing Benchmarks}\label{sec:Compbench}
We distinguish AIB from two widely used contextual/multitask RL benchmarks: CARL~\citep{benjamins2023contextualize} and Meta-World~\citep{yu2020meta,mclean2025meta}.

\textbf{CARL} contextualizes standard RL environments by exposing environment parameters as an explicit context (e.g., gravity, mass, friction, actuator strength)~\citep{benjamins2023contextualize}. These variations are primarily continuous parameter shifts designed to preserve the environment's intended semantics, and the benchmark does not target discontinuous context-to-dynamics changes in how actions affect state transitions.

\textbf{Meta-World} provides 50 qualitatively distinct robotic manipulation tasks (e.g., door opening, drawer closing, button pressing, object pushing) with parametric variation in goal and object positions within each task~\citep{yu2020meta}. The benchmark evaluates whether agents can learn multiple skills and transfer to held-out tasks. Within any given task, the underlying transition dynamics remain fixed; only goals and object configurations change and the benchmark does not target context-induced dynamics shifts.

\textbf{AIB} still encompasses smooth physical parameter variations but complements these benchmarks by explicitly targeting discontinuous context-to-dynamics structure. In non-overlapping AIB settings, identical actions can induce incompatible effects across modes, e.g., opposite forces under inversion or swapped physical effects under actuator permutation, so optimal policies need not vary smoothly with context. Such discontinuities can arise in practice from coordinate-frame mismatches, mirrored joints, swapped motor polarities, swapped control channels, or interface inversions in teleoperation. See Remark~\ref{rem:Nonovpract}. Neither CARL nor Meta-World is designed to isolate such failure modes, making AIB a targeted \textit{stress test} for zero-shot generalization under discontinuous context shifts.

\clearpage
\section{Ablations and Design Rationale}
\label{sec:ablations}

We briefly summarize the key design choices in DMA*-SH. Additional implementation details for the context encoder and hypernetwork appear in Appendix~\ref{sec:hyperparameters}.
See Appendices~\ref{app:varcompress}--\ref{app:gradcon} for extended discussion of how these choices shape the learned context geometry.

\textbf{Input masking.}
Hypernetworks can amplify small perturbations in $z_t$ into large changes in $\omega$. Random masking regularizes this pathway by encouraging $g_\phi$ to learn a redundant, distributed code that spreads information across the context window and across coordinates, so that $\omega=h_\eta(z_t)$ remains context-sensitive but less dependent on any single input component. A masking ratio of $40\%$ performed best in our experiments, and performance is robust across a range of substantial masking ratios (Figure~\ref{fig:iqm-mask}).

\textbf{AvgL1Norm input normalization.}
After masking and a linear projection, we normalize per sample using AvgL1Norm. Among the input-normalization options we tested within the context window, including LayerNorm~\citep{ba2016layer}, AvgL1Norm~\citep{fujimoto2023sale}, SimNorm~\citep{lavoie2023simplicial,hansen2024tdmpc2}, and WindowNorm, AvgL1Norm yielded the most reliable performance (Figure~\ref{fig:iqm-innorm}).

\textbf{SimNorm output normalization.}
Normalizing the context embedding $z_t$ is critical for stable online training. Among LayerNorm, AvgL1Norm, and SimNorm, the best performance in our experiments was achieved with SimNorm (Figure~\ref{fig:iqm-outnorm}).

\textbf{Adapter weight sharing.}
Sharing a single dynamics-trained hypernetwork with the policy and Q-function is more effective than training separate hypernetworks for the RL modules. An ablation using separate hypernetworks is presented in Figure~\ref{fig:iqm-hyper}. Also see Appendix~\ref{app:gradcon} and Figure~\ref{fig:gradcon} for an extended discussion.

We perform a range of ablations that motivate our design choices. For these ablations, results are aggregated across a subset of contextualized environments from the AIB benchmark (Section~\ref{sec:environments} and Appendix~\ref{app:env_details}): DI, DI-Friction, ODE, Cartpole, BallInCup, and Walker.

Figure~\ref{fig:poi} reports probability of improvement (PoI) for DMA* and DMA*-SH, following \citet{agarwal2021deep}. PoI indicates how likely one design choice improves over another, but it does not quantify the magnitude of the improvement. In Figure~\ref{fig:iqm} we compare vanilla DMA to DMA* and DMA*-SH, indicating that the design choices cumulatively have a significant impact.

In Figure~\ref{fig:iqm-mask} we compare IQM scores \citep{agarwal2021deep} for different ratios of random input masking of actions, states, and next-state differences in $\tau_t^c$, suggesting that $20\%$ is beneficial for DMA* and
$40\%$ is beneficial for DMA*-SH. Especially for DMA*-SH, substantial performance degradation appears only at relatively high masking ratios, indicating robustness to variation in the sampled context inputs $\tau_t^c$.

In Figures~\ref{fig:iqm-innorm} and~\ref{fig:iqm-outnorm} we compare different normalization variants for the input and output of the context encoder. The intuition about AvgL1Norm and SimNorm provided in Section~\ref{sec:dma} and in prior work \citep{fujimoto2023sale,lavoie2023simplicial,hansen2024tdmpc2} is also reflected in performance. Our dynamics-alignment loss~\eqref{eq:dma}, which operates on state differences, encourages the encoder to organize $z_t$ according to relative differences between contexts rather than absolute magnitudes, consistent with a scale-insensitive perspective; see
Appendix~\ref{sec:directional_bias} for discussion.

Figure~\ref{fig:iqm-window} motivates the choice of window size $K=24$ for DI, DI-Friction, and ODE, and $K=128$ for the DMC- and Gymnasium-based environments. The impact of $K$ appears modest once a sufficiently large minimum window size is used.

Overall, the ablations indicate that appropriate normalization is important for dynamics-aligned context encoders in zero-shot generalization for contextual RL, and that input masking can further improve performance. This is also reflected in Figure~\ref{fig:iqm-hyper} for DMA*-SH.

Furthermore, Figure~\ref{fig:iqm-hyper} shows that the shared hypernetwork design outperforms an architecture with separate hypernetworks for the dynamics model, policy, and Q-function (DMA*-H; Section~\ref{app:gradcon}). To further isolate the contribution of dynamics alignment for RL, we constructed a variant DMA*-H (RL only) that applies a hypernetwork to the RL modules but not to the dynamics model, so the RL adapter weights are not dynamics-aligned. Apart from a KL-loss term and a contrastive-loss term, DMA*-H (RL only) closely mirrors the hypernetwork structure of R2PGO \citep{li2024efficient} in an online RL setting. 
Allowing actor gradients to propagate directly through the shared hypernetwork leads to near-zero returns (DMA*-SH, actor non-detached). This shows that letting RL rewrite the shared $\omega$ is harmful, and that gradient detachment is necessary; see Appendix~\ref{sec:TxrecurDMASH}. 
Figure~\ref{fig:iqm-hyper} also shows that normalization, masking, hypernetwork sharing, and dynamics-model alignment are all beneficial; see Appendix~\ref{sec:directional_bias} for a detailed discussion.

For the unaware, recurrent baseline Amago-v2 \citep{grigsby2024bamago}, we use a GRU trajectory encoder instead of the default Transformer variant. This is justified by overall slightly stronger performance (Figure~\ref{fig:iqm-amago}).

PEARL \citep{rakelly2019efficient} is used as a baseline to compare against DMA* and DMA*-SH. PEARL was originally proposed for meta-RL settings where the latent context captures task variation via rewards, and the context encoder is trained through RL losses. To better match our dynamics-aligned setting, we consider a variant that couples PEARL's probabilistic context encoder to a dynamics model, which we denote as DMA-Pearl. A comparison of PEARL and DMA-Pearl is provided in Figure~\ref{fig:pearl}. Furthermore, Figure~\ref{fig:pearl_kl} reports performance for different choices of $\beta$ that weight the KL regularization in PEARL. DMA-Pearl improves over vanilla DMA (Table~\ref{tab:aer}), highlighting the benefits of a probabilistic context encoder with KL regularization in this setting. However, integrating these design elements into DMA* and DMA*-SH does not yield further gains (Figure~\ref{fig:pearl}).

We also conducted experiments with VariBAD \citep{zintgraf2020varibad}, testing two different KL-weights $\beta$. While it achieves comparable performance in the overlapping DI-Friction setting, it struggles considerably with the non-overlapping contextualizations in DI and ODE. For this reason, and given that we already include DMA-Pearl in our comparisons, we exclude VariBAD as a baseline in the remainder of the paper (Section~\ref{sec:baselines}).

Methods built on smooth latent dynamics priors such as VariBAD and DMA-Pearl can struggle when their objectives explicitly encourage latent embeddings to vary continuously with respect to context trajectories. This inductive bias can be mismatched to tasks whose true context-to-dynamics map exhibits genuine discontinuities (as in DI with actuator inversion), where the correct representation requires a sign flip rather than a smooth interpolation (Remark~\ref{rem:VariBAD}). In contrast, DMA*-SH can represent such discontinuities through multiplicative modulation in the hypernetwork pathway, which allows sharp directional changes in the induced policy and critic without an explicit ELBO or latent-prior penalty. As a result, DMA*-SH avoids the continuity bias inherent to latent-prior methods and can realize the functional geometry required for discontinuous contextual RL.

\begin{figure}[ht]
    \centering
    \begin{subfigure}[b]{0.95\textwidth}
        \centering
        \includegraphics[height=2.5cm]{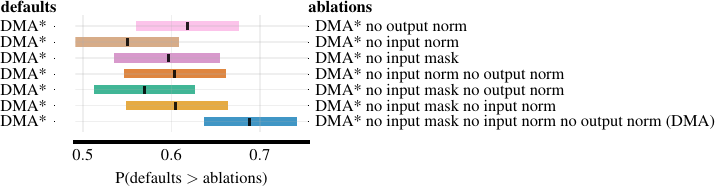}
        \caption{DMA*.}
    \end{subfigure}
    \par\bigskip
    \begin{subfigure}[b]{0.95\textwidth}
        \centering
        \includegraphics[height=2.5cm]{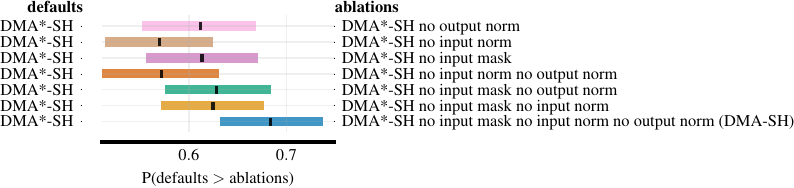}
        \caption{DMA*-SH.}
    \end{subfigure}
    \caption{Probability of improvement (POI) \citep{agarwal2021deep} based on AER scores (Section~\ref{sec:metrics}) aggregated over six contextualized environments and over contexts in the three context sets $\mathcal{C}_{\text{train}}$, $\mathcal{C}_{\text{eval-in}}$ and $\mathcal{C}_{\text{eval-out}}$. For the proposed DMA* and DMA*-SH, we ablate separately the random masking, input and output normalization, a combination of two, or everything at once (DMA, DMA-SH).
    }
    \label{fig:poi}
\end{figure}

\begin{figure}[ht]
    \centering
    \begin{subfigure}[b]{0.5\textwidth}
        \centering
        \includegraphics[width=0.95\textwidth]{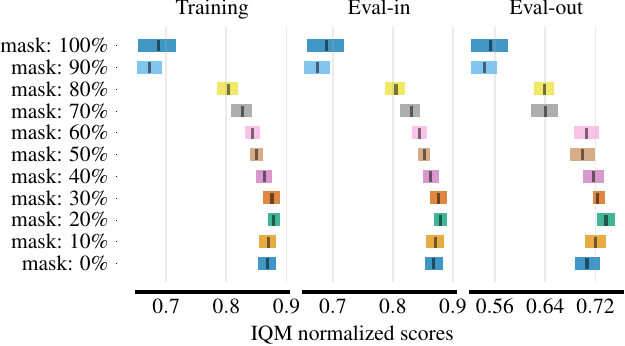}
        \caption{DMA*.}
    \end{subfigure}\hfill
    \begin{subfigure}[b]{0.5\textwidth}
        \centering
        \includegraphics[width=0.95\textwidth]{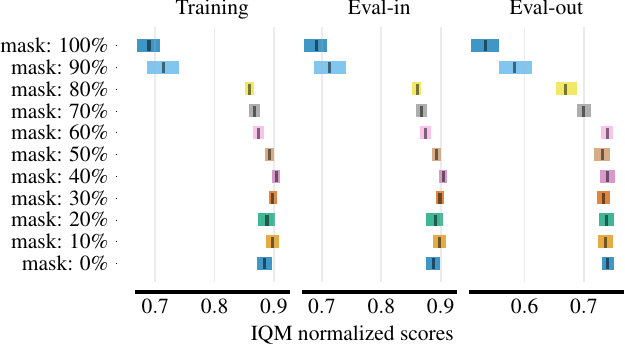}
        \caption{DMA*-SH.}
    \end{subfigure}
    \caption{Interquartile mean (IQM) \citep{agarwal2021deep} based on AER scores (Section~\ref{sec:metrics}) aggregated over six contextualized environments. We distinguish results for contexts in the three context sets $\mathcal{C}_{\text{train}}$, $\mathcal{C}_{\text{eval-in}}$ and $\mathcal{C}_{\text{eval-out}}$. We compare different ratios for the random input masking. When averaging over the three context sets, best performance is achieved using a ratio of $20\%$ for DMA* and $40\%$ for DMA*-SH.
    }
    \label{fig:iqm-mask}
\end{figure}

\begin{figure}[ht]
    \centering
    \begin{subfigure}[b]{0.5\textwidth}
        \centering
        \includegraphics[width=0.98\textwidth]{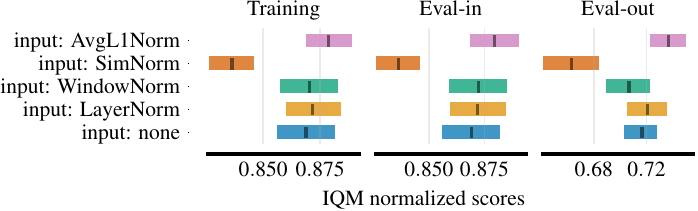}
        \caption{DMA*.}
    \end{subfigure}\hfill
    \begin{subfigure}[b]{0.5\textwidth}
        \centering
        \includegraphics[width=0.98\textwidth]{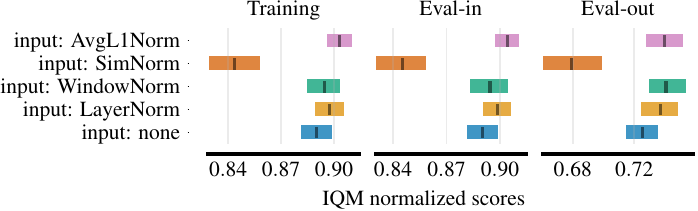}
        \caption{DMA*-SH.}
    \end{subfigure}
    \caption{Interquartile mean (IQM) \citep{agarwal2021deep} based on AER scores (Section~\ref{sec:metrics}) aggregated over six contextualized environments. We distinguish results for contexts in the three context sets $\mathcal{C}_{\text{train}}$, $\mathcal{C}_{\text{eval-in}}$ and $\mathcal{C}_{\text{eval-out}}$. We compare different types of input normalization. When averaging over the three context sets, best performance is achieved using AvgL1Norm in both DMA* and DMA*-SH.
    }
    \label{fig:iqm-innorm}
\end{figure}

\begin{figure}[ht]
    \centering
    \begin{subfigure}[b]{0.5\textwidth}
        \centering
        \includegraphics[width=0.98\textwidth]{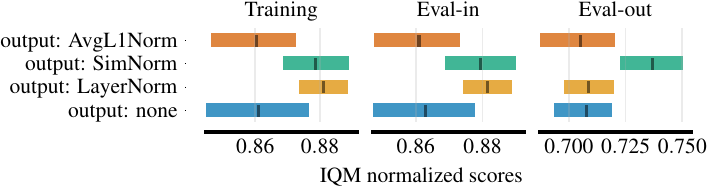}
        \caption{DMA*.}
    \end{subfigure}\hfill
    \begin{subfigure}[b]{0.5\textwidth}
        \centering
        \includegraphics[width=0.98\textwidth]{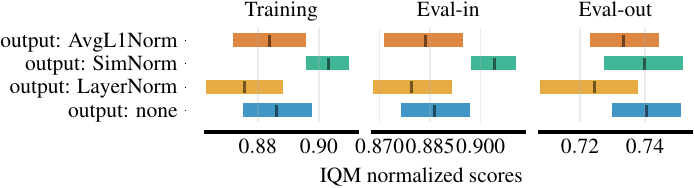}
        \caption{DMA*-SH.}
    \end{subfigure}
    \caption{Interquartile mean (IQM) \citep{agarwal2021deep} based on AER scores (Section~\ref{sec:metrics}) aggregated over six contextualized environments. We distinguish results for contexts in the three context sets $\mathcal{C}_{\text{train}}$, $\mathcal{C}_{\text{eval-in}}$ and $\mathcal{C}_{\text{eval-out}}$. We compare different types of output normalization. When averaging over the three context sets, best performance is achieved using SimNorm in both DMA* and DMA*-SH.
    }
    \label{fig:iqm-outnorm}
\end{figure}

\begin{figure}[ht]
    \centering
    \begin{subfigure}[b]{0.5\textwidth}
        \centering
        \includegraphics[width=0.92\textwidth]{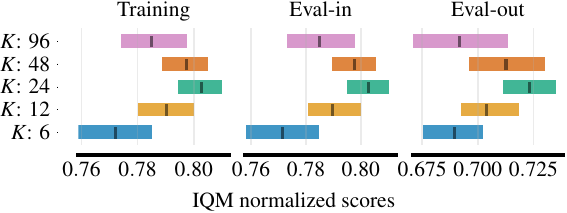}
        \caption{DMA* for DI and ODE.}
    \end{subfigure}\hfill
    \begin{subfigure}[b]{0.5\textwidth}
        \centering
        \includegraphics[width=0.92\textwidth]{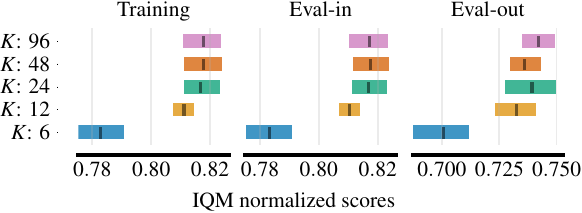}
        \caption{DMA*-SH for DI and ODE.}
    \end{subfigure}
    \par\bigskip
    \begin{subfigure}[b]{0.5\textwidth}
        \centering
        \includegraphics[width=0.92\textwidth]{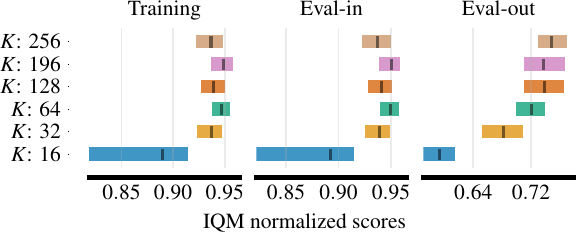}
        \caption{DMA* for DMC and Gymnasium environments.}
    \end{subfigure}\hfill
    \begin{subfigure}[b]{0.5\textwidth}
        \centering
        \includegraphics[width=0.92\textwidth]{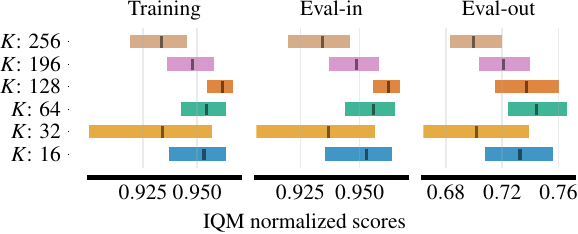}
        \caption{DMA*-SH for DMC and Gymnasium environments.}
    \end{subfigure}
    \caption{{Interquartile mean (IQM) comparing different context window sizes justifying the choice of 24 for DI and ODE environments and 128 for DMC- and Gymnasium-based environments.}
    }
    \label{fig:iqm-window}
\end{figure}

\begin{figure}[ht]
    \centering
\centering
    \includegraphics[width=0.7\textwidth]{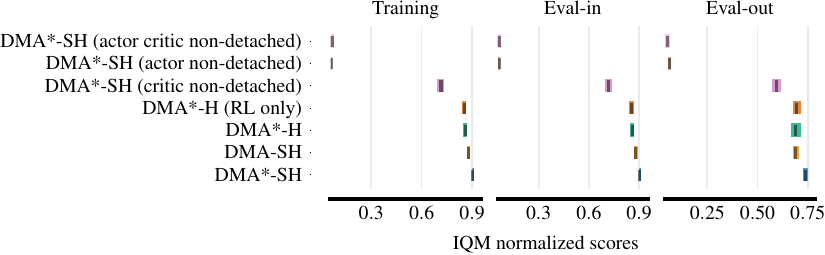}
    \caption{
    Interquartile mean (IQM) \citep{agarwal2021deep} based on AER scores (Section~\ref{sec:metrics}) aggregated over six contextualized environments. We distinguish results for contexts in the three context sets $\mathcal{C}_{\text{train}}$, $\mathcal{C}_{\text{eval-in}}$, and $\mathcal{C}_{\text{eval-out}}$. We compare DMA*-SH to a variant without normalization and masking (DMA-SH) and to an architecture that does not share the hypernetwork (DMA*-H). Instead, DMA*-H uses separate hypernetworks for the dynamics model, policy, and Q-value function. Apart from a KL-loss term and a contrastive-loss term, DMA*-H (RL only) closely resembles R2PGO \citep{li2024efficient} in an online RL setting. It does not employ a hypernetwork for the dynamics model, so the adapter weights for the RL modules are not aligned with the dynamics model. We also ablate gradient flow through the actor and/or critic pathways, denoted DMA*-SH (actor/critic non-detached), by allowing RL gradients to update the shared hypernetwork. Results show that detachment is critical, especially in non-overlapping settings; see Appendix~\ref{sec:TxrecurDMASH}. Our results show that normalization, masking, hypernetwork sharing, and dynamics-model alignment are all beneficial. For a detailed discussion, see Appendix~\ref{sec:directional_bias}.
    }
    \label{fig:iqm-hyper}
\end{figure}

\begin{figure}[ht]
    \centering
\centering
    \includegraphics[width=0.55\textwidth]{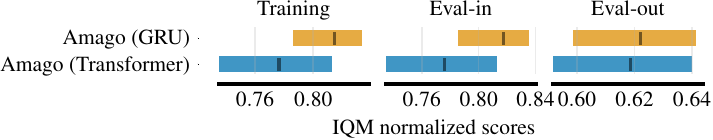}
    \caption{
    Interquartile mean (IQM) \citep{agarwal2021deep} based on AER scores (Section~\ref{sec:metrics}) aggregated over six contextualized environments. We distinguish results for contexts in the three context sets $\mathcal{C}_{\text{train}}$, $\mathcal{C}_{\text{eval-in}}$, and $\mathcal{C}_{\text{eval-out}}$. For Amago, we compare GRU- and Transformer-based trajectory encoders, justifying the use of the GRU variant.
    }
    \label{fig:iqm-amago}
\end{figure}

\begin{figure}[ht]
    \centering
    \begin{subfigure}[b]{0.5\textwidth}
        \centering
        \includegraphics[width=0.98\textwidth]{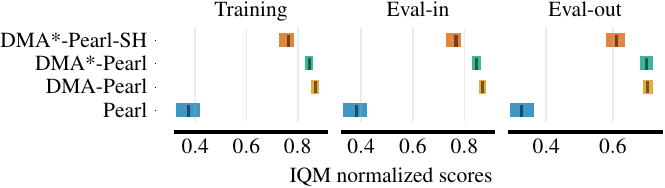}
        \caption{Modified Pearl.}
        \label{fig:pearl}
    \end{subfigure}\hfill
    \begin{subfigure}[b]{0.5\textwidth}
        \centering
        \includegraphics[width=0.98\textwidth]{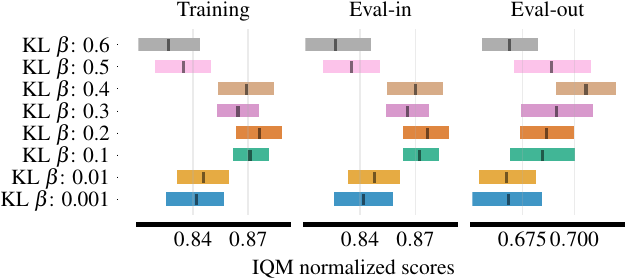}
        \caption{$\beta$ KL weighting.}
        \label{fig:pearl_kl}
    \end{subfigure}
    \caption{Interquartile mean (IQM) \citep{agarwal2021deep} based on AER scores (Section~\ref{sec:metrics}) aggregated over six contextualized environments. We distinguish results for contexts in the three context sets $\mathcal{C}_{\text{train}}$, $\mathcal{C}_{\text{eval-in}}$ and $\mathcal{C}_{\text{eval-out}}$. In a) we compare the original Pearl approach aligned with the Q-function to the dynamic model-aligned variant that we are using as a baseline, DMA-Pearl. Additionally, we incorporate our additions to DMA and the shared hypernetwork context utilization to Pearl. In b) we test different $\beta$ weighting parameters for the KL term in Pearl and decided for $\beta=0.4$ when using DMA-Pearl as a baseline.
    }
    \label{fig:iqm-pearl}
\end{figure}

\begin{figure}[!htb]
    \centering
    \includegraphics[width=0.75\textwidth]{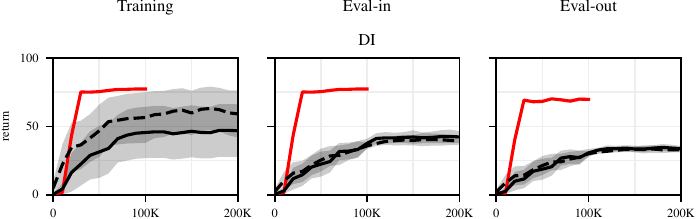}\\\vspace{0.2cm}
    \includegraphics[width=0.75\textwidth]{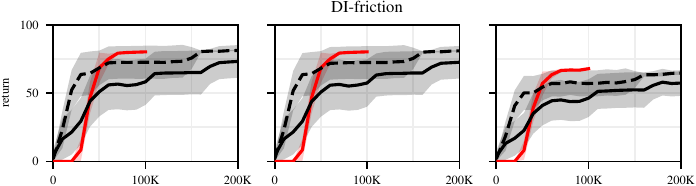}\\\vspace{0.2cm}
    \includegraphics[width=0.75\textwidth]{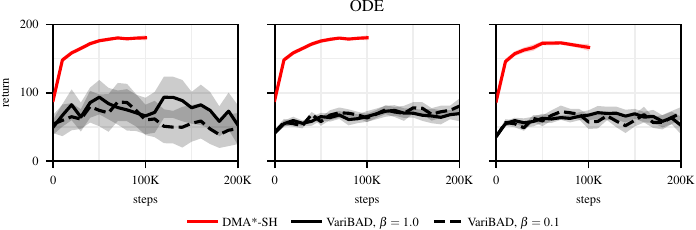}
    \caption{{Returns over steps, averaged over the contexts used for training and evaluation, respectively. Comparison to the meta-RL approach VariBAD \citep{zintgraf2020varibad}. See Remark~\ref{rem:VariBAD}. VariBAD is based on the on-policy PPO, hence we allow for more environment steps. We do not observe any improvement after 200K steps. Two KL-weights ($\beta$) are tested.}}
    \label{fig:returns_varibad}
\end{figure}

\clearpage

\section{Detailed Results}
\label{sec:detailed-results}

\subsection{Separate Result Tables, Learning Curves and Performance Gains}

Table~\ref{tab:aer} aggregates AER scores over the three context sets. For a more detailed view, Tables~\ref{tab:aer-train}--\ref{tab:aer-eval-out} report results separately on $\mathcal{C}_{\text{train}}$, $\mathcal{C}_{\text{eval-in}}$, and $\mathcal{C}_{\text{eval-out}}$. DMA*-SH performs favorably across all three splits.

Figure~\ref{fig:returns} presents learning curves for six environments, shown separately for each context set. Depending on the environment, we run $\num{100000}$--$\num{500000}$ gradient update steps and, for each context instance, the same number of environment steps. Since we train with $n_c=20$ contexts, this corresponds to $\num{2000000}$--$\num{10000000}$ total environment steps (Table~\ref{tab:envs}). DMA*-SH exhibits consistently strong performance across these settings.

\begin{table}[ht]
  \caption{
    AER scores with $95\%$ confidence intervals for all environments. Results for the \textbf{context set $\mathcal{C}_{\text{train}}$}, comparing DMA*, DMA*-SH, and all baselines. Best AER scores are in bold; if multiple methods are highlighted for an environment, their differences are not distinguishable under the probability of improvement with $95\%$ confidence intervals \citep{agarwal2021deep}. \textit{All} aggregates all environments, \textit{Overlap} aggregates those with overlapping contexts, and \textit{Non-overlap} aggregates those with non-overlapping contexts. For the rows \textit{All}, \textit{Overlap}, and \textit{Non-overlap}, scores are aggregated after environment-wise min--max scaling using environment-specific return bounds (Table~\ref{tab:envs}).
  }
  \fontsize{5pt}{5pt}\selectfont
  \centering
  \label{tab:aer-train}
  \begin{tabular}{lrrrrrrrr}
    \toprule
    &\multicolumn{2}{c}{Context-Aware}&\multicolumn{2}{c}{Context-Unaware}&\multicolumn{4}{c}{Context-Inferred} \\
    \cmidrule(l){2-3}\cmidrule(l){4-5}\cmidrule(l){6-9}
    Name & Concat & DA & DR & Amago& DMA & DMA-Pearl & \textbf{DMA*} & \textbf{DMA*-SH} \\
    \midrule

    DI & 75 [72, 77] & \textbf{78 [77, 78]} & 17 [8, 25] & \textbf{66 [55, 75]} & 74 [73, 75] & 73 [71, 74] & 77 [76, 78] & \textbf{78 [78, 79]} \\
    DI-Friction & 71 [55, 79] & 80 [79, 80] & 72 [56, 81] & \textbf{82 [81, 82]} & 62 [45, 74] & 79 [78, 80] & 71 [55, 80] & 81 [80, 81] \\
    DI-Perm & 77 [76, 78] & \textbf{78 [77, 79]} & 54 [43, 63] & \textbf{77 [71, 81]} & 76 [76, 77] & 75 [74, 76] & 76 [74, 77] & \textbf{78 [77, 79]} \\
    ODE & \textbf{180 [175, 184]} & \textbf{183 [179, 188]} & 63 [54, 73] & \textbf{178 [176, 180]} & 173 [170, 176] & \textbf{174 [167, 181]} & \textbf{179 [175, 183]} & \textbf{183 [179, 186]} \\
    Cartpole & 929 [912, 945] & 934 [902, 960] & 658 [611, 706] & 667 [586, 755] & 919 [892, 944] & 904 [854, 943] & 941 [919, 961] & \textbf{972 [959, 981]} \\
    Cheetah & 437 [418, 457] & 452 [438, 469] & 319 [293, 346] & \textbf{488 [475, 505]} & \textbf{461 [429, 491]} & 428 [403, 458] & \textbf{463 [443, 485]} & \textbf{484 [462, 504]} \\
    Reacher (E) & \textbf{923 [887, 954]} & \textbf{916 [870, 955]} & 584 [532, 639] & \textbf{942 [929, 954]} & \textbf{903 [861, 939]} & \textbf{912 [877, 944]} & \textbf{925 [898, 950]} & \textbf{934 [917, 950]} \\
    Reacher (E)-Perm & \textbf{916 [883, 946]} & \textbf{899 [857, 932]} & 691 [655, 731] & \textbf{910 [888, 930]} & \textbf{921 [888, 952]} & \textbf{929 [900, 954]} & \textbf{902 [875, 929]} & \textbf{933 [909, 954]} \\
    Reacher (H) & 724 [606, 823] & 756 [704, 811] & 290 [210, 381] & \textbf{913 [897, 927]} & 700 [616, 780] & 735 [649, 801] & 743 [650, 825] & \textbf{849 [773, 908]} \\
    Reacher (H)-Perm & 768 [666, 859] & 804 [762, 849] & 363 [261, 468] & 807 [753, 852] & 716 [604, 799] & 731 [676, 793] & 727 [623, 813] & \textbf{895 [851, 934]} \\
    BallInCup & \textbf{976 [973, 977]} & 973 [971, 974] & 961 [940, 974] & 728 [577, 868] & \textbf{975 [974, 977]} & \textbf{977 [975, 978]} & \textbf{975 [973, 976]} & \textbf{975 [973, 977]} \\
    Walker & \textbf{900 [890, 907]} & 882 [864, 898] & \textbf{907 [898, 916]} & 845 [831, 857] & 865 [812, 898] & \textbf{908 [901, 915]} & 885 [866, 898] & \textbf{915 [905, 925]} \\
    WalkerGym & 3068 [2751, 3353] & 3834 [3663, 4030] & 3171 [2946, 3400] & \textbf{4551 [4217, 4894]} & 3502 [3264, 3733] & 3571 [3282, 3876] & 3333 [2960, 3718] & 3726 [3606, 3820] \\
    HopperGym & \textbf{2847 [2740, 2947]} & \textbf{2851 [2778, 2926]} & 2664 [2590, 2737] & 2792 [2704, 2877] & \textbf{2939 [2845, 3026]} & \textbf{2938 [2864, 3003]} & \textbf{2956 [2895, 3024]} & 2853 [2829, 2879] \\
    \midrule
    All & 0.79 [0.77, 0.81] & 0.81 [0.81, 0.82] & 0.56 [0.54, 0.58] & 0.79 [0.78, 0.8] & 0.78 [0.76, 0.8] & 0.8 [0.79, 0.8] & 0.8 [0.78, 0.81] & \textbf{0.84 [0.83, 0.84]} \\
    Overlap & 0.79 [0.75, 0.82] & \textbf{0.83 [0.82, 0.84]} & 0.79 [0.75, 0.81] & \textbf{0.81 [0.78, 0.84]} & 0.79 [0.75, 0.82] & \textbf{0.83 [0.82, 0.85]} & 0.8 [0.76, 0.83] & \textbf{0.84 [0.83, 0.84]} \\
    Non-overlap & 0.79 [0.77, 0.81] & 0.8 [0.79, 0.81] & 0.44 [0.41, 0.46] & 0.78 [0.77, 0.8] & 0.78 [0.75, 0.8] & 0.78 [0.77, 0.79] & 0.79 [0.77, 0.81] & \textbf{0.84 [0.83, 0.85]} \\
    
    \bottomrule
  \end{tabular}
\end{table}

\begin{table}[ht]
  \caption{
      AER scores with $95\%$ confidence intervals for all environments. Results for the \textbf{context set $\mathcal{C}_{\text{eval-in}}$}, comparing DMA*, DMA*-SH, and all baselines. Best AER scores are in bold; if multiple methods are highlighted for an environment, their differences are not distinguishable under the probability of improvement with $95\%$ confidence intervals \citep{agarwal2021deep}. \textit{All} aggregates all environments, \textit{Overlap} aggregates those with overlapping contexts, and \textit{Non-overlap} aggregates those with non-overlapping contexts. For the rows \textit{All}, \textit{Overlap}, and \textit{Non-overlap}, scores are aggregated after environment-wise min--max scaling using environment-specific return bounds (Table~\ref{tab:envs}).
  }
  \fontsize{5pt}{5pt}\selectfont
  \centering
  \label{tab:aer-eval-in}
  \begin{tabular}{lrrrrrrrr}
    \toprule
    &\multicolumn{2}{c}{Context-Aware}&\multicolumn{2}{c}{Context-Unaware}&\multicolumn{4}{c}{Context-Inferred} \\
    \cmidrule(l){2-3}\cmidrule(l){4-5}\cmidrule(l){6-9}
    Name & Concat & DA & DR & Amago& DMA & DMA-Pearl & \textbf{DMA*} & \textbf{DMA*-SH} \\
    \midrule

    DI & 75 [72, 77] & \textbf{78 [77, 78]} & 17 [8, 26] & \textbf{66 [56, 75]} & 74 [73, 75] & 73 [72, 74] & 77 [76, 78] & \textbf{78 [78, 79]} \\
    DI-Friction & 71 [55, 79] & 80 [79, 80] & 72 [56, 81] & \textbf{82 [81, 82]} & 62 [46, 75] & 79 [78, 80] & 71 [55, 80] & 81 [80, 81] \\
    DI-Perm & 77 [76, 78] & \textbf{78 [77, 79]} & 54 [43, 63] & \textbf{77 [71, 81]} & 76 [76, 77] & 75 [74, 76] & \textbf{76 [74, 77]} & \textbf{78 [77, 79]} \\
    ODE & \textbf{181 [175, 185]} & \textbf{183 [178, 187]} & 63 [54, 73] & \textbf{178 [176, 180]} & 173 [170, 176] & \textbf{173 [166, 180]} & \textbf{178 [174, 183]} & \textbf{182 [179, 185]} \\
    Cartpole & 930 [912, 945] & 935 [905, 959] & 659 [611, 706] & 668 [589, 752] & 919 [894, 945] & 905 [855, 943] & 941 [919, 961] & \textbf{972 [961, 981]} \\
    Cheetah & 440 [423, 455] & 457 [444, 472] & 319 [295, 347] & \textbf{486 [471, 504]} & \textbf{463 [436, 493]} & 430 [403, 460] & \textbf{461 [441, 483]} & \textbf{487 [466, 507]} \\
    Reacher (E) & \textbf{924 [887, 954]} & \textbf{917 [869, 954]} & 582 [529, 636] & \textbf{942 [930, 953]} & \textbf{905 [864, 941]} & \textbf{913 [878, 946]} & \textbf{925 [897, 951]} & \textbf{931 [916, 947]} \\
    Reacher (E)-Perm & \textbf{916 [881, 945]} & \textbf{900 [857, 933]} & 692 [655, 734] & \textbf{915 [897, 932]} & \textbf{922 [888, 950]} & \textbf{929 [899, 955]} & \textbf{902 [873, 930]} & \textbf{932 [906, 953]} \\
    Reacher (H) & 724 [613, 822] & 757 [705, 818] & 291 [209, 380] & \textbf{916 [901, 928]} & 702 [620, 787] & 736 [648, 804] & 745 [654, 825] & \textbf{848 [772, 905]} \\
    Reacher (H)-Perm & 769 [672, 860] & 805 [762, 853] & 364 [266, 467] & 812 [755, 856] & 715 [599, 801] & 732 [680, 792] & 728 [622, 821] & \textbf{894 [848, 932]} \\
    BallInCup & \textbf{976 [974, 978]} & 974 [972, 975] & 957 [924, 975] & 731 [586, 868] & \textbf{976 [974, 977]} & \textbf{977 [975, 979]} & \textbf{975 [973, 977]} & \textbf{976 [974, 977]} \\
    Walker & \textbf{904 [894, 911]} & 887 [865, 903] & \textbf{909 [901, 916]} & 849 [836, 861] & 866 [813, 901] & \textbf{912 [905, 919]} & 888 [867, 903] & \textbf{918 [909, 926]} \\
    WalkerGym & 3093 [2760, 3400] & 3845 [3671, 4048] & 3204 [2985, 3424] & \textbf{4601 [4273, 4959]} & 3519 [3279, 3764] & 3585 [3303, 3878] & 3377 [2986, 3748] & 3753 [3619, 3876] \\
    HopperGym & \textbf{2844 [2733, 2942]} & \textbf{2857 [2777, 2944]} & 2673 [2610, 2736] & 2795 [2706, 2883] & \textbf{2947 [2845, 3039]} & \textbf{2954 [2886, 3016]} & \textbf{2976 [2912, 3047]} & 2853 [2830, 2875] \\
    \midrule
    All & 0.79 [0.77, 0.81] & 0.82 [0.81, 0.82] & 0.56 [0.54, 0.59] & 0.79 [0.78, 0.81] & 0.78 [0.76, 0.8] & 0.8 [0.79, 0.81] & 0.8 [0.78, 0.81] & \textbf{0.84 [0.83, 0.84]} \\
    Overlap & 0.79 [0.75, 0.82] & \textbf{0.84 [0.82, 0.85]} & 0.79 [0.75, 0.81] & \textbf{0.81 [0.78, 0.84]} & 0.79 [0.75, 0.82] & \textbf{0.83 [0.82, 0.85]} & 0.81 [0.76, 0.84] & \textbf{0.84 [0.84, 0.84]} \\
    Non-overlap & 0.79 [0.77, 0.81] & 0.8 [0.79, 0.81] & 0.44 [0.41, 0.46] & 0.78 [0.77, 0.8] & 0.78 [0.76, 0.8] & 0.78 [0.77, 0.79] & 0.79 [0.77, 0.81] & \textbf{0.84 [0.83, 0.85]} \\

    \bottomrule
  \end{tabular}
\end{table}

\begin{table}[ht]
  \caption{
      AER scores with $95\%$ confidence intervals for all environments. Results for the \textbf{context set $\mathcal{C}_{\text{eval-out}}$}, comparing DMA*, DMA*-SH, and all baselines. Best AER scores are in bold; if multiple methods are highlighted for an environment, their differences are not distinguishable under the probability of improvement with $95\%$ confidence intervals \citep{agarwal2021deep}. \textit{All} aggregates all environments, \textit{Overlap} aggregates those with overlapping contexts, and \textit{Non-overlap} aggregates those with non-overlapping contexts. For the rows \textit{All}, \textit{Overlap}, and \textit{Non-overlap}, scores are aggregated after environment-wise min--max scaling using environment-specific return bounds (Table~\ref{tab:envs}).
  }
  \fontsize{5pt}{5pt}\selectfont
  \centering
  \label{tab:aer-eval-out}
  \begin{tabular}{lrrrrrrrr}
    \toprule
    &\multicolumn{2}{c}{Context-Aware}&\multicolumn{2}{c}{Context-Unaware}&\multicolumn{4}{c}{Context-Inferred} \\
    \cmidrule(l){2-3}\cmidrule(l){4-5}\cmidrule(l){6-9}
    Name & Concat & DA & DR & Amago& DMA & DMA-Pearl & \textbf{DMA*} & \textbf{DMA*-SH} \\
    \midrule

    DI & 65 [62, 68] & \textbf{70 [69, 71]} & 16 [11, 21] & 52 [44, 59] & 42 [39, 46] & 58 [55, 62] & \textbf{70 [69, 71]} & \textbf{71 [70, 73]} \\
    DI-Friction & 54 [39, 64] & 68 [65, 70] & 61 [47, 70] & \textbf{73 [72, 74]} & 45 [33, 53] & 65 [62, 68] & 62 [47, 70] & 69 [67, 70] \\
    DI-Perm & 65 [61, 68] & \textbf{71 [69, 73]} & 43 [34, 50] & 63 [57, 68] & 50 [47, 55] & 61 [58, 65] & 64 [61, 67] & \textbf{69 [65, 72]} \\
    ODE & 126 [118, 134] & \textbf{172 [165, 179]} & 63 [54, 71] & 148 [147, 148] & 152 [148, 157] & 165 [160, 169] & \textbf{168 [162, 175]} & \textbf{173 [170, 176]} \\
    Cartpole & 731 [700, 762] & 808 [754, 862] & 613 [564, 660] & 581 [529, 637] & 862 [842, 880] & 842 [800, 874] & 901 [873, 930] & \textbf{958 [941, 972]} \\
    Cheetah & \textbf{279 [264, 297]} & 241 [226, 255] & 205 [189, 222] & \textbf{268 [255, 283]} & 239 [221, 258] & 208 [195, 223] & 225 [212, 239] & \textbf{252 [232, 272]} \\
    Reacher (E) & \textbf{824 [779, 872]} & 801 [748, 851] & 526 [478, 577] & \textbf{858 [842, 873]} & 804 [772, 836] & 799 [767, 832] & \textbf{833 [809, 859]} & 829 [802, 858] \\
    Reacher (E)-Perm & \textbf{817 [782, 845]} & \textbf{801 [746, 845]} & 593 [558, 634] & 761 [724, 798] & \textbf{800 [758, 841]} & \textbf{818 [787, 844]} & \textbf{802 [768, 833]} & \textbf{843 [823, 867]} \\
    Reacher (H) & \textbf{601 [506, 692]} & 627 [591, 665] & 217 [140, 305] & \textbf{729 [692, 766]} & 561 [483, 635] & 588 [515, 644] & 603 [523, 668] & \textbf{719 [674, 763]} \\
    Reacher (H)-Perm & 600 [531, 665] & 638 [604, 673] & 257 [201, 313] & 598 [554, 640] & 573 [467, 660] & 541 [466, 619] & 581 [488, 664] & \textbf{731 [690, 773]} \\
    BallInCup & \textbf{821 [792, 846]} & 691 [659, 730] & 667 [611, 714] & 493 [433, 558] & \textbf{784 [760, 811]} & 756 [733, 782] & 749 [705, 790] & 719 [684, 755] \\
    Walker & 546 [521, 571] & \textbf{582 [561, 605]} & \textbf{578 [557, 598]} & \textbf{565 [551, 578]} & 557 [531, 575] & \textbf{591 [577, 605]} & 566 [551, 581] & \textbf{585 [572, 598]} \\
    WalkerGym & 1944 [1773, 2158] & 2283 [2163, 2422] & 2058 [1954, 2162] & \textbf{2619 [2488, 2757]} & 2285 [2177, 2397] & 2329 [2162, 2518] & 2170 [1985, 2344] & 2296 [2186, 2389] \\
    HopperGym & 1872 [1795, 1959] & 1918 [1866, 1969] & 1795 [1737, 1853] & 1917 [1845, 1989] & \textbf{2001 [1948, 2045]} & \textbf{2046 [2003, 2086]} & \textbf{1973 [1917, 2026]} & 1982 [1935, 2024] \\
    \midrule
    All & 0.61 [0.6, 0.63] & 0.65 [0.63, 0.66] & 0.43 [0.41, 0.45] & 0.61 [0.6, 0.61] & 0.59 [0.57, 0.61] & 0.63 [0.62, 0.64] & 0.64 [0.63, 0.65] & \textbf{0.68 [0.68, 0.69]} \\
    Overlap & 0.56 [0.52, 0.59] & \textbf{0.58 [0.57, 0.6]} & 0.55 [0.51, 0.57] & 0.56 [0.55, 0.58] & 0.55 [0.53, 0.57] & \textbf{0.6 [0.59, 0.61]} & \textbf{0.58 [0.54, 0.6]} & \textbf{0.59 [0.59, 0.6]} \\
    Non-overlap & 0.64 [0.63, 0.66] & 0.69 [0.67, 0.71] & 0.37 [0.35, 0.39] & 0.63 [0.62, 0.64] & 0.61 [0.59, 0.64] & 0.65 [0.64, 0.66] & 0.68 [0.66, 0.7] & \textbf{0.73 [0.72, 0.74]} \\
    
    \bottomrule
  \end{tabular}
\end{table}

\begin{table}[ht]
\centering
\caption{Relative performance gains (+\%) of DMA*-SH compared to DR, Concat, and DA across regimes and environment types. The \textit{Aggregated} regime considers results aggregated across context sets $\mathcal{C}_{\text{train}}$, $\mathcal{C}_{\text{eval-in}}$, and $\mathcal{C}_{\text{eval-out}}$ (Table~\ref{tab:aer}). The \textit{Train} regime considers results for the context set $\mathcal{C}_{\text{train}}$ (Table~\ref{tab:aer-train}). The \textit{Eval-out} regime considers results for the context set $\mathcal{C}_{\text{eval-out}}$ (Table~\ref{tab:aer-eval-out}). Relative gain is computed as $(\text{DMA*-SH} - \text{baseline}) / \text{baseline} \times 100$. Values are based on AER scores aggregated across environment types (see last three columns \textit{All}, \textit{Overlap}, and \textit{Non-overlap} in the respective Tables~\ref{tab:aer},~\ref{tab:aer-train} and~\ref{tab:aer-eval-out}).}
\label{tab:consolidatedAER}
\small
\begin{tabular}{llrrr}
\toprule
Regime & Type & vs DR & vs Concat & vs DA \\
\midrule
\multirow{3}{*}{Train}
 & All          & 50.0\%  & 6.3\%  & 3.7\% \\
 & Overlap      & 6.3\%   & 6.3\%  & 1.2\% \\
 & Non-overlap  & 90.9\%  & 6.3\%  & 5.0\% \\
\midrule
\multirow{3}{*}{Eval-out}
 & All          & 58.1\%  & 11.5\% & 4.6\% \\
 & Overlap      & 7.3\%   & 5.4\%  & 1.7\% \\
 & Non-overlap  & 97.3\%  & 14.1\% & 5.8\% \\
\midrule
\multirow{3}{*}{Aggregated}
 & All          & 51.9\%  & 8.2\%  & 3.9\% \\
 & Overlap      & 7.0\%   & 7.0\%  & 1.3\% \\
 & Non-overlap  & 95.1\%  & 8.1\%  & 3.9\% \\
\bottomrule
\end{tabular}
\end{table}

\begin{figure}[!htb]
    \centering
    \includegraphics[width=0.68\textwidth]{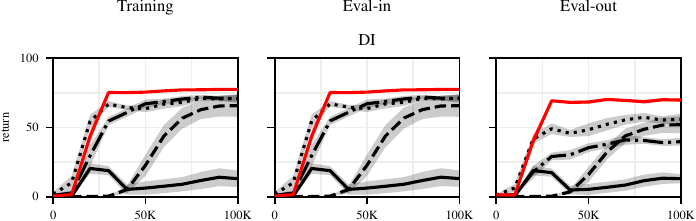}\\\vspace{0.2cm}
    \includegraphics[width=0.68\textwidth]{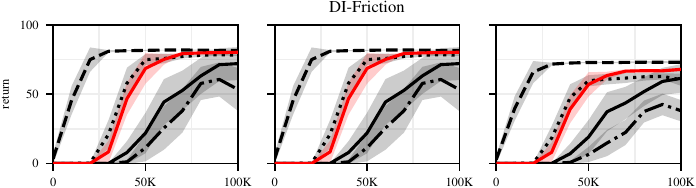}\\\vspace{0.2cm}
    \includegraphics[width=0.68\textwidth]{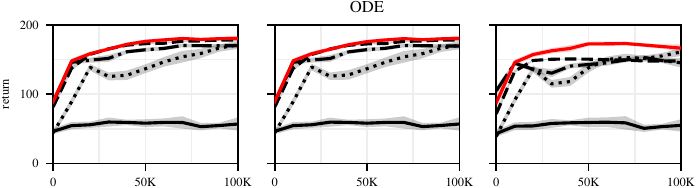}\\\vspace{0.2cm}
    \includegraphics[width=0.68\textwidth]{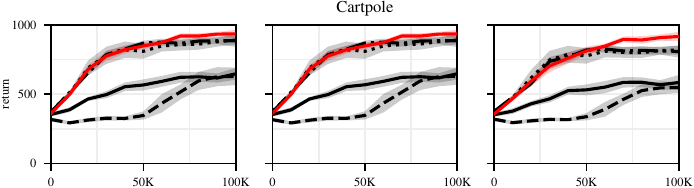}\\\vspace{0.2cm}
    \includegraphics[width=0.68\textwidth]{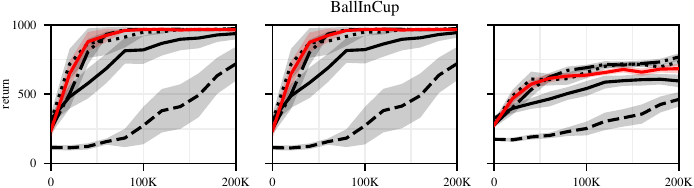}\\\vspace{0.2cm}
    \includegraphics[width=0.68\textwidth]{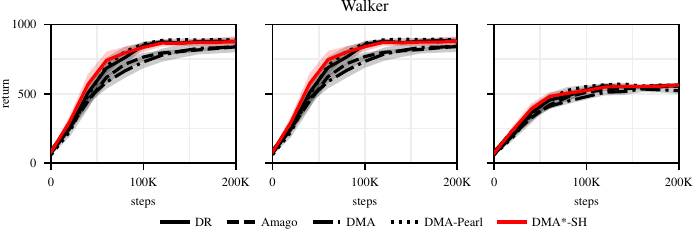}
    \caption{Returns over steps, averaged over the contexts used for training and evaluation, respectively. Comparison to baselines where context information is not explicitly available (Section~\ref{sec:baselines}).}
    \label{fig:returns}
\end{figure}

\clearpage

\subsection{Context-Instance Generalization Analysis} 

While the aggregated IQM provide a convenient high-level summary of performance across environments and context sets, contextual RL introduces an additional axis of variation that requires finer granularity \citep{benjamins2023contextualize,ndir2024inferring,prasanna2024dreaming}. To make generalization behavior explicit, we complement the aggregated metrics with detailed visualizations at the level of individual context instances.

For each environment, we evaluate our proposed DMA*-SH against baselines across the full grid of training and evaluation contexts and visualize the results using context-wise bar plots (Figures~\ref{fig:bar0}--\ref{fig:bar1}) and heatmaps (Figures~\ref{fig:heatmap0}--\ref{fig:heatmap5}).
These plots reveal how performance changes as evaluation contexts drift from the training distribution and highlight failure cases, such as the inability of the context-unaware DR baseline to handle non-overlapping dynamics (e.g., DI), or the challenges faced by the context-aware Concat method when dealing with extreme values of the context $c_0$ in the out-of-distribution regime of the ODE environment (Figure~\ref{fig:heatmap2} and Table~\ref{tab:aer-eval-out}), indicating difficulties with both positive and negative values of $c_0$.
In Cartpole, DMA*-SH achieves impressive consistency across all context instances (Figure~\ref{fig:bar1}). This instance-level view exposes trends that aggregate statistics may obscure and provides a clearer understanding of where and how generalization breaks down.

\begin{figure}[ht]
    \centering
\centering
    \includegraphics[width=0.9\textwidth]{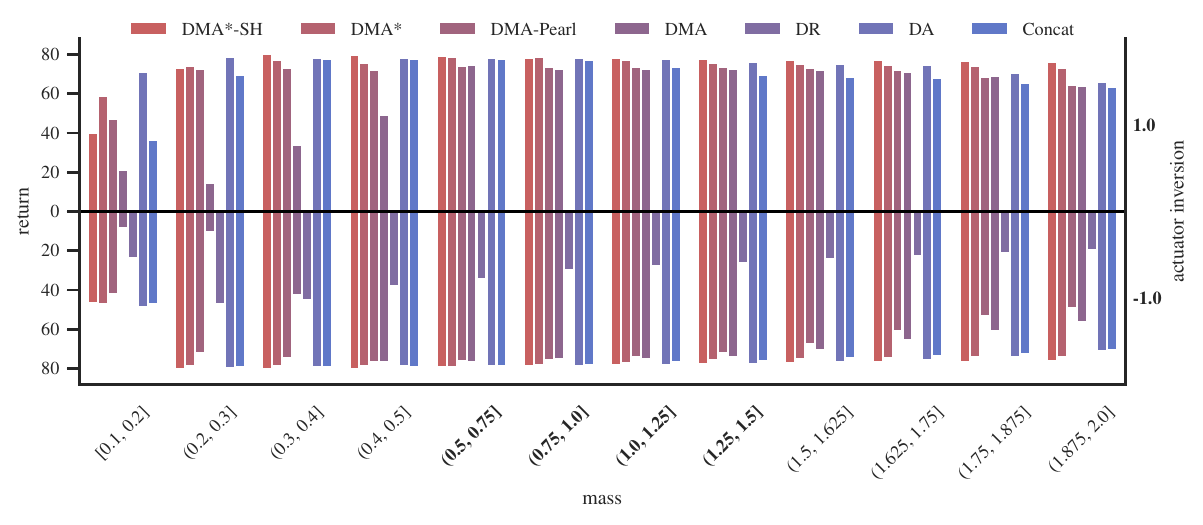}
    \caption{{Bar plot for DI to visualize AER for individual context instances and different methods. Bold labels refer to contexts used during training.}
    }
    \label{fig:bar0}
\end{figure}

\begin{figure}[ht]
    \centering
\centering
    \includegraphics[width=0.9\textwidth]{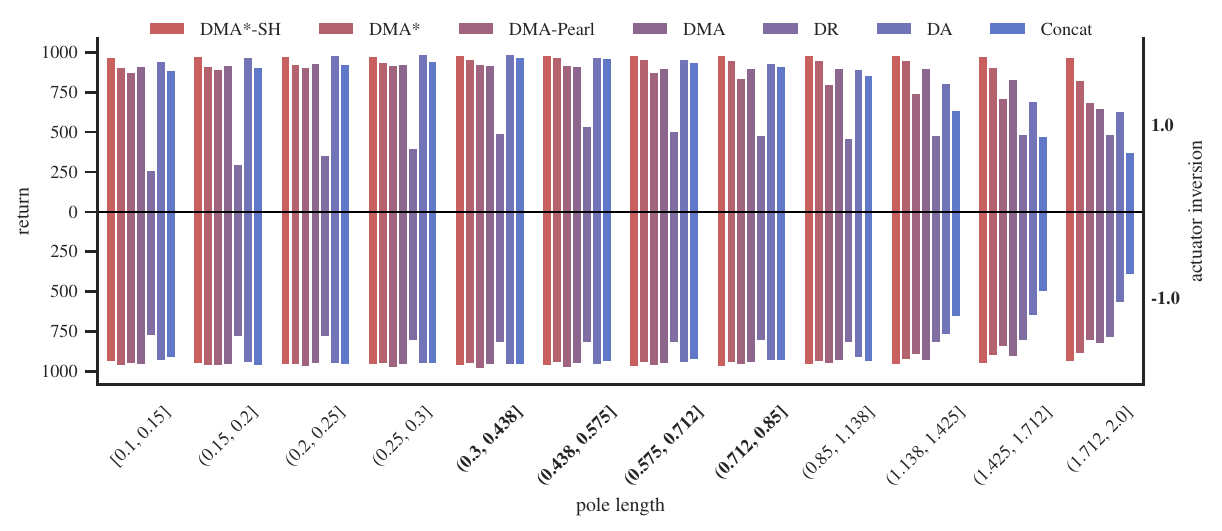}
    \caption{{Bar plot for Cartpole to visualize AER for individual context instances and different methods. Bold labels refer to contexts used during training.}
    }
    \label{fig:bar1}
\end{figure}

\begin{figure}[ht]
    \centering
    \begin{subfigure}[b]{0.5\textwidth}
        \centering
        \includegraphics[width=0.90\textwidth]{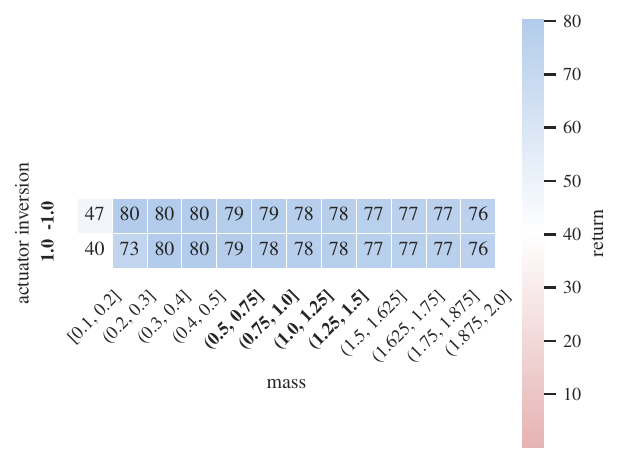}
        \caption{DMA*-SH.}
    \end{subfigure}\hfill
    \begin{subfigure}[b]{0.5\textwidth}
        \centering
        \includegraphics[width=0.90\textwidth]{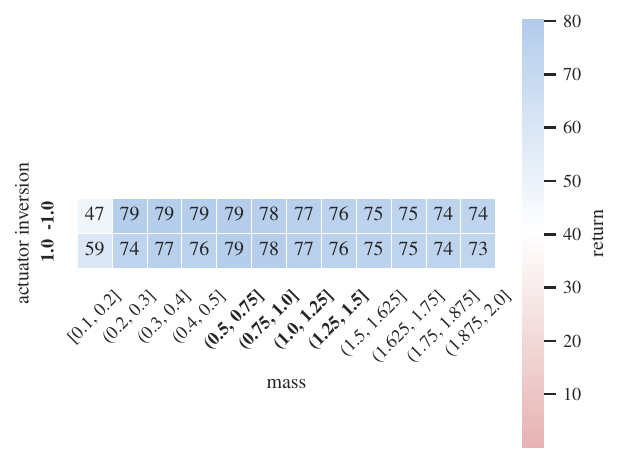}
        \caption{DMA*.}
    \end{subfigure}
    \par\bigskip
    \begin{subfigure}[b]{0.5\textwidth}
        \centering
        \includegraphics[width=0.90\textwidth]{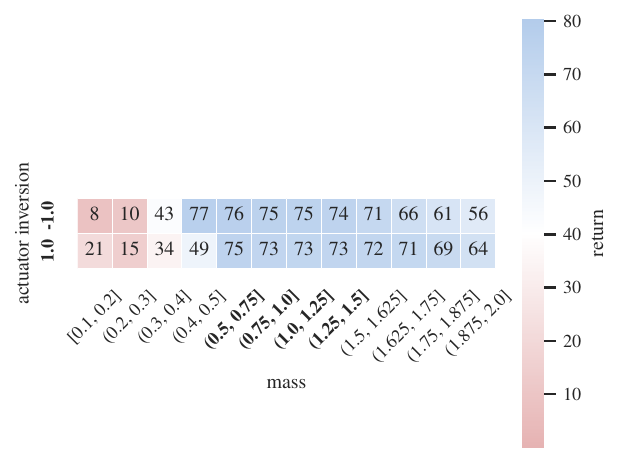}
        \caption{DMA.}
    \end{subfigure}\hfill
    \begin{subfigure}[b]{0.5\textwidth}
        \centering
        \includegraphics[width=0.90\textwidth]{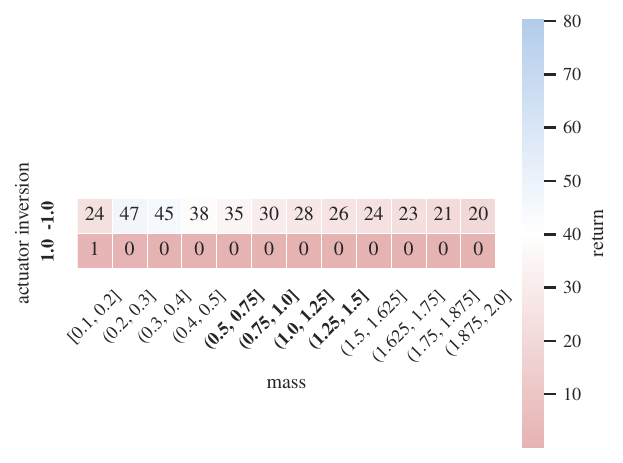}
        \caption{DR.}
    \end{subfigure}
    \par\bigskip
    \begin{subfigure}[b]{0.5\textwidth}
        \centering
        \includegraphics[width=0.90\textwidth]{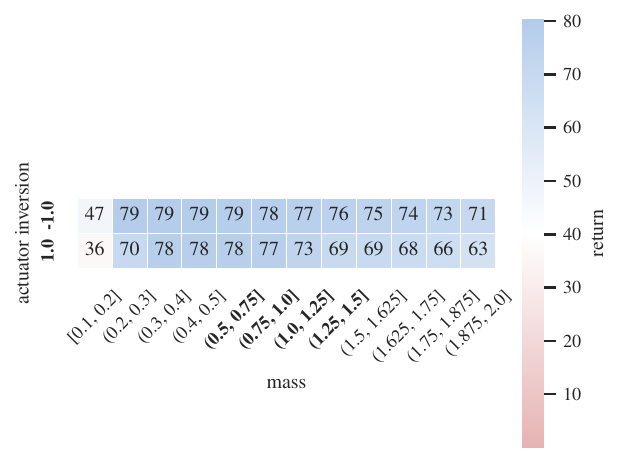}
        \caption{Concat.}
    \end{subfigure}\hfill
    \begin{subfigure}[b]{0.5\textwidth}
        \centering
        \includegraphics[width=0.90\textwidth]{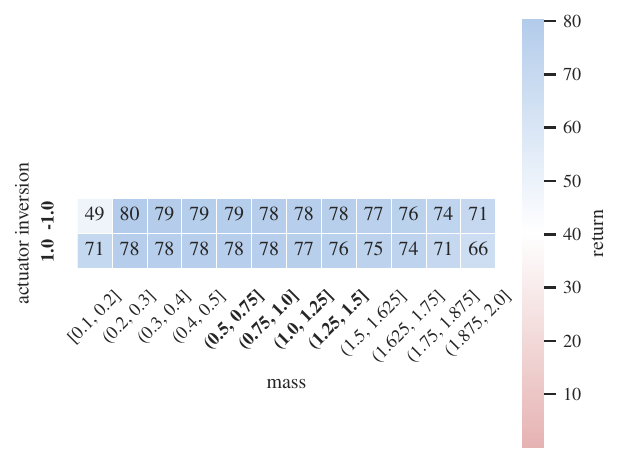}
        \caption{DA.}
    \end{subfigure}
    \caption{{Heatmaps for DI to visualize AER for individual context instances. Bold labels refer to contexts used during training.}}
    \label{fig:heatmap0}
\end{figure}

\begin{figure}[ht]
    \centering
    \begin{subfigure}[b]{0.5\textwidth}
        \centering
        \includegraphics[width=0.90\textwidth]{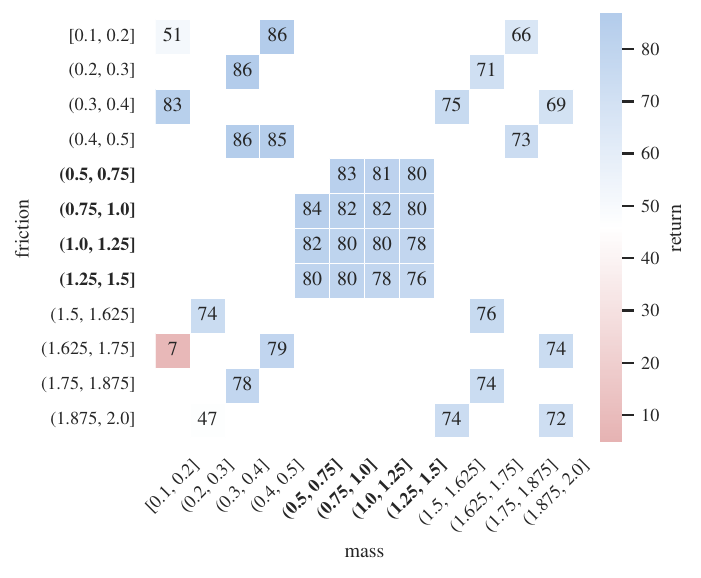}
        \caption{DMA*-SH.}
    \end{subfigure}\hfill
    \begin{subfigure}[b]{0.5\textwidth}
        \centering
        \includegraphics[width=0.90\textwidth]{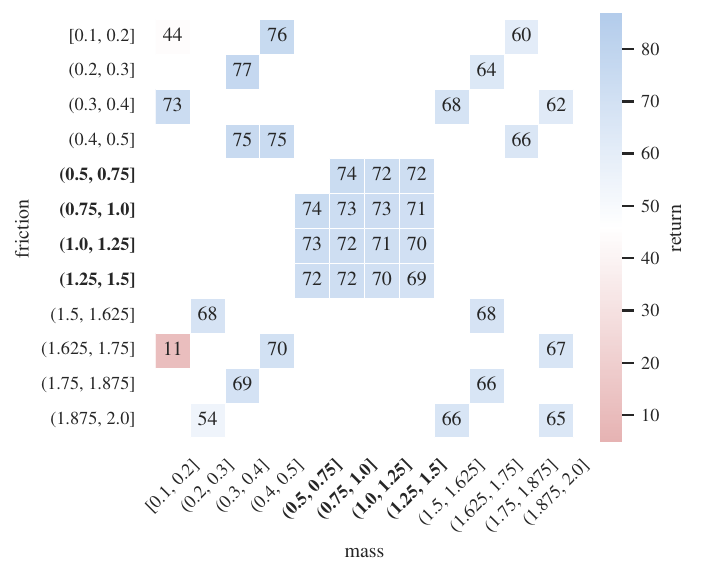}
        \caption{DMA*.}
    \end{subfigure}
    \par\bigskip
    \begin{subfigure}[b]{0.5\textwidth}
        \centering
        \includegraphics[width=0.90\textwidth]{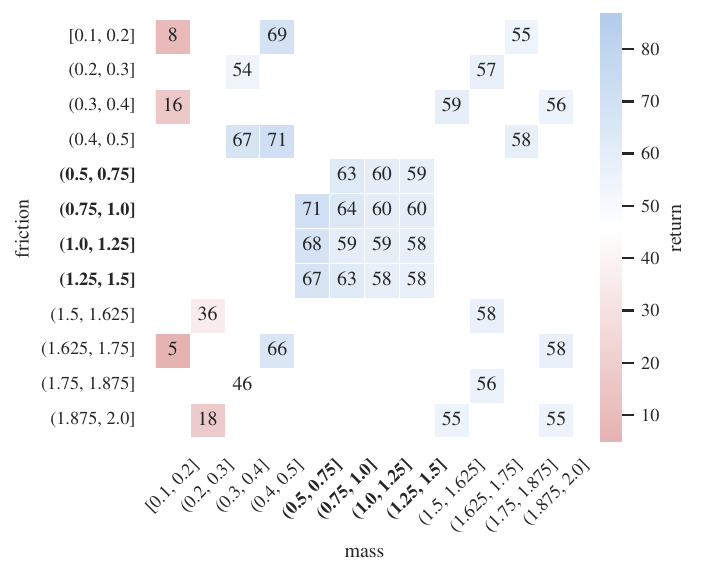}
        \caption{DMA.}
    \end{subfigure}\hfill
    \begin{subfigure}[b]{0.5\textwidth}
        \centering
        \includegraphics[width=0.90\textwidth]{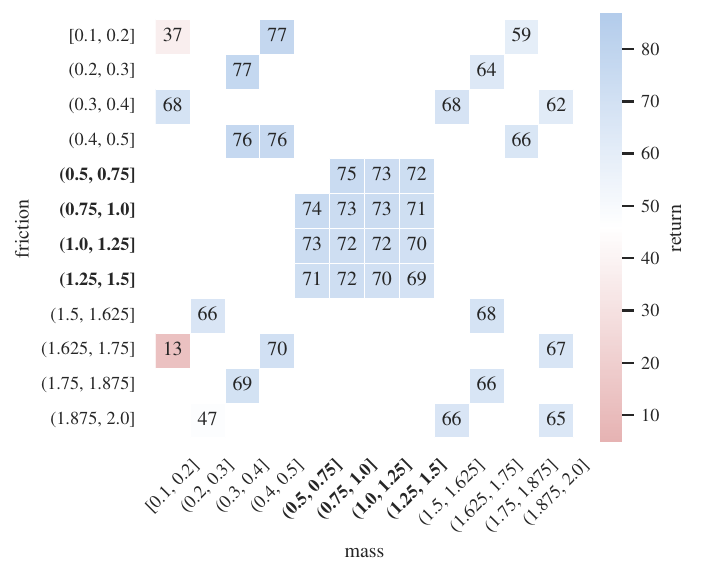}
        \caption{DR.}
    \end{subfigure}
    \par\bigskip
    \begin{subfigure}[b]{0.5\textwidth}
        \centering
        \includegraphics[width=0.90\textwidth]{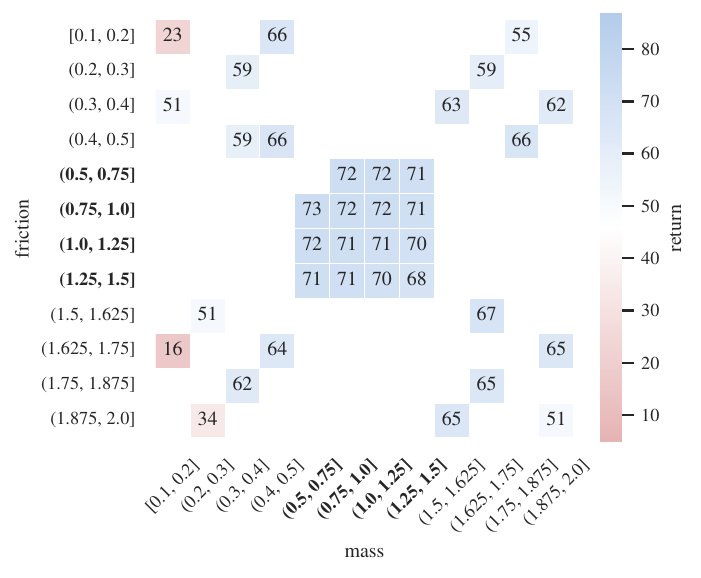}
        \caption{Concat.}
    \end{subfigure}\hfill
    \begin{subfigure}[b]{0.5\textwidth}
        \centering
        \includegraphics[width=0.90\textwidth]{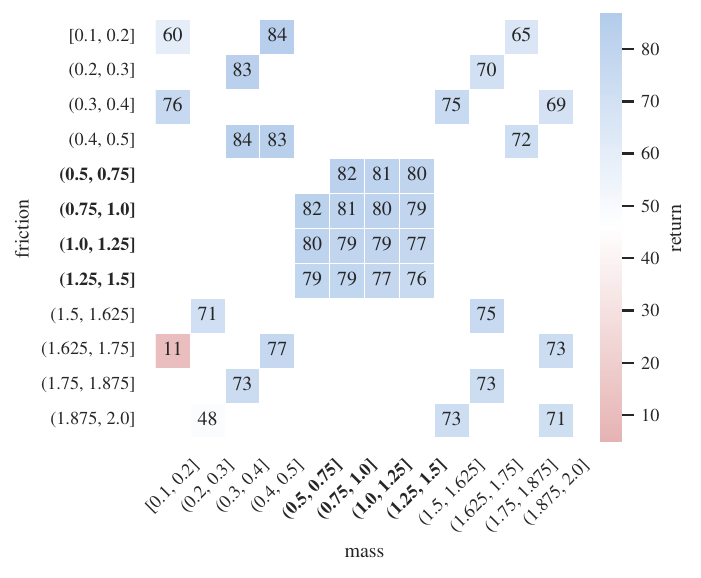}
        \caption{DA.}
    \end{subfigure}
    \caption{{Heatmaps for DI-Friction to visualize AER for individual context instances. Bold labels refer to contexts used during training.}
    }
    \label{fig:heatmap1}
\end{figure}

\begin{figure}[ht]
    \centering
    \begin{subfigure}[b]{0.5\textwidth}
        \centering
        \includegraphics[width=0.90\textwidth]{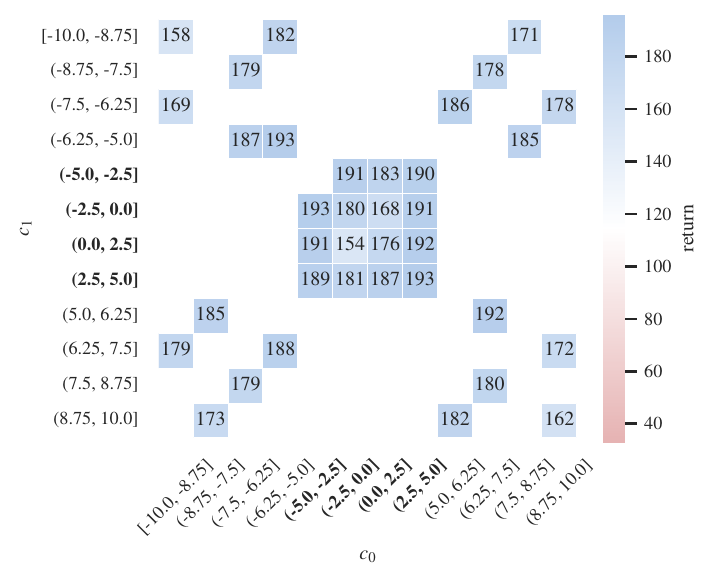}
        \caption{DMA*-SH.}
    \end{subfigure}\hfill
    \begin{subfigure}[b]{0.5\textwidth}
        \centering
        \includegraphics[width=0.90\textwidth]{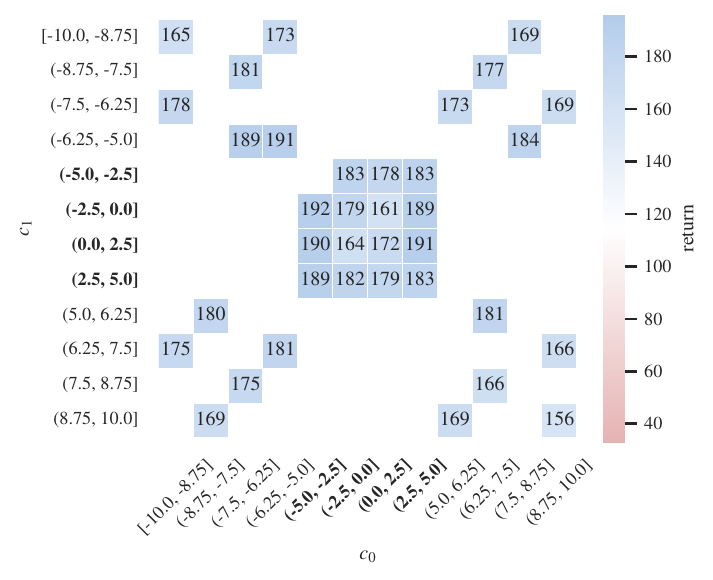}
        \caption{DMA*.}
    \end{subfigure}
    \par\bigskip
    \begin{subfigure}[b]{0.5\textwidth}
        \centering
        \includegraphics[width=0.90\textwidth]{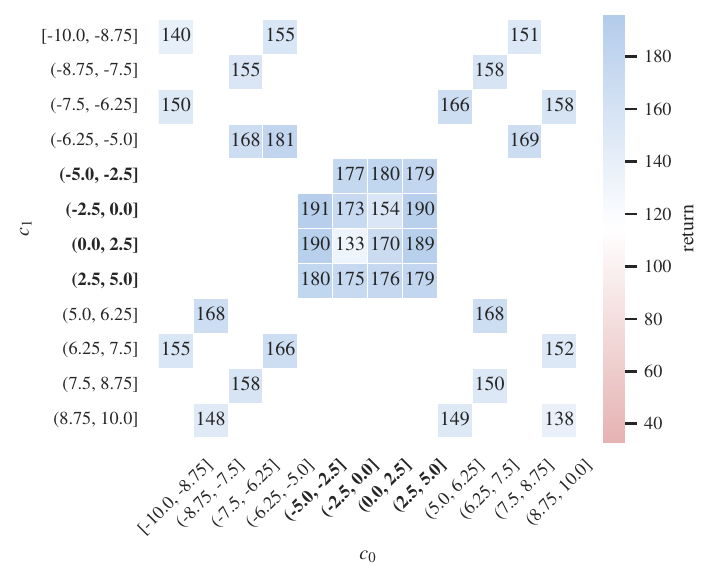}
        \caption{DMA.}
    \end{subfigure}\hfill
    \begin{subfigure}[b]{0.5\textwidth}
        \centering
        \includegraphics[width=0.90\textwidth]{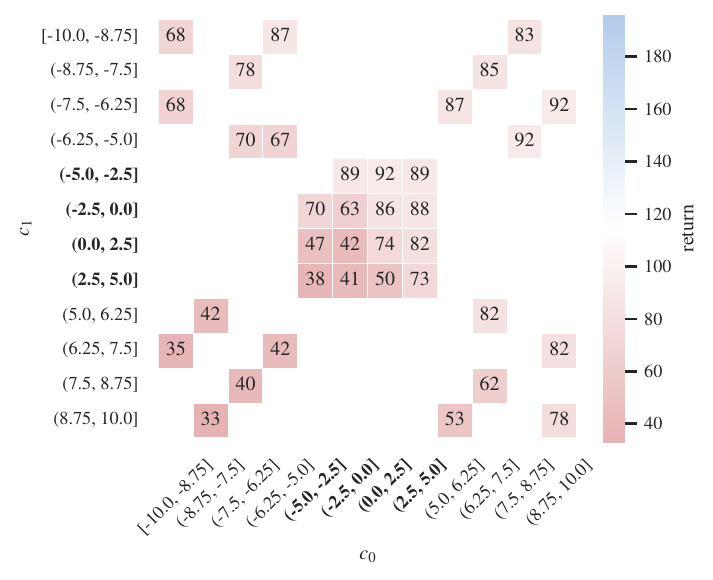}
        \caption{DR.}
    \end{subfigure}
    \par\bigskip
    \begin{subfigure}[b]{0.5\textwidth}
        \centering
        \includegraphics[width=0.90\textwidth]{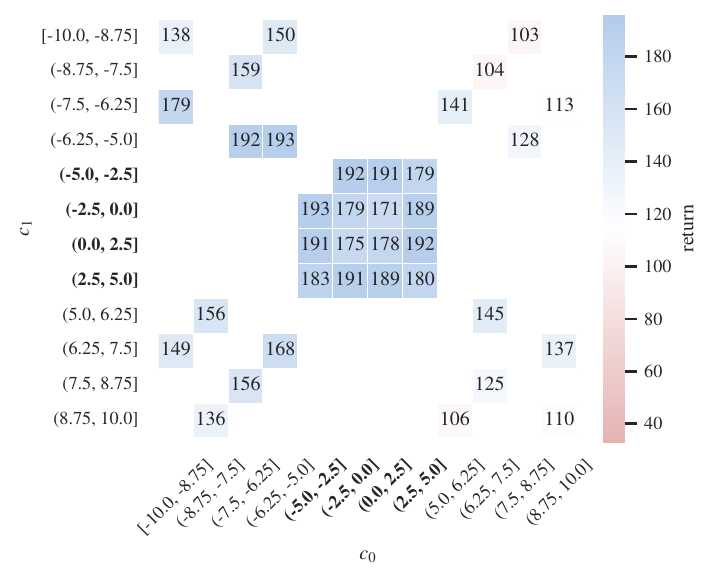}
        \caption{Concat.}
    \end{subfigure}\hfill
    \begin{subfigure}[b]{0.5\textwidth}
        \centering
        \includegraphics[width=0.90\textwidth]{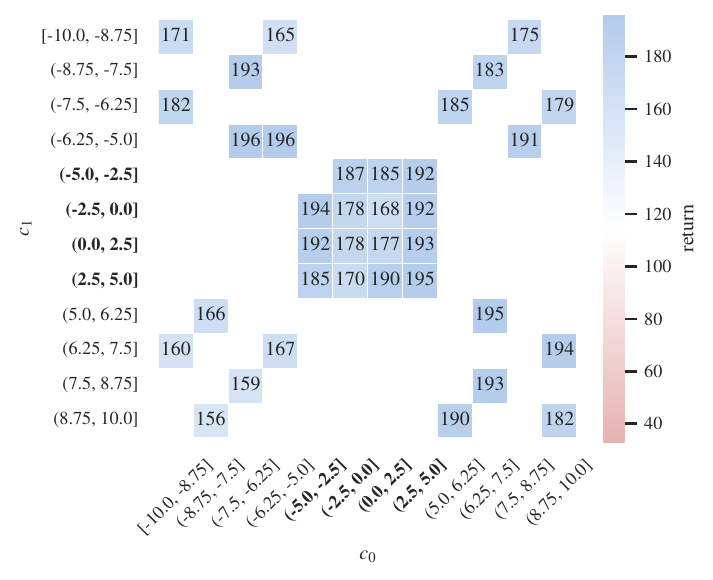}
        \caption{DA.}
    \end{subfigure}
    \caption{{Heatmaps for ODE to visualize AER for individual context instances. Bold labels refer to contexts used during training.}}
    \label{fig:heatmap2}
\end{figure}

\begin{figure}[ht]
    \centering
    \begin{subfigure}[b]{0.5\textwidth}
        \centering
        \includegraphics[width=0.90\textwidth]{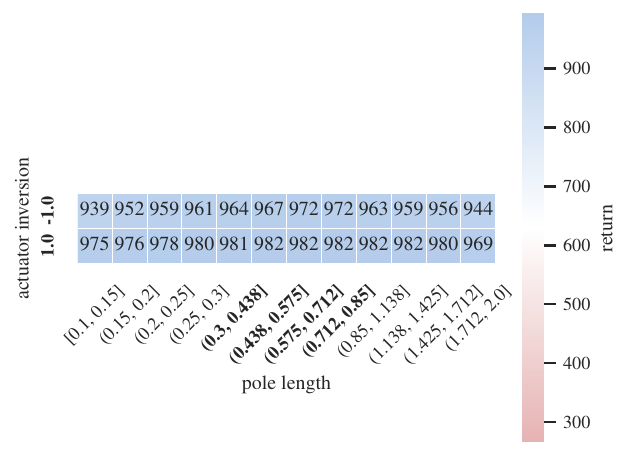}
        \caption{DMA*-SH.}
    \end{subfigure}\hfill
    \begin{subfigure}[b]{0.5\textwidth}
        \centering
        \includegraphics[width=0.90\textwidth]{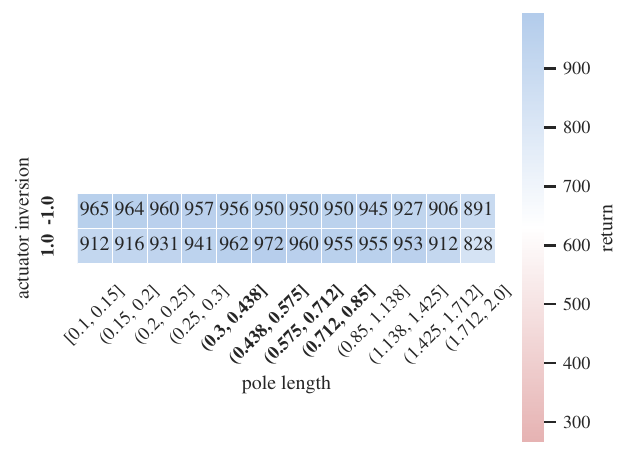}
        \caption{DMA*.}
    \end{subfigure}
    \par\bigskip
    \begin{subfigure}[b]{0.5\textwidth}
        \centering
        \includegraphics[width=0.90\textwidth]{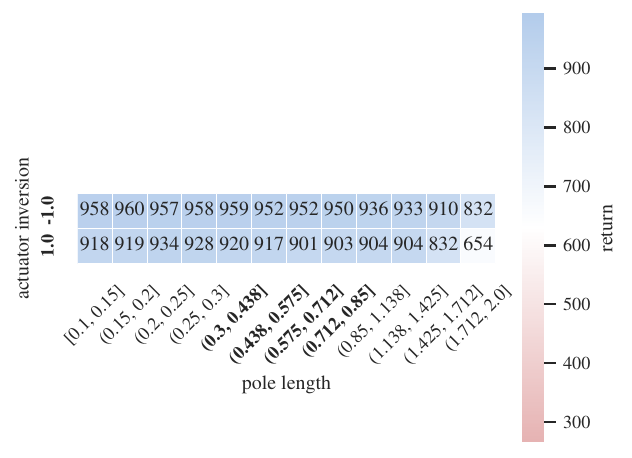}
        \caption{DMA.}
    \end{subfigure}\hfill
    \begin{subfigure}[b]{0.5\textwidth}
        \centering
        \includegraphics[width=0.90\textwidth]{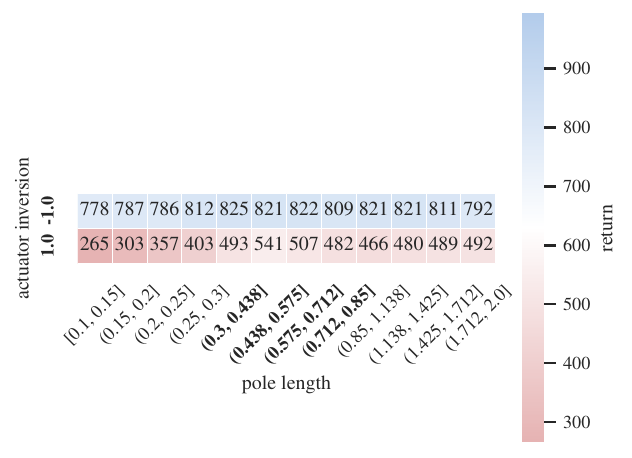}
        \caption{DR.}
    \end{subfigure}
    \par\bigskip
    \begin{subfigure}[b]{0.5\textwidth}
        \centering
        \includegraphics[width=0.90\textwidth]{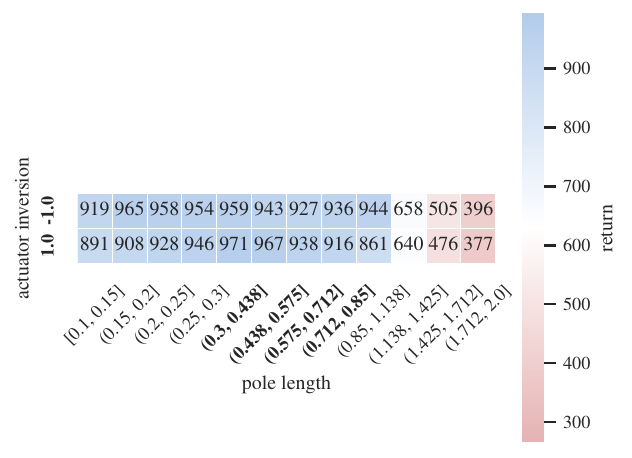}
        \caption{Concat.}
    \end{subfigure}\hfill
    \begin{subfigure}[b]{0.5\textwidth}
        \centering
        \includegraphics[width=0.90\textwidth]{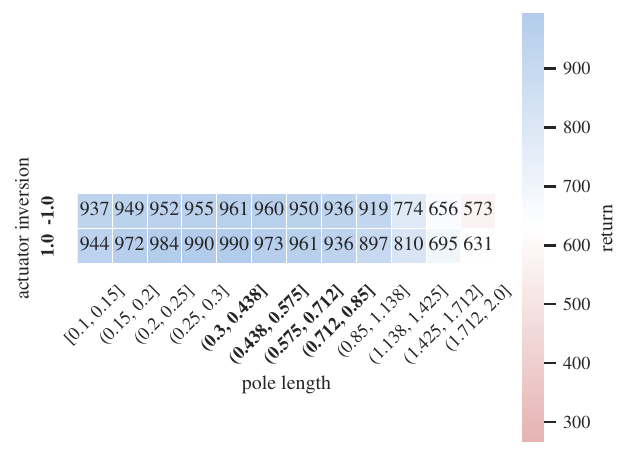}
        \caption{DA.}
    \end{subfigure}
    \caption{{Heatmaps for Cartpole to visualize AER for individual context instances. Bold labels refer to contexts used during training.}}
    \label{fig:heatmap3}
\end{figure}

\begin{figure}[ht]
    \centering
    \begin{subfigure}[b]{0.5\textwidth}
        \centering
        \includegraphics[width=0.90\textwidth]{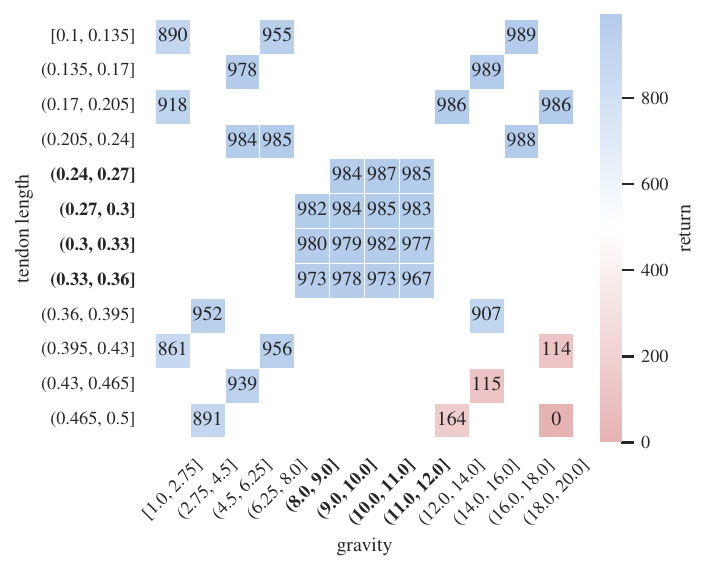}
        \caption{DMA*-SH.}
    \end{subfigure}\hfill
    \begin{subfigure}[b]{0.5\textwidth}
        \centering
        \includegraphics[width=0.90\textwidth]{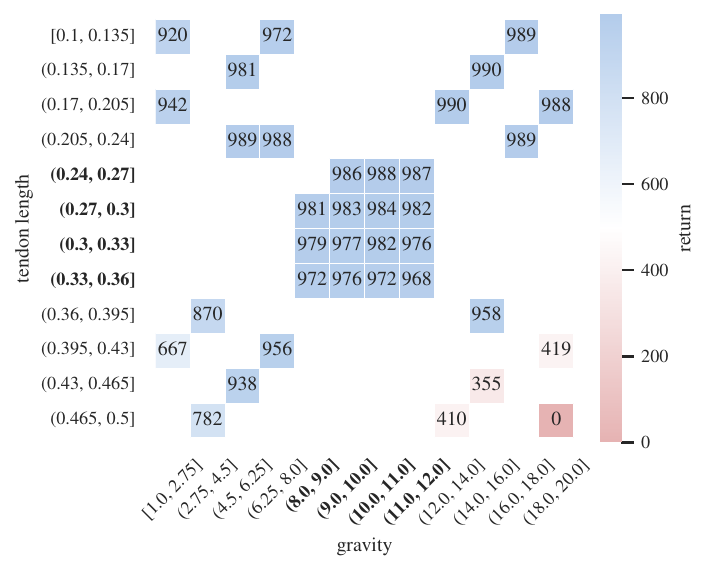}
        \caption{DMA*.}
    \end{subfigure}
    \par\bigskip
    \begin{subfigure}[b]{0.5\textwidth}
        \centering
        \includegraphics[width=0.90\textwidth]{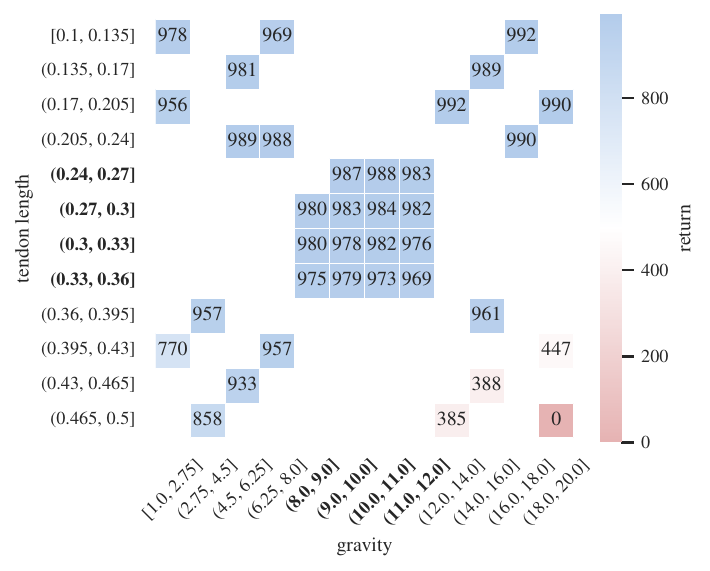}
        \caption{DMA.}
    \end{subfigure}\hfill
    \begin{subfigure}[b]{0.5\textwidth}
        \centering
        \includegraphics[width=0.90\textwidth]{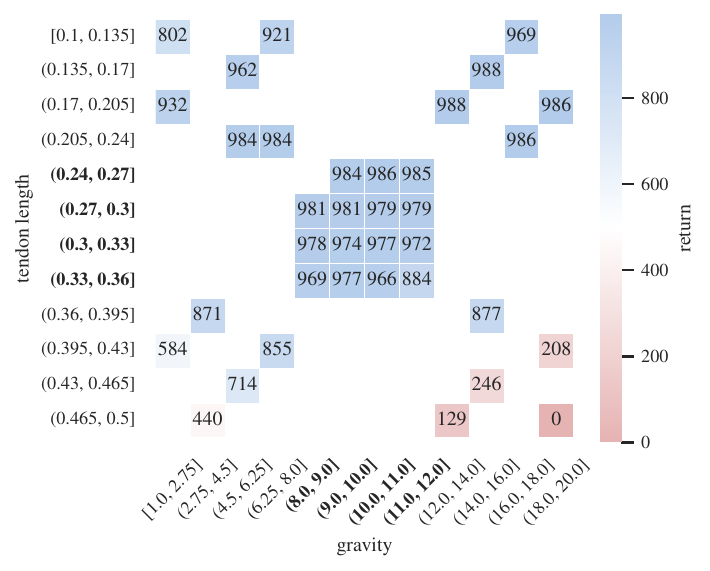}
        \caption{DR.}
    \end{subfigure}
    \par\bigskip
    \begin{subfigure}[b]{0.5\textwidth}
        \centering
        \includegraphics[width=0.90\textwidth]{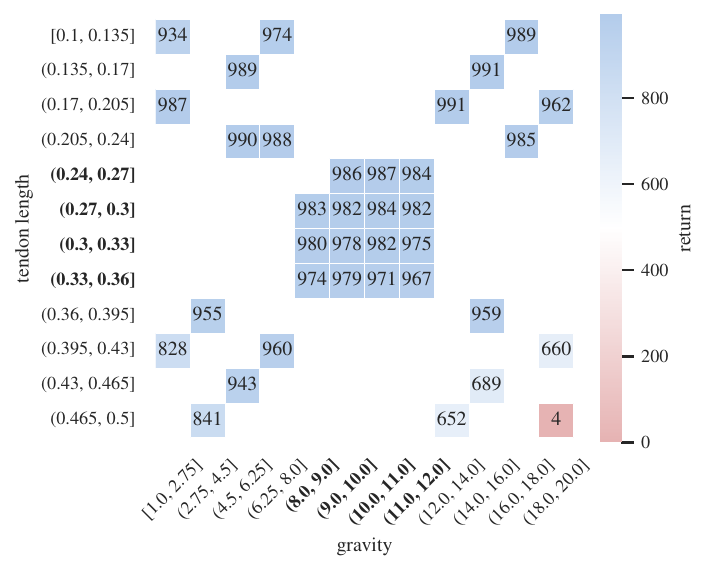}
        \caption{Concat.}
    \end{subfigure}\hfill
    \begin{subfigure}[b]{0.5\textwidth}
        \centering
        \includegraphics[width=0.90\textwidth]{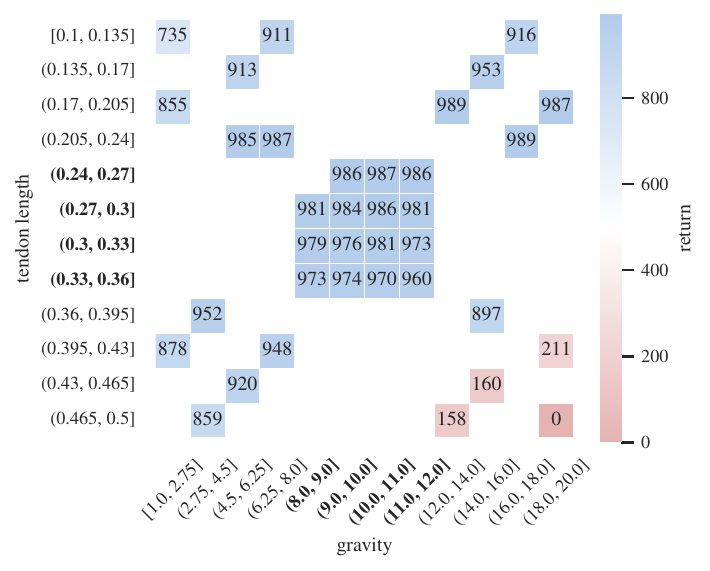}
        \caption{DA.}
    \end{subfigure}
    \caption{{Heatmaps for BallInCup to visualize AER for individual context instances. Bold labels refer to contexts used during training.}}
    \label{fig:heatmap4}
\end{figure}

\begin{figure}[ht]
    \centering
    \begin{subfigure}[b]{0.5\textwidth}
        \centering
        \includegraphics[width=0.90\textwidth]{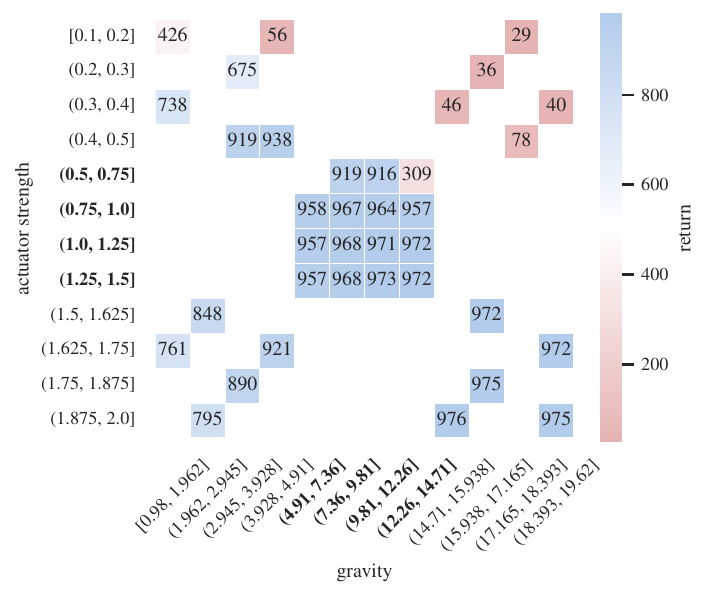}
        \caption{DMA*-SH.}
    \end{subfigure}\hfill
    \begin{subfigure}[b]{0.5\textwidth}
        \centering
        \includegraphics[width=0.90\textwidth]{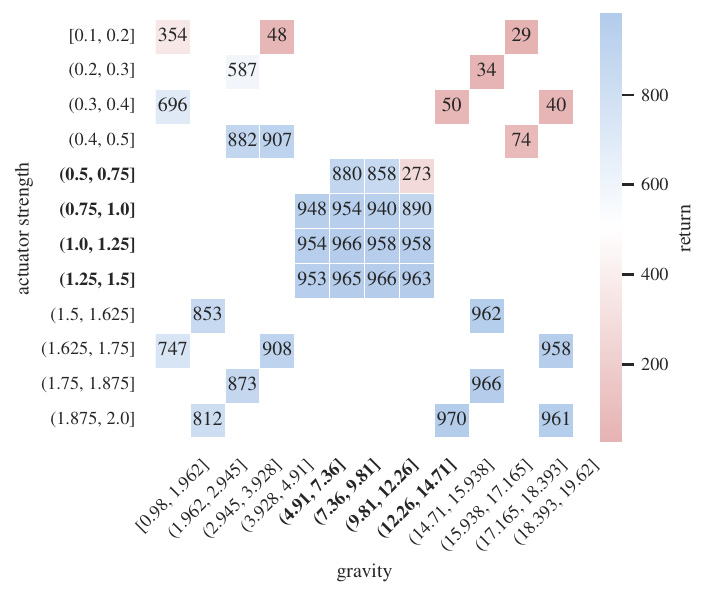}
        \caption{DMA*.}
    \end{subfigure}
    \par\bigskip
    \begin{subfigure}[b]{0.5\textwidth}
        \centering
        \includegraphics[width=0.90\textwidth]{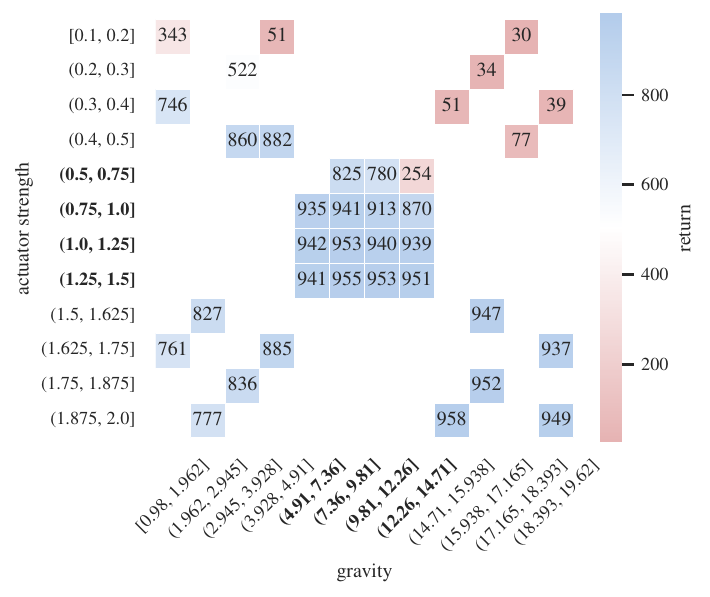}
        \caption{DMA.}
    \end{subfigure}\hfill
    \begin{subfigure}[b]{0.5\textwidth}
        \centering
        \includegraphics[width=0.90\textwidth]{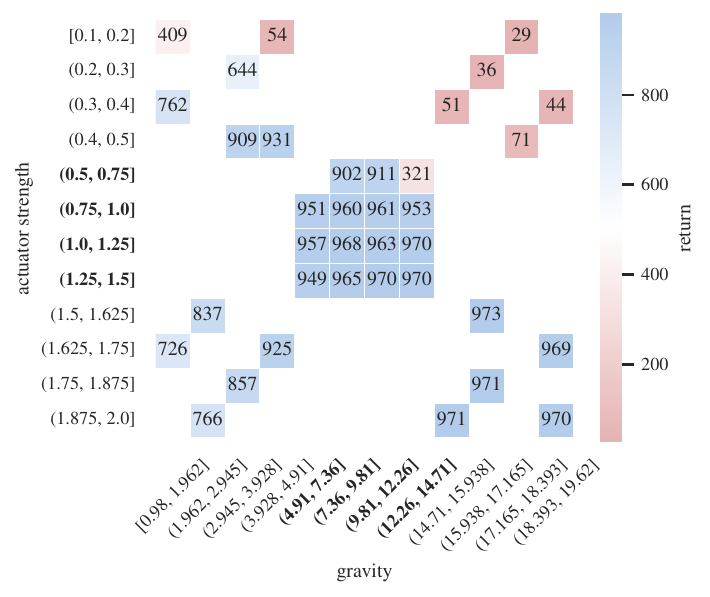}
        \caption{DR.}
    \end{subfigure}
    \par\bigskip
    \begin{subfigure}[b]{0.5\textwidth}
        \centering
        \includegraphics[width=0.90\textwidth]{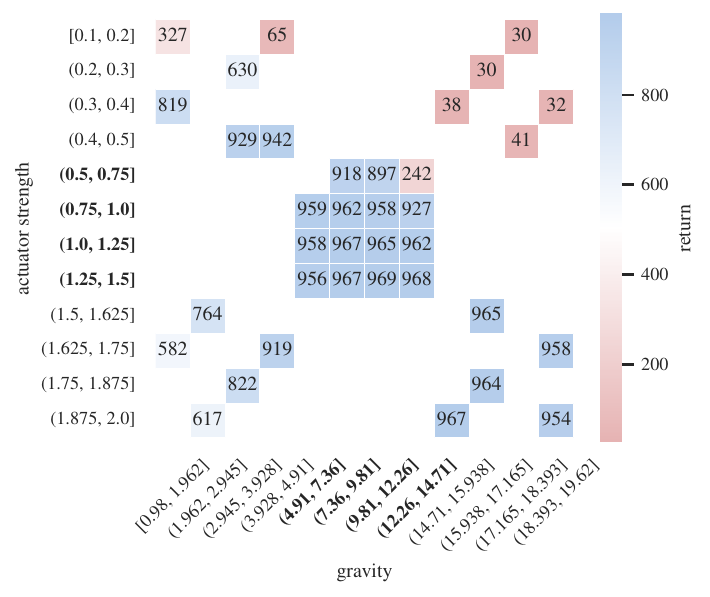}
        \caption{Concat.}
    \end{subfigure}\hfill
    \begin{subfigure}[b]{0.5\textwidth}
        \centering
        \includegraphics[width=0.90\textwidth]{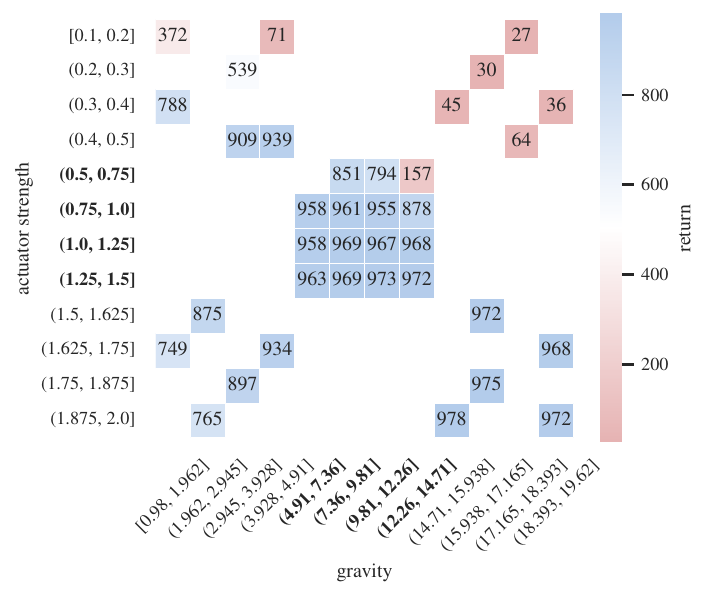}
        \caption{DA.}
    \end{subfigure}
    \caption{{Heatmaps for Walker to visualize AER for individual context instances. Bold labels refer to contexts used during training.}}
    \label{fig:heatmap5}
\end{figure}

\clearpage
\subsection{Scalability in the Explicit Context Dimension} 

The standard ODE task (ODE-2) is governed by a differential equation parameterized by two context variables $c_{1}$ and $c_{2}$ (Table~\ref{tab:envs}). To evaluate scalability with respect to context dimensionality, we extend this to ODE-$k$ using higher-order polynomials (Appendix~\ref{app:env_details}), increasing the number of context parameters from 1 to 6. Figure~\ref{fig:iqm-scale} shows performance aggregated across these six variants. The results demonstrate that DMA*-SH scales favorably to higher context dimensions, whereas the context-aware Concat baseline struggles in both Eval-in and Eval-out regimes.

\begin{figure}[ht]
    \centering
\centering
    \includegraphics[width=0.5\textwidth]{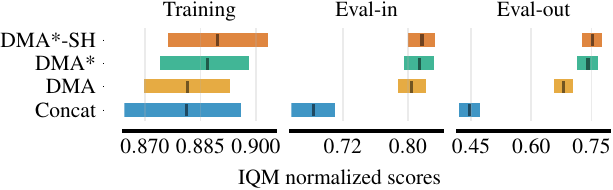}
    \caption{Interquartile mean (IQM) aggregated over six ODE variants, ODE-1, ODE-2, ... , ODE-6 to test for scalability with respect to context dimensionality.
    }
    \label{fig:iqm-scale}
\end{figure}

\end{document}